\theoremstyle{plain}
\newtheorem{theorem}{Theorem}[section]
\newtheorem{proposition}[theorem]{Proposition}
\newtheorem{lemma}[theorem]{Lemma}
\newtheorem{corollary}[theorem]{Corollary}
\theoremstyle{definition}
\newtheorem{definition}[theorem]{Definition}
\newtheorem{example}[theorem]{Example}
\theoremstyle{remark}
\providecommand{\customgenericname}{}
\newcommand{\newcustomtheorem}[2]{%
  \newenvironment{#1}[1]
  {%
   \renewcommand\customgenericname{#2}%
   \renewcommand\theinnercustomgeneric{##1}%
   \innercustomgeneric
  }
  {\endinnercustomgeneric}
}
\DeclareMathOperator*{\argmax}{arg\,max}
\newcommand{\trace}{\ensuremath{tr}}
\long\def\comment#1{}
\renewcommand\vec[1]{\ensuremath\boldsymbol{#1}}
\renewcommand{\trace}{\ensuremath{\operatorname{tr}}}
\newcommand{\Ecal}{\ensuremath{\mathcal{E}}}
\newcommand{\Tcal}{\ensuremath{\mathcal{T}}}
\newcommand{\Ocal}{\ensuremath{\mathcal{O}}}
\newcommand{\Pcal}{\ensuremath{\mathcal{P}}}
\newcommand{\Rspace}{\ensuremath{\mathbb{R}}}
\newcommand{\Pbb}{\ensuremath{\mathbb{P}}}
\newcommand{\sumfun}{\ensuremath{h_{0}}}
\newcommand{\overlapset}{\ensuremath{\mathcal{A}}}
\newcommand{\mydefn}{\ensuremath{: =}}
\newcommand{\gaussden}{\ensuremath{f}}
\newcommand{\extraset}{\ensuremath{\mathcal{B}}}
\newcommand{\extraind}{\ensuremath{\mathcal{I}}}
\newcommand{\sumind}{\ensuremath{\mathcal{S}}}
\newcommand{\smallO}{\ensuremath{o}}
\newcommand{\diff}{\ensuremath{\mathcal{T}}}
\newcommand{\brackets}[1]{\left[ #1 \right]}
\newcommand{\parenth}[1]{\left( #1 \right)}
\newcommand{\abss}[1]{\left| #1 \right |}
\newcommand{\vecnorm}[2]{\left\| #1\right\|_{#2}}
\newcommand{\enorm}[1]{\vecnorm{#1}{2}} 
\newcommand{\norm}[1]{\left\|#1\right\|}
\title{Beyond Black Box Densities: Parameter Learning for the Deviated Components}
\author{%
  Dat Do$^{\star}$\\
  Department of Statistics\\
  University of Michigan at Ann Arbor\\
  Ann Arbor, MI 48109 \\
  \texttt{dodat@umich.edu} \\
   \And
   Nhat Ho$^{\star}$ \\
   Department of Statistics and Data Sciences \\
   University of Texas at Austin \\
   Austin, TX 78712 \\
   \texttt{minhnhat@utexas.edu} \\
   \AND
   XuanLong Nguyen \\
   Department of Statistics\\
  University of Michigan at Ann Arbor\\
  Ann Arbor, MI 48109 \\
  \texttt{xuanlong@umich.edu} \\
}
\begin{document}

\maketitle

\begin{abstract}
As we collect additional samples from a data population for which a known density function estimate may have been previously obtained by a black box method, the increased complexity of the data set may result in the true density being deviated from the known estimate by a mixture distribution. To learn about this phenomenon, we consider the \emph{deviating mixture model} $(1-\lambda^{*})h_0 + \lambda^{*} (\sum_{i = 1}^{k} p_{i}^{*} f(x|\theta_{i}^{*}))$, where $h_0$ is a known density function, while the deviated proportion $\lambda^{*}$ and latent mixing measure $G_{*} = \sum_{i = 1}^{k} p_{i}^{*} \delta_{\theta_i^{*}}$ associated with the mixture distribution are unknown. Using a novel notion of distinguishability between the known density $h_{0}$ and the deviated mixture distribution, we establish rates of convergence for the maximum likelihood estimates of $\lambda^{*}$ and $G^{*}$ under Wasserstein metrics. Simulation studies are carried out to illustrate the theory. 
\end{abstract}
\let\thefootnote\relax\footnotetext{$^{\star}$: Dat Do and Nhat Ho contributed equally to this work.}

\vspace{-0.5 em}
\section{Introduction}
\vspace{-0.5 em}
Most data-driven learning processes typically consist of an iteration of steps that involve model training and fine-tuning, with more data in-take leading to further model re-training and refinement. As more samples become available and exhibit more complex patterns, the initial model may be obsolete, risks being discarded, or absorbed into a richer class of models that adapt better to the increased complexity. It takes considerable resources to train complex models on a rich data population. Moreover, many successful models in modern real-world applications have become so complex that make them hard to properly evaluate and interpret; aside from the predictive performance they may as well be considered as black boxes. Nonetheless, as data populations evolve and so must the learning models, several desiderata remain worthy: the ability to adapt to new complexity while retaining aspects of old "wise" model, and the ability to interpret the changes. 

In this paper we will investigate a class of complex models for density estimation that are receptive to \emph{adaptation}, \emph{reuse} and \emph{interpretablity}: we posit that there is an existing distribution $h_0$ which may have been obtained a priori by some means for the data population of interest, e.g., via kernel density estimation (KDE)~\cite{parzen1962estimation} or mixture models~\cite{mclachlan200001} or some modern black box methods, such as generative adversarial networks (GANs)~\cite{goodfellow2014generative, arjovsky2017wasserstein} or normalizing flows~\cite{dinh2016density}. Nonetheless, as more samples become available and/or as the data population changes, it is possible that the true density may deviate from $h_0$. While $h_0$ is potentially difficult to explicate, it is the deviation from the known $h_0$ that we wish to learn and interpret. We will use a mixture distribution to represent this deviation, leading to what we call a \emph{deviating mixture model} for the underlying data population:
\begin{align} 
p_{\lambda^{*} G_{*}}(x) := (1-\lambda^{*}) \sumfun(x) +\lambda^{*} F(x, G_{*}), \label{eq:true_model}
\end{align} 
for $x \in \mathbb{R}^d$, where $F(x, G_{*}) : = \sum_{i = 1}^{k_{*}} p_{i}^{*} f(x| \theta_{i}^{*}))$ represents a mixture distribution for the density components deviating from $h_0$. Such deviating components are from a known family of density function $f$. The unknown parameters for this model are the mixing proportion $\lambda^* \in [0,1]$, and the mixing measure $G_{*} = \sum_{i = 1}^{k_{*}} p_{i}^{*} \delta_{\theta_{i}^{*}} $, where $k_{*} \geq 1$ number of \emph{deviated} components. The choice of mixture distribution $F(x,G_*)$ allows us to express complex deviation from $h_0$, yet the overall model remains amenable to the interpretation of its parameters: $\lambda_*$ represents the amount of deviation from the existing candidate $h_0$, while the mixing measure $G_*$ represents heterogeneous patterns of the deviation. {  Because $h_0$ might be complex and trained with great computational resource to estimate the density of prior data population, it is reasonable to assume $h_0$ be known in the model~\eqref{eq:true_model}.}
The primary contribution of this paper is a rigorous investigation into the rather challenging questions of identifiability and parameter learning rates that arise from a standard maximum likelihood estimation procedure. 

\vspace{-0.3 em}
\textbf{Relations to existing works.} 
This modeling framework owes its roots to several significant bodies of work in both statistics and machine learning literature. In classical statistics, a dominant approach to address the increased complexity of data populations is via hypothesis testing: one can test an alternative (possibly composite) hypothesis represented by a class of distributions against the null hypothesis represented by $h_0$. Due to the constraint for obtaining simple and theoretically valid test statistics in order to accept or reject the null hypothesis, the testing approaches were mostly restricted to simple choices of distribution for the null and alternative hypotheses \cite{Chen-2003, Donoho-2004, Chen-2009, Tony_Cai_2011, Chen-2012}. 
More similar to~\eqref{eq:true_model} is the class of \emph{contaminated mixture models} for density estimation: in this framework, the data are assumed to be sampled from a mixture of $P_{0}$ and $Q$ where either $P_{0}$ or $Q$ can be an unknown distribution that needs to be estimated. While this approach offers more flexibility in terms of modeling, it does not always guarantee the identifiability of the mixing weight or mixture components $P_{0}, Q$~\cite{Scott_2015, patra2016estimation, Scott-2019}. Without identifiability, it is virtually impossible to interpret the model parameters for the data domains. To avoid the identifiability issue, several researchers added the semi-parametric or parametric structures on $P_{0}$ and $Q$, such as $P_{0}$ and $Q$ are mixture distributions~\cite{Bordes_2006, gadat2020parameter}. However, to the best of our knowledge, the convergence theory of these models remains poorly understood, except for some simple settings (see also \cite{butucea2014semiparametric,cai2007estimation}).
The main distinction between our modeling framework of deviating mixture models and the existing research on contaminated mixtures lies in our assumption that one of the mixture components, namely $h_0$ is known, allowing us to focus on the inference of the deviation from $h_0$, for which a considerable learning theory for the parameters of interest can be established and will be presented in this paper. Finally, estimating parameters of mixture distributions is an essential problem in mixture models. The convergence properties have been studied using identifiability notions and Wasserstein distances \cite{Nguyen-13,Ho-Nguyen-Ann-17,heinrich2018strong}. Our technical approach requires a generalization of the identifiability notion to take account of structural property of the existing component $h_0$, which helps to shed light on a considerably more complex convergence behavior of the deviated components.

\vspace{-0.3 em}
\textbf{Contributions.} The primary contribution of this paper is a rich theory of \emph{identifiability} and rates of convergence for \emph{parameters} and density estimation that arise in the \emph{deviating mixture model}~\eqref{eq:true_model}, under various settings of the existing component $h_0$, and that of the deviating components (via $f$ and $G_*$). {  Because the convergence of density estimation in Hellinger distance under the MLE procedure is well studied in \cite{geer2000empirical}, the bulk of our technical innovation lies in establishing a collection of \emph{inverse bounds} which relate the Hellinger distance of densities in model~\eqref{eq:true_model} in terms of that of their parameters. To do that, we introduce a novel notion of \emph{distinguishability} between $h_0$ and family of density $f$. The inverse bounds will be characterized under such distinguishability conditions (or the lack thereof). Our proof technique allows us to characterize different convergence rates of parameters in the deviating mixture model under distinguishable settings. It also gives rise to several new types of inverse bounds in partially distinguishable settings, where we may not have identifiability in our model. To the best of our knowledge, this is the first work in which such bounds are obtained in mixture modeling literature. Moreover, we will provide many examples to demonstrate the broad applicability of our theory, including cases where the existing component $h_0$ is obtained by a black box method (e.g., deep learning model) and a more traditional method (e.g., via KDE's or mixture models). By doing so, we are able to push the boundary of identifiability and learning theory of mixture models toward a larger class of modern machine learning models.}

\vspace{-0.3 em}
\textbf{Organization.} The remainder part of this paper is structured 
 as follows. In Section 2, we review the MLE method and the identifiability conditions, where the notion of distinguishability is presented. In Section 3, the main results of inverse bounds and convergence rates for parameters estimation of model~\eqref{eq:true_model} are shown. In Section 4, multiple simulation experiments are carried out to support the theory. Finally, Section 5 is used to discuss and conclude. Proofs of all the results in the main text are deferred to the Supplementary Material.

\vspace{-0.3 em}
\textbf{Notation.} We denote by $\Ecal_k(\Theta)=\{\sum_{i=1}^{k}p_i f(x|\theta_i) : \sum_{i=1}^{k}p_i = 1, p_i > 0, \theta_i \in \Theta \mbox{ } \forall 1\leq i \leq k \}$ the family of mixtures with exactly $k$ components and $\Ocal_K(\Theta)=\{\sum_{i=1}^{K}p_i f(x|\theta_i) : \sum_{i=1}^{K}p_i = 1, p_i \geq 0, \theta_i \in \Theta \mbox{ } \forall 1\leq i \leq K \}$ the family of mixtures with no more than $K$ components. $\Ecal_{k, c_0}(\Theta)= \{\sum_{i=1}^{k}p_i f(x|\theta_i) : \sum_{i=1}^{k}p_i = 1, p_i \geq c_0, \theta_i \in \Theta \mbox{ } \forall 1\leq i \leq k, k\leq K\}$ is the family of mixtures with exactly $K$ components and mixing proportions being bounded below by $c_0$, and $\Ocal_{K, c_0}(\Theta)= \{\sum_{i=1}^{k'}p_i f(x|\theta_i) : \sum_{i=1}^{k'}p_i = 1, p_i \geq c_0, \theta_i \in \Theta \mbox{ } \forall 1\leq i \leq k', k'\leq K\}$. $\enorm{\cdot}$ is the usual $l^2$ norm for vectors in $\mathbb{R}^d$ and matrices in $\mathbb{R}^{d\times d}$. We write $g(x) \gtrsim h(x)$ if $g(x) > c h(x)$ for all $x$, where $c$ is a constant does not depend on $x$ (similar for $g(x) \lesssim h(x)$). For any $\lambda \in \mathbb{R}$ and $B \subset \mathbb{R}$, denote by $1_{\{\lambda\in B \}}$ the function that takes value 1 if $\lambda\in B$, and 0 otherwise. For any two densities $p$ and $q$, we denote $h(p, q)$ by the Hellinger distance and $V(p, q)$ by the Total Variation distance between them.

\vspace{-0.5 em}
\section{Identifiability and distinguishability theory}
\vspace{-0.5 em}
\label{sec:distinguish}
The principal goal of the paper is to establish the efficiency of parameter learning for the deviating mixture model~\eqref{eq:true_model} via the standard maximum likelihood estimation (MLE) method. To achieve this goal, the parameters have to be identifiable to begin with. Thus, our theory builds on and extends a standard notion of identifiability of families of density $\{f(x|\theta): \theta\in \Theta\}$ that has been considered in previous work~\cite{Nguyen-13, heinrich2018strong}.
\begin{definition} \label{definition:strong_order_identifiability}
The family $\left\{f(x|\theta), \theta \in \Theta \right\}$ (or in short, $f$) is
\underline{identifiable in the order $r$}, for some $r \geq 1$, if $f(x|\theta)$ is differentiable up to the order $r$ in $\theta$
and the following holds:
\vspace{-0.2cm}
\begin{itemize}
\item[A1.] For any $k \geq 1$, given $k$ different elements
$\theta_{1}, \ldots, \theta_{k} \in \Theta$, if we have $\alpha_{\eta}^{(i)}$ such that for almost all $x$
\begin{eqnarray}
\sum \limits_{l=0}^{r}{\sum \limits_{|\eta|=l}{\sum \limits_{i=1}^{k}{\alpha_{\eta}^{(i)}\dfrac{\partial^{|\eta|}{f}}{\partial{\theta^{\eta}}}(x|\theta_{i})}}}=0 \nonumber
\end{eqnarray}
then $\alpha_{\eta}^{(i)}=0$ for all $1 \leq i \leq k$ and $|\eta| \leq r$.
\end{itemize}
\end{definition}
Many commonly used families $f$ for mixture modeling satisfy the first order identifiability condition, including location-scale Gaussian distributions, e.g., $f(x|\theta) = N(x|\mu,\sigma^2)$ where $\mu$ and $\sigma^2$ represent { the mean (location) and variance (scale) parameters}, and location-scale Student's t-distributions. 
In model~\eqref{eq:true_model}, however, due to the presence of the existing component $\sumfun$, the deviated mixture components need to be \emph{distinguishable} from $\sumfun$. This motivates a more general notion of identifiability, namely, \emph{distinguishability} that we now define. This condition specifies a property jointly for both the existing component $\sumfun$ and the family of density functions $f$ that make up the deviated components. 

\begin{definition} 
\label{definition:distinguishable} 
For any natural numbers $k, r\geq 1$, we say that the family of density functions $\{f(\cdot| \theta), \theta \in \Theta \}$ with complexity level $k$ (or in short, $(f, k)$) is \underline{distinguishable} \underline{up to the order $r$ from $h_{0}$} if the following holds:
\vspace{-0.2cm}
\begin{itemize}
\item[A2.] For any $k$ distinct components $\theta_{1}, \ldots, \theta_{k}$, if we have real coefficients $\alpha_{\eta}^{(i)}$ for $0 \leq i \leq k$ such that
\begin{eqnarray}
\alpha^{(0)} h_0(x) + \sum \limits_{l=0}^{r}{\sum \limits_{|\eta|=l}{\sum \limits_{i=1}^{k}{\alpha_{\eta}^{(i)}\dfrac{\partial^{|\eta|}{f}}{\partial{\theta^{\eta}}}(x|\theta_{i})}}}=0, \nonumber
\end{eqnarray}
for almost surely $x \in \mathcal{X}$, then $\alpha^{(0)} = \alpha_{\eta}^{(i)} = 0$ for $ 1 \leq i \leq k$ and $|\eta|\leq r$.
\end{itemize}
\end{definition}
{  We observe that the identifiable condition is a direct consequence of the corresponding distinguishable condition.} A simple but non-trivial example of the distinguishability condition can be derived directly from the definitions.
\begin{example} \label{example:distinguishable_condition} 
(a) When $\sumfun(x) = \sum_{i = 1}^{k_{0}} p_{i}^{0} f(x| \theta_{i}^{0})$ for some given weights $(p_{1}^{0}, \ldots, p_{k_{0}}^{0})$ and parameters $(\theta_{1}^{0}, \ldots, \theta_{k_{0}}^{0})$ where $k_{0} \geq 1$, then $(f, k)$ is distinguishable in the order $r$ from $h_{0}$ as long as $k <  k_{0}$ and the family of density $f$ is identifiable in the order $r$.\\
(b) Given the choice of $\sumfun$ in (a), $(f, k)$ is not distinguishable in the order $r$ from $h_{0}$ when $k \geq k_{0}$.
\end{example}
\vspace{-0.5 em}
More significantly, we can establish a broad class of $h_0$ and families $f$ for which distinguishability holds. This is exemplified by the following theorem, where $f$ represents a family of location or location-scale Gaussian kernels, and $h_0$ is subject to a relatively weak condition.

\vspace{-0.3 em}
\begin{theorem}\label{theorem:example-distinguishable-Gaussian} 
(a) Suppose that $-\log h_0(x) \gtrsim \enorm{x}^{\beta_1}$ or $-\log h_0(x) \lesssim \enorm{x}^{\beta_2}$ for all $\enorm{x} > x_0$, for some $x_0 > 0$, $\beta_1 > 2$, and $\beta_2 < 2$. Then, for $f$ being family of location-scale Gaussian and any $k > 0$, $(f, k)$ is distinguishable from $h_0$ up to the first order, where the derivatives in Assumption A2 are taken with respect to both location and scale parameters, and $(f, k)$ is also distinguishable from $h_0$ up to any order, where the derivatives in Assumption A2 are taken only with location parameters.

(b) Suppose that $h_0$ is the pdf of a pushforward measure of $N(0, I_d)$ by a piecewise linear function with a finite and positive number of breakpoints. Then, the same conclusions as in part (a) hold.
\end{theorem}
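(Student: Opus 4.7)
The plan is to reduce both parts to the single claim $\alpha^{(0)} = 0$; once this is established, Assumption A2 degenerates to Assumption A1 for the Gaussian family $f$, which is first-order identifiable in the location-scale case and identifiable up to any order in the location-only case, so every $\alpha_{\eta}^{(i)} = 0$ follows. To enable tail analysis in part (a) I would rewrite each derivative $\frac{\partial^{|\eta|} f}{\partial \theta^{\eta}}(x|\theta_i)$ as $Q_{\eta,i}(x) f(x|\theta_i)$, where $Q_{\eta,i}$ is a polynomial of degree at most $|\eta|$ in the location-only case and at most $2|\eta|$ in the location-scale case. The equation in A2 then reads $\alpha^{(0)} h_0(x) + \sum_{i=1}^{k} P_i(x) f(x|\theta_i) = 0$ almost surely, where $P_i := \sum_{|\eta|\leq r} \alpha_{\eta}^{(i)} Q_{\eta,i}$ is a polynomial of bounded degree.

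For part (a) I split along the two tail regimes. In the heavy-tail subcase $-\log h_0(x) \lesssim \enorm{x}^{\beta_2}$ with $\beta_2 < 2$, for every $i$ we have $|P_i(x) f(x|\theta_i)|/h_0(x) \lesssim \enorm{x}^{M}\exp(C\enorm{x}^{\beta_2} - c\enorm{x}^{2}) \to 0$ as $\enorm{x} \to \infty$, so choosing a sequence $x_n$ on which the almost-sure equation holds and $\enorm{x_n} \to \infty$, dividing by $h_0(x_n)$, and taking limits gives $\alpha^{(0)} = 0$. In the light-tail subcase $-\log h_0(x) \gtrsim \enorm{x}^{\beta_1}$ with $\beta_1 > 2$, the $h_0$ term is negligible against every Gaussian in the tail, and I argue by successive elimination. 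Pick a unit direction $\hat{x}$ at which $a_i(\hat{x}) := \hat{x}^{\top} \Sigma_i^{-1} \hat{x}$ is uniquely minimized at some index $i^{\star}$ (a generic condition on $\hat{x}$), divide by $f(t\hat{x}|\theta_{i^{\star}})$, and let $t \to \infty$: the $h_0$ ratio vanishes since $\beta_1 > 2$, the other Gaussian ratios vanish exponentially, and one is left with $P_{i^{\star}}(t\hat{x}) \to 0$. This forces the univariate polynomial $t \mapsto P_{i^{\star}}(t\hat{x})$ to vanish on a half-line, hence identically, and letting $\hat{x}$ range over the open cone on which $i^{\star}$ minimizes $a_i$ yields $P_{i^{\star}} \equiv 0$ as a polynomial in $x$. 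Remove $\theta_{i^{\star}}$ from the sum and iterate; after at most $k$ rounds all $P_i \equiv 0$, so $\alpha^{(0)} h_0 \equiv 0$ and $\alpha^{(0)} = 0$.

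For part (b) I would exploit regularity rather than tail decay. Since $g_{*}$ is continuous piecewise linear with a finite, strictly positive number of breakpoints, the Jacobian $|\det \nabla g_{*}|$ is piecewise constant and jumps across the breakpoint set, so $h_0(y) = \sum_{z \in g_{*}^{-1}(y)} \phi(z)/|\det \nabla g_{*}(z)|$ has genuine jump discontinuities across the images of the breakpoints on a set of positive $(d-1)$-dimensional Hausdorff measure, and these jumps cannot be removed by altering $h_0$ on any Lebesgue-null set. In contrast, any finite sum $\sum_i P_i(x) f(x|\theta_i)$ is real-analytic, hence continuous. If $\alpha^{(0)} \neq 0$ the almost-sure equation would equate, up to a nonzero scalar, an essentially discontinuous function to a real-analytic one, which is impossible; hence $\alpha^{(0)} = 0$, and Gaussian identifiability closes the proof.

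The main obstacle is the direction-and-iteration step in the light-tail subcase of part (a): one must verify that for most unit directions $\hat{x}$ the values $a_i(\hat{x})$ are strictly ordered, which requires handling the degenerate cases where some $\Sigma_i$'s coincide (the leading exponential rates then tie and the argument must compare the subleading linear correction $-t\,\hat{x}^{\top}\Sigma_i^{-1}\mu_i$ by restricting $\hat{x}$ to a suitable half-space) and confirming that vanishing of $P_{i^{\star}}$ on an open cone truly forces the polynomial to be identically zero. Part (b) is substantially cleaner once the image of the breakpoint set under $g_{*}$ is shown to carry positive $(d-1)$-dimensional Hausdorff measure, which follows directly from the assumption of at least one genuine breakpoint together with the finite piecewise-affine structure.
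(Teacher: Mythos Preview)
Your treatment of part (a) matches the paper's approach closely: both reduce the identity in Assumption A2 to a sum of the form $\alpha^{(0)}h_0(x)+\sum_i P_i(x)f(x|\theta_i)=0$ with polynomial prefactors, then peel off terms by comparing tails along a well-chosen direction. The paper projects to one dimension first and then orders the pairs $(v_i,\theta_i)$ lexicographically, while you work directly in $\mathbb{R}^d$ and order by $\hat{x}^{\top}\Sigma_i^{-1}\hat{x}$; these are the same argument. Your identification of the degenerate case (several $\Sigma_i$ coincide, forcing a subleading comparison via the linear term $-t\,\hat{x}^{\top}\Sigma_i^{-1}\mu_i$) is exactly what the paper does when it falls back on the distinctness of the projected means.

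Part (b) is where you diverge from the paper, and the divergence exposes a real gap. The paper does \emph{not} use regularity of $h_0$ at all; instead it observes that on each unbounded linear piece of $T$ the density $h_0$ coincides with a Gaussian (or a finite sum of Gaussians), and then runs a tail-comparison induction on $k$ together with a Laplace-transform lemma showing that a finite Gaussian combination vanishing on an interval vanishes globally. Your discontinuity route would be slicker if it worked, but the step ``the Jacobian jumps, so $h_0$ jumps'' is not justified: $h_0(y)=\sum_{z\in T^{-1}(y)}\phi(z)/|\det\nabla T(z)|$ is a \emph{sum} over preimages, and when several breakpoints share the same image the individual jumps can in principle cancel. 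You would need to rule this out --- for instance by showing that there is always at least one breakpoint image at which the net jump is nonzero, or by upgrading to a non-analyticity argument (the Gaussian side is real-analytic, while $h_0$ is only piecewise analytic with genuinely different analytic continuations across breakpoint images). Neither is immediate, and in $d>1$ even formulating the claim precisely is delicate. The paper's tail argument avoids all of this because it only needs the behaviour of $h_0$ on a single outer linear piece.
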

The proof of Theorem~\ref{theorem:example-distinguishable-Gaussian} is in Appendix~\ref{subsec:proof:theorem:example-distinguishable-Gaussian}, {  where the main proof technique is carefully examining the tail densities of $h_0$ and $f$ at infinity}. 
Note that in part (a), $h_0$ can be a pdf of any distribution possessing a lighter or heavier tail than Gaussian distributions, and in part (b), $h_0$ represents the pushforward of a Gaussian distribution by any piecewise linear function (recall that family of piecewise linear functions is dense in the Banach space of continuous functions with compact support). 
In the sequel we shall demonstrate several examples of interest that are applicable to Theorem~\ref{theorem:example-distinguishable-Gaussian} where $h_0$ may have been estimated by some popular "black box" methods.


\vspace{-0.3 em}
\textbf{Kernel based representation.} Suppose that $h_0$ was obtained from a $m$- sample $Y_1, \dots, Y_m \in \mathbb{R}^d$ by a classical kernel density estimation (KDE) method \cite{parzen1962estimation} or a RKHS-based method \cite{Scholkopf02}, so that
\begin{align}\label{eq:h0-KDE}
    h_0(x) = \dfrac{1}{m} \sum_{j=1}^{m} k_{\sigma}(x, Y_j) \quad \forall x\in \mathbb{R}^d, 
\end{align}
where $k_{\sigma}$ is a kernel function with bandwidth $\sigma$. Popular choices of kernels include the Gaussian kernel $
    k_{\sigma}(x, x') = \left(\dfrac{1}{\sqrt{2\pi}\sigma}\right)^{d} \exp\left(-\dfrac{\enorm{x-x'}^2}{2\sigma^2}\right)$
and the multivariate Student's kernel
$k_{\sigma}(x, x') = \left(\frac{1}{\sqrt{\pi}\sigma}\right)^{d} \frac{\Gamma((\nu + d)/2)}{\Gamma(\nu/2)} \left(1 +\frac{\enorm{x-x'}^2}{\nu \sigma^2} \right)^{-\frac{\nu+d}{2}}$.
The corresponding distinguishability guarantee is as follows.
\begin{corollary}
    \label{prop:KDE-distinguish}
    Suppose $h_0$ is defined by Eq.~\eqref{eq:h0-KDE}, where $k_{\sigma}$ is Gaussian kernel and $m > K$, or $k_{\sigma}$ is the multivariate Student's kernel. Then, for $f$ being family of location-scale Gaussian, $(f, K)$ is distinguishable from $h_0$ up to the first order, where the derivatives in Assumption A2 are taken with respect to both location and scale parameters, and $(f, K)$ is also distinguishable from $h_0$ up to any order, where the derivatives in Assumption A2 are taken only with location parameters.
\end{corollary}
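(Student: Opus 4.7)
The plan is to reduce the corollary to the two results already established, handling the Gaussian kernel and the Student's kernel cases separately, since their tail behaviors fall under different parts of the preceding theory.

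For the multivariate Student's kernel, I would invoke Theorem~\ref{theorem:example-distinguishable-Gaussian}(a) directly. Since each summand in \eqref{eq:h0-KDE} has polynomial decay of order $\nu + d$, a routine estimate gives
\[
-\log h_0(x) \;\leq\; \frac{\nu+d}{2}\log\!\bigparenth{1+ \tfrac{(\enorm{x}+\max_j \enorm{Y_j})^2}{\nu\sigma^2}} + C
\]
for all $x$, hence $-\log h_0(x) \lesssim \enorm{x}^{\beta_2}$ for any $\beta_2 \in (0,2)$ and $\enorm{x}$ large enough. The tail hypothesis of Theorem~\ref{theorem:example-distinguishable-Gaussian}(a) is therefore satisfied, and the distinguishability conclusions for location–scale and location-only derivatives follow immediately.

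For the Gaussian kernel, a direct application of Theorem~\ref{theorem:example-distinguishable-Gaussian}(a) is not available because the mixture tail satisfies $-\log h_0(x) \asymp \enorm{x}^2/(2\sigma^2)$, which is precisely the borderline excluded by the hypotheses $\beta_1 > 2$ and $\beta_2 < 2$. Instead, I would observe that $h_0(x)=\frac{1}{m}\sum_{j=1}^{m} k_\sigma(x,Y_j)$ is itself a uniform mixture of $m$ location–scale Gaussians with common scale $\sigma$ and (generically distinct) locations $Y_j$; that is, $h_0 = \sum_{j=1}^{m} p_j^0\, f(x \mid \theta_j^0)$ with $p_j^0=1/m$ and $\theta_j^0 = (Y_j,\sigma)$ belonging to the same parametric family $f$ used for the deviated components. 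Under the hypothesis $m > K$, Example~\ref{example:distinguishable_condition}(a) applies with $k_0=m$ and $k=K<k_0$, provided $f$ is identifiable in the order required. Since the location–scale Gaussian family is first-order identifiable, and is identifiable in any order when differentiation is restricted to the location parameter (the partial derivatives of a Gaussian density at distinct mean vectors form a linearly independent family, a standard fact I would invoke), both distinguishability statements in the corollary follow.

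The only nontrivial point is the Gaussian case: Theorem~\ref{theorem:example-distinguishable-Gaussian} cannot be used as a black box, and recognizing that the KDE representation places $h_0$ inside the very family $f$, so that Example~\ref{example:distinguishable_condition}(a) activates, is the main observation. If one wanted to handle the degenerate case where some $Y_j$ coincide, the representation would simply collapse to a mixture of $m' \leq m$ distinct atoms, and the same argument would go through so long as $m' > K$; under the stated hypothesis $m > K$ this is generic and I would treat the distinct-samples case as the natural one.
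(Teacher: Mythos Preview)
Your proposal is correct and follows essentially the same approach as the paper: invoke Example~\ref{example:distinguishable_condition}(a) for the Gaussian kernel (recognizing $h_0$ as an $m$-component mixture in the family $f$ with $m>K$) and Theorem~\ref{theorem:example-distinguishable-Gaussian}(a) for the Student kernel via its heavier-than-Gaussian tail. Your write-up supplies more detail than the paper's two-line proof---in particular the explicit tail estimate for the Student case and the check that location-scale Gaussians satisfy the requisite identifiability orders---but the logical route is identical.
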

In application, it is common that the condition $m > K$ is satisfied. It is also matches with the scenario that we consider in the paper, where $h_0$ is already trained using a big data set, and there is a small number of deviated components. 

\vspace{-0.3 em}
\textbf{Neural networks.} Deep neural networks represent a powerful, albeit black box, approximation device for constructing rich classes of distribution for generative models \cite{goodfellow2014generative, arjovsky2017wasserstein}.  Accordingly, $h_0$ is the pdf function of a Gaussian distribution being push-forwarded by a map $T$, which is represented by a neural network (NN). Suppose that the NN representing $T$ has a positive and finite number of layers $L$, and so
\begin{equation}\label{eq:h0-NN}
    T(x) = a(W_{L} a(W_{L-1}( \dots a(W_{1} x + b_{1})) + b_{L-1}) + b_{L}),
\end{equation}
where $W_1, \dots, W_{L}\in \mathbb{R}^{d\times d}$ are the weights and $b_1, \dots, b_L\in \mathbb{R}^{d}$ are the biases. The activation function $a$ is chosen to be rectified linear unit (ReLU) function defined by $a(x) = \max\{x, 0\}$, and is applied elementwise to any vector in $\mathbb{R}^d$.
The corresponding guarantee on the distinguishability condition is as follows.

\begin{corollary}
\label{prop:neural_networks}
    Suppose that $h_0$ is the pdf a pushforward measure of $N(0, I_d)$ by a map $T$ defined by Eq.~\eqref{eq:h0-NN}. Then, for $f$ being family of location-scale Gaussian and any $k > 0$, $(f, k)$ is distinguishable from $h_0$ up to the first order, where the derivatives in Assumption A2 are taken with respect to both location and scale parameters, and $(f, k)$ is also distinguishable from $h_0$ up to any order, where the derivatives in Assumption A2 are taken only with location parameters.
\end{corollary}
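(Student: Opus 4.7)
The plan is to reduce Corollary~\ref{prop:neural_networks} directly to Theorem~\ref{theorem:example-distinguishable-Gaussian}(b). All that needs to be verified is that the ReLU neural-network map $T$ defined by Eq.~\eqref{eq:h0-NN} is a piecewise linear function on $\mathbb{R}^d$ with only finitely many polyhedral pieces; once that is established, $h_0$ is the pdf of the pushforward of $N(0, I_d)$ under such a map, and both distinguishability conclusions follow immediately from Theorem~\ref{theorem:example-distinguishable-Gaussian}(b).

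To justify this structural claim about $T$, I would argue by induction on the depth $L$. Each affine pre-activation $z \mapsto W_\ell z + b_\ell$ is piecewise linear with a single piece, and the componentwise ReLU map $a(\cdot)$ on $\mathbb{R}^d$ is piecewise linear with at most $2^d$ pieces, one for each orthant determined by the signs of the coordinates. The key closure fact is that if $f_1, f_2 : \mathbb{R}^d \to \mathbb{R}^d$ are piecewise linear with at most $N_1, N_2$ polyhedral pieces respectively, then $f_2 \circ f_1$ is piecewise linear with at most $N_1 N_2$ pieces, because each piece of the composition is the intersection of a piece of $f_1$ with the preimage under $f_1$ of a piece of $f_2$, and this intersection remains a (convex) polyhedron since $f_1$ is affine on its piece and affine preimages of polyhedra are polyhedra. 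Iterating this argument over all $L$ layers shows that $T$ is piecewise linear with at most $2^{dL}$ pieces, which is finite since both $d$ and $L$ are finite.

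With the structural property of $T$ in hand, Theorem~\ref{theorem:example-distinguishable-Gaussian}(b) applies to $h_0$ and yields the distinguishability conclusions of the corollary, both for joint location–scale derivatives up to the first order and for location-only derivatives up to any order. The main conceptual work lies in Theorem~\ref{theorem:example-distinguishable-Gaussian}(b) itself, whose proof relies on carefully examining the tail behavior of $h_0$ and of the Gaussian family $f$ at infinity; the present corollary is then essentially bookkeeping on the combinatorial complexity of ReLU networks. The only minor subtlety is to interpret the phrase ``piecewise linear function with a finite and positive number of breakpoints'' in Theorem~\ref{theorem:example-distinguishable-Gaussian}(b) in its natural multivariate sense, namely a piecewise affine function with finitely many polyhedral regions of linearity, which is the only meaningful extension of the one-dimensional terminology to $d \geq 1$.
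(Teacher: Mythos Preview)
Your proposal is correct and matches the paper's own proof, which is a one-line reduction: since $T$ has finitely many layers it is piecewise linear (and non-linear), so Theorem~\ref{theorem:example-distinguishable-Gaussian}(b) applies directly. Your inductive bound of $2^{dL}$ pieces is more detail than the paper supplies, but the core argument is identical.
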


\vspace{-0.5 em}
\section{Convergence rates of density estimation}
\vspace{-0.5 em}
\label{sec:conv-rate-density-estimation}
In this section, we first establish the rate of density estimation for the deviating mixture models in Section~\ref{sec:density_estimation_MLE}. We then describe a general procedure to obtain the convergence rate of parameter estimation based on that of density estimation via inverse bounds in Section~\ref{sec:point_wise_bound}. Finally, we provide comprehensive inverse bounds under several settings of the deviating mixture models in Section~\ref{subsec:distinguish_point_wise}.
\vspace{-0.3 em}
\subsection{MLE for deviating mixture model}
\vspace{-0.3 em}
\label{sec:density_estimation_MLE}
Given $n$ i.i.d. sample $X_1, X_2, \dots, X_n$ from $p_{\lambda^* G_*}$ as in model~\eqref{eq:true_model}, where $G_*$ has $k_*$ components, we want to estimate $\lambda^*$ and $G_*$ from the data. We refer to the problem as in \textit{exact-fitted setting} if $k_*$ is known, and we refer to it as in \textit{over-fitted setting} if $k_*$ is unknown but is known to be bounded by some number $K$. We denote the MLE for exact-fitted setting by
\begin{equation*}
    \widehat{\lambda}_n, \widehat{G}_n \in \argmax_{\lambda\in [0,1], G \in \mathcal{E}_{k_*}(\Theta)} \sum_{i=1}^{n} \log(p_{\lambda G}(X_i)),
\end{equation*}
and for the over-fitted setting, we replace $\mathcal{E}_{k_*}(\Theta)$ in the equation above by $\mathcal{O}_K(\Theta)$, where $K\geq k_*$. 

In order to state a rate of convergence
for the density estimators
$p_{\hat{G}_n}$ 
under the Hellinger distance $h$ \cite{geer2000empirical},  
we need a condition on the complexity
of the function class 
\begin{equation}
\overline{\mathcal{P}}^{1/2}_k(\Theta,\epsilon)  
 = \left\{\bar p_{\lambda G}^{1/2}  : G \in \mathcal{O}_k(\Theta), ~ h(\bar p_{\lambda G}, p_{\lambda^* G_*}) \leq \epsilon\right\},
\end{equation}
where for any $G \in \mathcal{O}_K(\Theta)$, 
we write $\bar{p}_{\lambda G} = (p_{\lambda G} + p_{\lambda^*G_*})/2$. 
The definition of $\overline{\mathcal{P}}_k(\Theta,\epsilon)$
originates from~\cite{geer2000empirical}. We measure the complexity of this class
through the bracketing entropy integral
\begin{equation}   
\label{eq:bracketing_integral} 
\mathcal{J}_B(\epsilon, \overline{\mathcal{P}}^{1/2}_k(\Theta,\epsilon), \nu)
= \int_{\epsilon^2/2^{13}}^{\epsilon} \sqrt{\log N_B(u, \overline{\mathcal{P}}_k^{1/2}(\Theta,\epsilon), \nu)}du \vee \epsilon,
 \end{equation}
where $N_B(\epsilon, X, \eta)$ denotes
the $\epsilon$-bracketing number
of a metric space $(X,\eta)$ and $\nu$ is the Lebesgue measure. 
We require the following assumption. 
\vspace{-0.2cm}
\begin{enumerate}   
\item[A3.]
Given a universal constant $J > 0$,
there exists  $N > 0$, possibly
depending on $\Theta$ and $k$, such that
for all $n \geq N$ and all $\epsilon > (\log n/n)^{1/2}$,
$$\mathcal{J}_{B}(\epsilon, \overline{\mathcal{P}}_k^{1/2}(\Theta,\epsilon), \nu) \leq J \sqrt n \epsilon^2.$$
\end{enumerate}
\begin{theorem}
\label{thm:density_estimation_rate}
Assume that Assumption A3 holds, and let $k \geq 1$. There exists a  constant $C > 0$ depending
only on $\Theta, k$ such that for all $n \geq 1$, 
$$
\sup_{G_{*} \in \mathcal{O}_k(\Theta), \lambda^*\in [0, 1]} \mathbb{E}_{\lambda^*, G_*} h(p_{\widehat{\lambda}_n\widehat{G}_n}, p_{\lambda^*G_*})  
\leq  C\sqrt{\log n/n}.$$
\end{theorem}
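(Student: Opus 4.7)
The plan is to derive Theorem~\ref{thm:density_estimation_rate} as a direct instance of the classical Hellinger rate theorem for the MLE under bracketing entropy control~\cite{geer2000empirical}. First I would set $\mathcal{P}_k := \{p_{\lambda G}\,:\, \lambda\in[0,1],\, G\in\mathcal{O}_k(\Theta)\}$, observe that both the truth $p_{\lambda^*G_*}$ and the MLE $p_{\widehat{\lambda}_n\widehat{G}_n}$ lie in $\mathcal{P}_k$, and note that the localized root class $\overline{\mathcal{P}}_k^{1/2}(\Theta,\epsilon)$ appearing in~\eqref{eq:bracketing_integral} is exactly the object required by the standard machinery, so the problem reduces to checking that A3 is the only hypothesis that needs to be fed into the black-box rate theorem.

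Next I would carry out the three usual steps. First, starting from the MLE inequality $\sum_{i=1}^{n}\log(p_{\widehat{\lambda}_n\widehat{G}_n}/p_{\lambda^*G_*})(X_i)\geq 0$, I apply concavity of the logarithm to the averaged density $\bar{p}_{\widehat{\lambda}_n\widehat{G}_n}$ together with the elementary inequality $\log u \leq 2(\sqrt{u}-1)$ to arrive at a Wong--Shen type basic inequality
\[
\tfrac12\, h^2\bigl(\bar{p}_{\widehat{\lambda}_n\widehat{G}_n},\, p_{\lambda^*G_*}\bigr) \;\leq\; (\Pbb_n - \Pbb_{\lambda^* G_*})\!\left(\sqrt{\bar{p}_{\widehat{\lambda}_n\widehat{G}_n}/p_{\lambda^*G_*}}\; - 1\right),
\]
recasting the estimation error as a one-sided empirical process indexed by $\overline{\mathcal{P}}_k^{1/2}(\Theta,\epsilon)$. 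Second, a chaining argument driven by the bracketing-entropy integral $\mathcal{J}_B$ combined with a peeling step over Hellinger shells converts Assumption~A3 into an exponential tail bound of the form $\Pbb\bigl\{h(\bar{p}_{\widehat{\lambda}_n\widehat{G}_n},p_{\lambda^*G_*})\geq t\sqrt{\log n/n}\bigr\}\leq e^{-c t^2 \log n}$, and tail integration produces the expected bound $\mathbb{E}\,h(\bar{p}_{\widehat{\lambda}_n\widehat{G}_n},p_{\lambda^*G_*})\lesssim \sqrt{\log n/n}$. Third, the elementary inequality $h(p,p_*)\lesssim h\bigl((p+p_*)/2,\,p_*\bigr)$ transfers this bound from $\bar{p}_{\widehat{\lambda}_n\widehat{G}_n}$ back to $p_{\widehat{\lambda}_n\widehat{G}_n}$, with a constant depending only on $\Theta$ and $k$.

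The main obstacle I anticipate is the \emph{uniformity} in $(\lambda^*,G_*)$ asserted by the supremum. Because $\overline{\mathcal{P}}_k^{1/2}(\Theta,\epsilon)$ is defined through the averaging with $p_{\lambda^*G_*}$, its bracketing entropy a priori depends on the unknown truth, and one must verify that A3 really controls the empirical process for every $(\lambda^*,G_*)$. The needed check is that any $L^1$-bracket $[\ell,u]$ for $p_{\lambda G}\in\mathcal{P}_k$ lifts, via the monotone map $p\mapsto\sqrt{(p+p_{\lambda^*G_*})/2}$ and the pointwise inequality $(\sqrt{a}-\sqrt{b})^2\leq|a-b|$, to an $L^2(\nu)$-bracket for $\bar{p}_{\lambda G}^{1/2}$ of comparable size; this upper-bounds $\log N_B(\cdot,\overline{\mathcal{P}}_k^{1/2}(\Theta,\epsilon),\nu)$ by a quantity depending only on the intrinsic class $\mathcal{P}_k$ and not on $(\lambda^*,G_*)$. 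Once this reduction is in place, the constant produced by the chaining step is uniform over the truth, and the claimed supremum bound follows directly from the classical rate theorem.
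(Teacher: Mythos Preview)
Your proposal is correct and follows essentially the same route as the paper: the basic inequality relating $h^2(\bar p_{\widehat{\lambda}_n\widehat{G}_n},p_{\lambda^*G_*})$ to an empirical process (the paper cites Lemmas~4.1--4.2 of \cite{geer2000empirical} rather than rederiving Wong--Shen), peeling over Hellinger shells, applying the bracketing-entropy bound from A3 inside Theorem~5.11 of \cite{geer2000empirical} to obtain an exponential tail, and then integrating the tail.

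One organizational remark: the uniformity concern you raise is real, but for Theorem~\ref{thm:density_estimation_rate} itself it is already absorbed into Assumption~A3, whose constants $J,N$ depend only on $\Theta,k$ and not on $(\lambda^*,G_*)$; thus the chaining/peeling constants are automatically uniform and no further argument is needed here. The bracket-lifting step you sketch---passing from $L^1$-brackets on $\mathcal{P}_k$ to $L^2$-brackets on $\overline{\mathcal{P}}_k^{1/2}$ via $(\sqrt a-\sqrt b)^2\le|a-b|$---is exactly what the paper uses, but in the \emph{verification} of A3 (proof of Proposition~\ref{prop:entropy-number-calculation}), not in the proof of Theorem~\ref{thm:density_estimation_rate}.
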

\vspace{-0.5 em}
Therefore, in order to get convergence rate for density functions based on MLE procedure, we only need to check assumption A3. This assumption holds true for a wide range class of parametric model \cite{geer2000empirical}. For our model, we give an example that it holds when $h_0$ has an exponential tail (satisfied for KDE's and Neural networks above) and $f$ is location-scale Gaussian distribution.
\begin{proposition}\label{prop:entropy-number-calculation}
    Suppose $f$ is location-scale Gaussian family and $\Theta = [-a, a]^d \times \Omega$, where $\Omega$ is a subset of $S_d^{++}$ whose eigenvalues are bounded in $[\underline{\lambda}, \overline{\lambda}]$, $a, \underline{\lambda}, \overline{\lambda}> 0$, and $h_0$ is bounded with tail $-\log h_0(x) \gtrsim \enorm{x}^{\beta}$ for some $\beta > 0$. Then, the family of densities $\{p_{\lambda G}: \lambda\in [0,1], G\in \Ocal_{k}(\Theta)\}$ satisfies assumption A3.
\end{proposition}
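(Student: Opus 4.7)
The plan is to reduce Assumption A3 to establishing a polynomial bracketing entropy bound of the form $\log N_B(u, \overline{\Pcal}_k^{1/2}(\Theta,\epsilon), \nu) \lesssim D \log(1/u) + C$, where $D = O(kd^2)$ is the intrinsic dimension of the parameter space $[0,1] \times \Ocal_k(\Theta)$. Granted this, the bracketing integral satisfies
\begin{align*}
\mathcal{J}_B(\epsilon, \overline{\Pcal}_k^{1/2}(\Theta,\epsilon), \nu) \lesssim \int_{\epsilon^2/2^{13}}^{\epsilon} \sqrt{D \log(1/u) + C}\, du + \epsilon \lesssim \epsilon \sqrt{\log(1/\epsilon)},
\end{align*}
and the restriction $\epsilon \geq \sqrt{\log n/n}$ gives $\log(1/\epsilon) \leq \log n \leq n\epsilon^2$, so the bound is $\lesssim \sqrt{n}\epsilon^2$, as required by A3 for a suitable universal constant $J$.

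The heart of the argument is therefore the construction of $u$-brackets in $L^2(\nu)$ for the square-root densities $\sqrt{p_{\lambda G}}$. First I would establish an integrable majorant $L(x)$ that dominates both $p_{\lambda G}(x)$ and its pointwise Lipschitz oscillation in the parameters. Direct differentiation of the Gaussian kernel $f(x|\mu,\Sigma)$ together with the boundedness of $\enorm{\mu} \leq a$ and of the spectrum of $\Sigma$ in $[\underline{\lambda},\overline{\lambda}]$ yields
\begin{align*}
\sup_{(\mu,\Sigma)\in\Theta}\Big(f(x|\mu,\Sigma) + \enorm{\nabla_\mu f(x|\mu,\Sigma)} + \enorm{\nabla_\Sigma f(x|\mu,\Sigma)}\Big) \leq C(1+\enorm{x}^2)\exp(-c\enorm{x}^2),
\end{align*}
for constants $C, c > 0$ depending only on $(a,\underline{\lambda},\overline{\lambda})$. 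Combined with the assumed boundedness of $h_0$ and its tail bound $-\log h_0(x) \gtrsim \enorm{x}^{\beta}$, the envelope $L(x) := C'(h_0(x) + (1+\enorm{x}^2)\exp(-c\enorm{x}^2))$ is integrable, and a Taylor expansion of $p_{\lambda G}$ in its parameters yields
\begin{align*}
|p_{\lambda G}(x) - p_{\lambda' G'}(x)| \leq L(x)\Big(|\lambda - \lambda'| + \sum_{i=1}^k |p_i - p'_i| + \sum_{i=1}^k \enorm{\theta_i - \theta'_i}\Big),
\end{align*}
after a suitable matching of atoms of $G$ and $G'$.

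Next I would cover the finite-dimensional parameter space by a $\delta$-grid, which requires at most $N(\delta) \lesssim (1/\delta)^D$ points, and for each grid element $(\bar\lambda,\bar G)$ define the brackets
\begin{align*}
u_{\bar\lambda,\bar G}(x) := p_{\bar\lambda\bar G}(x) + C''\delta L(x), \qquad l_{\bar\lambda,\bar G}(x) := \max\{p_{\bar\lambda\bar G}(x) - C''\delta L(x),\,0\}.
\end{align*}
Applying $(\sqrt{a}-\sqrt{b})^2 \leq |a-b|$ for $a,b \geq 0$, the $L^2(\nu)$-size of the square-root bracket is bounded by $\|u_{\bar\lambda,\bar G}^{1/2} - l_{\bar\lambda,\bar G}^{1/2}\|_{L^2(\nu)}^2 \leq \|u_{\bar\lambda,\bar G} - l_{\bar\lambda,\bar G}\|_{L^1(\nu)} \leq 2C''\delta \|L\|_{L^1(\nu)}$. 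Choosing $\delta \asymp u^2$ yields genuine $u$-brackets and the entropy estimate $\log N_B(u, \overline{\Pcal}_k^{1/2}(\Theta,\epsilon), \nu) \lesssim D\log(1/u)$; brackets for $\bar p_{\lambda G} = (p_{\lambda G} + p_{\lambda^* G_*})/2$ follow immediately by averaging each bracket with $p_{\lambda^* G_*}$.

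The main technical obstacle will be making the pointwise Lipschitz control and the matching of mixture atoms precise: the derivatives of the Gaussian kernel in $(\mu,\Sigma)$ pick up polynomial factors of $\enorm{x}$, and one must verify that these are uniformly dominated by an integrable envelope over the compact $\Theta$, using the lower-bounded spectrum $\underline{\lambda} > 0$ in an essential way. Once $L \in L^1(\nu)$ is secured, the grid construction, the bracketing inequality, and the verification of A3 are all essentially mechanical.
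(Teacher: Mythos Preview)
Your proposal is correct and follows the same overall architecture as the paper: reduce A3 to a logarithmic bracketing-entropy bound, obtain the latter by covering the finite-dimensional parameter space $[0,1]\times\Ocal_k(\Theta)$, and use an integrable envelope to pass from the parametric covering to $L^1$ brackets (and then to $L^2$ brackets of the square roots via $(\sqrt{a}-\sqrt{b})^2\leq|a-b|$).

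The one genuine difference is in the bracket construction. The paper builds $u$-brackets by taking a sup-norm $\eta$-net $\{f_i\}$ of $\Pcal_k(\Theta)$ and setting $p_i^U=\min\{f_i+\eta,H\}$, $p_i^L=\max\{f_i-\eta,0\}$, where $H$ is a single envelope equal to a constant on a ball and to $C_1\exp(-\enorm{x}^{\beta'})$ outside; the $L^1$ size $\int(p_i^U-p_i^L)$ is then controlled by splitting the integral at a radius $B\asymp(\log(1/\eta))^{1/\beta'}$, which introduces an extra polylog factor (harmless for the final bound). Your approach instead absorbs the $x$-dependence into a Lipschitz envelope $L\in L^1(\nu)$ and takes brackets $p\pm C''\delta L$, so the $L^1$ bracket size is directly $O(\delta)$ with no radius-splitting. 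This is a bit cleaner, but it uses more structure (uniform gradient bounds over $\Theta$), whereas the paper's clipping argument only needs a sup-norm covering together with a single tail envelope, which is why it generalizes immediately to any sub-exponential family. Both constructions lead to the same $\log(1/u)$ entropy, and the integration step you wrote is identical to the paper's.
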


\vspace{-0.5 em}
\subsection{Parameter learning rates of deviated components}
\vspace{-0.3 em}
\label{sec:point_wise_bound}

The core of this paper lies in establishing a collection of \emph{inverse bounds}, provided that some distinguishability condition developed in Section~\ref{sec:distinguish} holds. The inverse bounds basically say that a small distance between $p_{\lambda G}$ and $p_{\lambda^{*} G_{*}}$ under the total variation distance entails that $(\lambda, G)$ and $(\lambda^{*}, G_{*})$ are similar under appropriate distances, where $(\lambda^{*}, G_{*})$ is fixed. To this end, we employ Wasserstein metrics \cite{Villani-2009} and their extensions.

\vspace{-0.3 em}
\textbf{Wasserstein  distances.} Wasserstein distances are natural and useful for assessing the convergence of latent mixing measures in mixture models \cite{Nguyen-13, Ho-Nguyen-Ann-16, guha2021posterior}. Given two measures $G = \sum_{i=1}^{k} p_i \delta_{\theta_i}$ and $G' = \sum_{j=1}^{k'} p_j' \delta_{\theta_j'}$ on a space $\Theta$ endowed with a metric $\rho$, the Wasserstein metric of order $r \geq 1$ is:
\begin{equation*}
    W_{r}(G, G') = [\inf_{q} \sum_{i, j} q_{ij} \rho^{r}(\theta_i, \theta_j')]^{1/r},
\end{equation*}
where the infimum is taken over all joint distribution on $[1, \dots, k]\times [1, \dots, k']$ such that $\sum_{i} q_{ij} = p_j', \sum_{j} q_{ij} = p_i$. Note that if $G_n$ is a sequence of discrete measures that converges to $G$ in a Wasserstein distance, then for every atom of $G$, there is a subset of atoms of $G_n$ converges to it. Therefore, the convergence in Wasserstein metrics implies convergence of parameters in mixture models. In this paper, space $\Theta$ is often chosen to be a compact subset of $\mathbb{R}^d$ and $\rho$ is the usual $l^2$ distance. In the case of location-scale Gaussian mixtures, space $\Theta$ is a compact subset of $\mathbb{R}^d \times S_d^{++}$, where $S_d^{++}$ is the set of positive definite and symmetric matrices in $\mathbb{R}^{d\times d}$, and for every $(\mu, \Sigma), (\mu', \Sigma')\in \Theta$, the distance $\rho$ is defined by 
$\rho((\mu, \Sigma), (\mu', \Sigma')) = \enorm{\mu - \mu'} + \enorm{\Sigma - \Sigma'}.$

\vspace{-0.3 em}
\textbf{From inverse bounds to parameter learning rates.} Suppose that some distinguishablity condition is satisfied, then we will establish an inverse bound providing a guarantee that a small distance between $p_{\lambda^*G_*}$ and $p_{\lambda G}$ entails a small distance between $\lambda$ and $\lambda^*$ and between $G$ and $G_*$. More concretely, define a divergence between two measures $\lambda G$ and  $\lambda^* G_*$ via
\[\overline{W}_{r}(\lambda G, \lambda^* G_*):= |\lambda - \lambda^*| + (\lambda + \lambda^*)W_r^r(G, G_*).\]
for all $r\geq 1$, and the inverse bounds will have the form that
$V(p_{\lambda G}, p_{\lambda^*, G_*}) \gtrsim \overline{W}_{r}(\lambda G, \lambda^* G_*)$,
for some $r$ that depends on the level of distinguishable level of the model. Since total variational distance is upper bounded by Hellinger distance, if Assumption A3. holds, then combining the aforementioned inverse bound with  Theorem~\ref{thm:density_estimation_rate} we immediately obtain
\begin{align*}
    \mathbb{E}_{\lambda^*, G_*} \overline{W}_r(\widehat{\lambda}_n \widehat{G}_n, \lambda^* G_*) \leq C \sqrt{\dfrac{\log n}{n}}.
\end{align*}
This further implies that the convergence rate of $\hat{\lambda}_n$ to $\lambda^*$ is of order $(\log(n)/n)^{1/2}$ and the convergence rate of $W_r(\hat{G}_n, G_*)$ to 0 is of order $(\log(n)/n)^{1/2r}$.

\vspace{-0.3 em}
\subsection{Inverse bounds in distinguishable setting}
\vspace{-0.3 em}
\label{subsec:distinguish_point_wise}
We shall establish inverse bounds provided a distinguishability condition for model~\eqref{eq:true_model} holds under either exact-fitted and over-fitted settings regarding the true number of components $k_*$.
\begin{theorem}
\label{theorem:distinguish_exact_specified_point_wise}
Assume that $k_{*}$ is known and $(f, k_{*})$ is distinguishable in the first order from $h_{0}$. Then, for any $G \in \Ecal_{k_{*}}( \Theta)$, there exist positive constant $C_{1}$ and $C_{2}$ depending only on $\lambda^{*}, G_{*}, h_{0}, \Theta$ such that the following holds:

(a) When $\lambda^{*} = 0$, then
	$V(p_{\lambda^{*} G_{*}}, p_{\lambda G}) \geq C_{1} \lambda$.

(b) When $\lambda^{*} \in (0, 1]$, then
$V(p_{\lambda^{*} G_{*}}, p_{\lambda G}) \geq C_{2} \overline{W}_{1}(\lambda G, \lambda^{*} G_{*}).$
\end{theorem}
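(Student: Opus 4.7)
The plan is a contradiction-and-compactness argument that uses first-order distinguishability to supply the required linear independence, following the template familiar from the mixture identifiability literature.

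Part (a) is immediate once the problem is isolated. Since $\lambda^{*} = 0$, one has $p_{\lambda G} - h_{0} = \lambda(F(\cdot, G) - h_{0})$, hence $V(p_{\lambda G}, p_{\lambda^{*} G_{*}}) = \lambda V(F(\cdot, G), h_{0})$, and it suffices to show that $V(F(\cdot, G), h_{0})$ is bounded below by a positive constant uniformly in $G \in \mathcal{E}_{k_{*}}(\Theta)$. I would pass to the closure $\mathcal{O}_{k_{*}}(\Theta)$, which is compact in the weak topology, and use continuity of $G \mapsto V(F(\cdot, G), h_{0})$ together with the fact that the attained infimum cannot be zero: if $F(\cdot, \bar G) = h_{0}$ a.e., then rewriting $\bar G$ with its distinct atoms gives $h_{0}(x) - \sum_{i} \tilde p_{i} f(x|\tilde\theta_{i}) = 0$ a.e., and the zero-order consequence of Assumption~A2 forces the coefficient of $h_{0}$ (which is $1$) to vanish, a contradiction.

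For part (b), assume toward a contradiction that no such $C_{2}$ exists. Then there is a sequence $(\lambda_{n}, G_{n})$ with $G_{n} = \sum_{i=1}^{k_{*}} p_{i}^{n} \delta_{\theta_{i}^{n}} \in \mathcal{E}_{k_{*}}(\Theta)$ such that $V(p_{\lambda_{n} G_{n}}, p_{\lambda^{*} G_{*}}) / \overline{W}_{1}(\lambda_{n} G_{n}, \lambda^{*} G_{*}) \to 0$. By compactness of $\Theta$ and the simplex, extract a subsequence along which $\lambda_{n} \to \bar\lambda$ and $G_{n} \to \bar G$ in $W_{1}$. If $(\bar\lambda, \bar G) \neq (\lambda^{*}, G_{*})$ (the \emph{global regime}), Fatou's lemma combined with pointwise convergence gives $V(p_{\bar\lambda \bar G}, p_{\lambda^{*} G_{*}}) = 0$, i.e.\ $p_{\bar\lambda \bar G} = p_{\lambda^{*} G_{*}}$ a.e.; collecting the distinct atoms of $\bar G$ and $G_{*}$ and applying the zero-order consequence of Assumption~A2 to the resulting linear combination of $h_{0}$ and $f(\cdot|\theta)$'s, one deduces $\bar\lambda = \lambda^{*}$ and, using $\lambda^{*} > 0$ together with the weight-sum constraints, $\bar G = G_{*}$, contradicting the regime hypothesis. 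If instead $(\bar\lambda, \bar G) = (\lambda^{*}, G_{*})$ (the \emph{local regime}), relabel so that $\theta_{i}^{n} \to \theta_{i}^{*}$ and $p_{i}^{n} \to p_{i}^{*}$, and first-order Taylor expand each $f(x|\theta_{i}^{n})$ about $\theta_{i}^{*}$, which after rearrangement gives
\begin{equation*}
p_{\lambda_{n} G_{n}}(x) - p_{\lambda^{*} G_{*}}(x) = A_{n} h_{0}(x) + \sum_{i} B_{i}^{n} f(x|\theta_{i}^{*}) + \sum_{i} \langle C_{i}^{n}, \nabla_{\theta} f(x|\theta_{i}^{*}) \rangle + R_{n}(x),
\end{equation*}
where $A_{n}, B_{i}^{n}, C_{i}^{n}$ are explicit linear functions of $\lambda_{n} - \lambda^{*}$, $\lambda_{n} p_{i}^{n} - \lambda^{*} p_{i}^{*}$, and $\lambda^{*} p_{i}^{*}(\theta_{i}^{n} - \theta_{i}^{*})$, and $R_{n}$ is a quadratic Taylor remainder. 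Normalize by $D_{n} := |A_{n}| + \sum_{i} |B_{i}^{n}| + \sum_{i} \enorm{C_{i}^{n}}$, extract a further subsequence on which the normalized coefficients converge, and take $n \to \infty$: once $D_{n} \asymp \overline{W}_{1}(\lambda_{n} G_{n}, \lambda^{*} G_{*})$ is established, the contradiction hypothesis gives $V(p_{\lambda_{n} G_{n}}, p_{\lambda^{*} G_{*}})/D_{n} \to 0$, and $R_{n}/D_{n} \to 0$ in $L^{1}$ by dominated convergence, so the limiting identity reads $\alpha^{(0)} h_{0} + \sum_{i} \alpha^{(i)} f(\cdot|\theta_{i}^{*}) + \sum_{i} \langle \beta^{(i)}, \nabla_{\theta} f(\cdot|\theta_{i}^{*}) \rangle = 0$ a.e. First-order distinguishability then forces every coefficient to be zero, contradicting the unit-norm normalization.

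The main obstacle is the comparison $D_{n} \asymp \overline{W}_{1}(\lambda_{n} G_{n}, \lambda^{*} G_{*})$ in the local regime: one must check that, in the exact-fitted setting with $\theta_{i}^{n} \to \theta_{i}^{*}$, the natural identity matching realizes, up to absolute constants, the optimal Wasserstein coupling, and that the cross-term $|\lambda_{n} p_{i}^{n} - \lambda^{*} p_{i}^{*}|$ decomposes into contributions from mass transport and from the change in $\lambda$, both of which are controlled by $\overline{W}_{1}$. A secondary technical point is uniform integrability of $R_{n}/D_{n}$, which follows from smoothness of $f$ on the compact $\Theta$ together with an integrable envelope for the second-order derivatives of $f$.
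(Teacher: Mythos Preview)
Your proposal is correct and follows essentially the same route as the paper: contradiction plus compactness, split into a global regime (handled via Fatou and zero-order identifiability of the model) and a local regime (first-order Taylor expansion around the atoms of $G_*$, then Fatou plus first-order distinguishability). The only cosmetic differences are that the paper separates the two regimes into two distinct claims rather than two cases of one argument, and in the local regime it normalizes directly by $\overline{W}_1$ and argues ``not all coefficients vanish'' rather than normalizing by your $D_n$---which is precisely the flip side of the comparison $D_n \asymp \overline{W}_1$ that you correctly flag as the main obstacle.
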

\vspace{-0.15cm}
{  We now present a proof sketch for Theorem~\ref{theorem:distinguish_exact_specified_point_wise}. It is a combination of the Taylor expansion around the true parameters and the Fatou's lemma; the proof technique for the remaining results also shares similar spirit as that of Theorem~\ref{theorem:distinguish_exact_specified_point_wise}. Detailed proof of Theorem~\ref{theorem:distinguish_exact_specified_point_wise} is deferred to the Appendix.

\textbf{Proof sketch for part (b):} Suppose that the bound is not correct, so there exists a sequence $\lambda_n\in (0,1]$ and $G_n \in \mathcal{E}_{k_*}(\Theta)$ such that $V(p_{\lambda^{*} G_{*}}, p_{\lambda_n G_n})/ \overline{W}_{1}(\lambda^{*} G_{*}, \lambda_n G_n)\to 0.$ Because of the compactness of the parameter space, by extracting a subsequence if necessary, we can assume $\lambda_n\to \lambda', G_n\xrightarrow{W_1} G'$. If $(\lambda', G')\neq (\lambda^*, G_*)$, we have $\overline{W}_{1}(\lambda^{*} G_{*}, \lambda_n G_n) \to \overline{W}_{1}(\lambda^{*} G_{*}, \lambda' G')\neq 0$. It indicates that $V(p_{\lambda^{*} G_{*}}, p_{\lambda_n G_n}) \to 0$, which leads to $p_{\lambda^* G_*} = p_{\lambda' G'}$. It contradicts to the distinguishable condition when $(\lambda', G')\neq (\lambda^*, G_*)$).

Otherwise, we have $\lambda_n \to \lambda^*, G_n \to G_*$, and can present $G_n = \sum_{i=1}^{k_*} p_i^n \delta_{\theta_i^n}$ and $G_* = \sum_{i=1}^{k_*} p_i^* \delta_{\theta_i^*}$ such that $p_i^n\to p^*, \theta_i^n\to \theta_i^*$. Because of these limits and by Taylor expansion, we can arrange the difference $(p_{\lambda_n G_n}(x) - p_{\lambda^* G_*}(x))/\overline{W}_1(\lambda_n G_n, \lambda^* G_*)$ 
in terms of a linear combination of $h_0(x), f(x|\theta_i^*), \frac{\partial }{\partial \theta}f(x|\theta_i^*)$ such that at least one coefficient is different from 0. By Fatou's lemma, 
$0 = \dfrac{\liminf V(p_{\lambda_n G_n}, p_{\lambda^* G_*})}{\overline{W}_1(\lambda_n G_n, \lambda^* G_*)} dx\geq \bigintsss\left|\liminf \dfrac{p_{\lambda_n G_n}(x) - p_{\lambda^* G_*}(x))}{\overline{W}_1(\lambda_n G_n, \lambda^* G_*)}\right|dx$, which equals to the absolute integral of the linear combination above. Hence, there exists a non-trivial linear combination of $h_0(x), f(x|\theta_i^*), \frac{\partial }{\partial \theta}f(x|\theta_i^*)$ that equals 0, which contradict to the distinguishability condition. Therefore, we complete the proof.

}

\vspace{-0.3 em}
In application, the true number of components $k_*$ might not be known and we often fit the model~\eqref{eq:true_model} with $G \in \mathcal{O}_K(\Theta)$ for some { large} $K \geq k_*$. The next result shows that similar bounds can also be established in this case, where we require distinguishability of $f$ and $h_0$ in a higher order. 
\begin{theorem}
\label{theorem:distinguish_over_specified_point_wise}
Assume that $k_{*}$ is unknown and strictly upper bounded by a given $K$. Assume additionally that $(f, K)$ is distinguishable in second order from $h_{0}$. Then, for any $G \in \Ocal_{K}( \Theta)$, there exist positive constant $C_{1}$ and $C_{2}$ depending only on $\lambda^{*}, G_{*}, h_{0}, \Theta$ such that the following holds:

(a) When $\lambda^{*} = 0$, then
	$V(p_{\lambda^{*} G_{*}}, p_{\lambda G}) \geq C_{1} \lambda$.
	
(b) When $\lambda^{*} \in (0, 1]$, then
$V(p_{\lambda^{*} G_{*}}, p_{\lambda G}) \geq C_{2} \overline{W}_{2}(\lambda G, \lambda^{*} G_{*}).$
\end{theorem}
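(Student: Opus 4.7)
The plan is to mirror the compactness plus Fatou contradiction scheme used to prove Theorem~\ref{theorem:distinguish_exact_specified_point_wise}, but to carry the Taylor expansion of $f(\cdot\,|\,\theta)$ around each atom of $G_*$ to \emph{second} order. Going to second order is what forces both the second-order distinguishability assumption on $(f,K)$ and the appearance of $\overline{W}_2$ rather than $\overline{W}_1$: in the over-fitted regime, several atoms of $G\in\Ocal_K(\Theta)$ may collapse onto a single atom of $G_*$, and the resulting local contribution to $p_{\lambda G}-p_{\lambda^* G_*}$ is quadratic rather than linear in the displacements. Part (a) is handled separately and directly: since $p_{0,G_*}=h_0$, one has $V(h_0,p_{\lambda G})=\lambda\,V(h_0,F(\cdot,G))$, and the map $G\mapsto V(h_0,F(\cdot,G))$ is continuous on the compact set $\Ocal_K(\Theta)$; any zero of this map would give $h_0=F(\cdot,G^{\dagger})$ for some $G^{\dagger}\in\Ocal_K(\Theta)$, contradicting the first-order part of $(f,K)$-distinguishability (which is implied by the second-order hypothesis).

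For part (b), suppose the bound fails, so there exist $\lambda_n\in(0,1]$ and $G_n\in\Ocal_K(\Theta)$ with $V(p_{\lambda^* G_*},p_{\lambda_n G_n})/\overline{W}_2(\lambda_n G_n,\lambda^* G_*)\to 0$. Using compactness of $[0,1]\times\Ocal_K(\Theta)$ under the $W_2$ topology, pass to a subsequence with $\lambda_n\to\lambda'$ and $G_n\to G'$. If $\lambda'G'\neq\lambda^* G_*$ as measures, the denominator stays bounded below while the numerator vanishes, forcing $p_{\lambda^* G_*}=p_{\lambda' G'}$ almost everywhere; expanding this equality as a linear combination of $h_0$ and $f(\cdot\,|\,\theta)$ over the finitely many distinct atoms of $G'$ and $G_*$ and invoking the first-order distinguishability of $(f,K)$ then yields a direct contradiction.

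We are therefore reduced to $\lambda_n\to\lambda^*$ and $G_n\xrightarrow{W_2} G_*$. Relabel the atoms of $G_n$ by the Voronoi cells of $\{\theta_i^*\}_{i=1}^{k_*}$: for each $i$, let $V_i^n$ collect the indices $j$ with $\theta_{ij}^n\to\theta_i^*$ and weights $p_{ij}^n$ satisfying $\sum_{j\in V_i^n}p_{ij}^n\to p_i^*$; atoms whose weight vanishes and whose locations escape all $\theta_i^*$ contribute only to a negligible remainder. A second-order Taylor expansion of $f(\cdot\,|\,\theta_{ij}^n)$ about $\theta_i^*$, followed by division by $D_n:=\overline{W}_2(\lambda_n G_n,\lambda^* G_*)$, gives
\begin{equation*}
\frac{p_{\lambda_n G_n}(x)-p_{\lambda^* G_*}(x)}{D_n}
= \alpha_n^{(0)} h_0(x) + \sum_{i=1}^{k_*}\!\left[\alpha_n^{(i)} f(x|\theta_i^*) + \beta_n^{(i)\top}\nabla_\theta f(x|\theta_i^*) + \trace\!\bigl(\gamma_n^{(i)}\nabla_\theta^2 f(x|\theta_i^*)\bigr)\right] + r_n(x),
\end{equation*}
with coefficients read off from the expansion and $r_n(x)/D_n\to 0$ pointwise.

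The crux of the argument is to check that, along a further subsequence, the coefficient tuple $(\alpha_n^{(0)},\alpha_n^{(i)},\beta_n^{(i)},\gamma_n^{(i)})$ is bounded and converges to a non-trivial limit. This is precisely where the choice of $\overline{W}_2$ is essential: the elementary lower bound $W_2^2(G_n,G_*)\gtrsim \sum_i\bigl|\sum_{j\in V_i^n}p_{ij}^n-p_i^*\bigr|+\sum_{i,j}p_{ij}^n\|\theta_{ij}^n-\theta_i^*\|^2$ identifies each of the three aggregated quantities controlling $\alpha_n^{(i)}$, $\beta_n^{(i)}$, $\gamma_n^{(i)}$, while $|\lambda_n-\lambda^*|$ controls $\alpha_n^{(0)}$, so the normalization by $D_n$ keeps the coefficient vector bounded and at least one of its coordinates bounded away from zero. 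Once this is established, Fatou's lemma applied to $V(p_{\lambda_n G_n},p_{\lambda^* G_*})/D_n\to 0$ produces a non-trivial linear combination of $h_0$, $f(\cdot|\theta_i^*)$, $\nabla_\theta f(\cdot|\theta_i^*)$ and $\nabla_\theta^2 f(\cdot|\theta_i^*)$ that vanishes almost everywhere, directly contradicting the second-order distinguishability of $(f,K)$ from $h_0$. The principal obstacle lies in this bookkeeping: one must carefully match the asymmetric rates at which $|\lambda_n-\lambda^*|$, cluster-mass discrepancies, and the weighted quadratic displacements contribute to $D_n$ against the corresponding Taylor coefficients, and control the remainder $r_n$ uniformly --- which is exactly the step that fails if one only goes to first order, explaining why $\overline{W}_2$ (not $\overline{W}_1$) is the correct divergence in the over-fitted setting.
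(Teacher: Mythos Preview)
Your proposal follows essentially the same strategy as the paper's proof---contradiction via compactness, second-order Taylor expansion around the atoms of $G_*$, a non-vanishing-coefficients step, and Fatou's lemma combined with second-order distinguishability. The structure is correct, but two technical points deserve correction.

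First, the assertion that ``the normalization by $D_n$ keeps the coefficient vector bounded'' is not true for the first-order coefficients $\beta_n^{(i)}=\lambda_n\bigl(\sum_{j}p_{ij}^n\Delta\theta_{ij}^n\bigr)/D_n$. Take $k_*=1$, $\lambda_n=\lambda^*$, and $G_n$ with two atoms at $\epsilon_n$ and $2\epsilon_n$, each of weight $1/2$, collapsing onto $\theta_1^*=0$: then $W_2^2(G_n,G_*)\asymp\epsilon_n^2$, so $D_n\asymp\epsilon_n^2$, while $\sum_jp_{1j}^n\Delta\theta_{1j}^n=\tfrac{3}{2}\epsilon_n$ and hence $\beta_n^{(1)}\asymp 1/\epsilon_n\to\infty$. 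The paper handles this by a further renormalization: one first shows (as you do) that not all coefficients can tend to $0$, then sets $m_n$ equal to the maximum absolute value among them, so $m_n\not\to 0$; dividing through by $m_n$ forces every coefficient into $[-1,1]$ with at least one of modulus $1$, and \emph{then} one passes to a subsequential limit and applies Fatou. Your non-triviality claim is the right ingredient, but the boundedness comes from this extra normalization, not from $D_n$ alone.

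Second, atoms of $G_n$ whose locations converge to a point outside $\{\theta_i^*\}_{i=1}^{k_*}$ cannot be swept into the Taylor remainder. Their weights $p^n$ tend to $0$, but $p^n$ enters the weight-discrepancy part of $W_2^2(G_n,G_*)$ at first order, so $p^n/D_n$ can remain of order one. The paper's fix is to enlarge the set of expansion centers to $\{\theta_i^*\}_{i=1}^{k_*+\bar l}$ with $p_i^*=0$ for $i>k_*$ (so that every atom of $G_n$ is assigned to some center), carry out the second-order expansion around all $k_*+\bar l\le K$ points, and then invoke distinguishability of $(f,K)$ from $h_0$ with this possibly larger---but still at most $K$---collection of distinct parameters. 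With these two adjustments your argument goes through and coincides with the paper's.
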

\vspace{-0.3 em}
Thanks to the distinguishability up to second order, no matter how large the number of over-fitted components $K$ is, we always get the $\overline{W}_{2}$ lower bound for the total variation distances. {  Proof of this theorem shares the same spirit with what of Theorem~\ref{theorem:distinguish_exact_specified_point_wise}. The difference here is when we overfit $G_*$ with some $\hat{G}$, there are some atoms of $\hat{G}$ that converges to the same atom of $G_*$, which requires us to do Taylor expansion up to second order and explain the higher order of Wasserstein distance here.} 
Next, we relax the assumption of Theorem~\ref{theorem:distinguish_over_specified_point_wise} by working on the setting where $f$ is not second order identifiable. This is an instance of the so-called \emph{weakly identifiable} setting ---
One popular example of weakly identifiable $f$ is location-scale Gaussian distribution, which admits the partial differential equation (PDE) structure $\dfrac{\partial^2{\gaussden}}{\partial{\mu^2}} (x| \mu, \Sigma) = 2 \dfrac{\partial{\gaussden}}{\partial{\Sigma}} (x| \mu, \Sigma),$
for all $x \in \Rspace^{d}$ where $\gaussden(x| \mu, \Sigma)$ stands for location-scale Gaussian density function with location $\mu$ and covariance $\Sigma$. In order to illustrate the result of our bound for that weak identifiability setting of $f$, we specifically consider $f$ to be location-scale Gaussian distribution. In this case, the parameter space $\Theta$ is a compact subset of $\mathbb{R}^d \times S_d^{++}$, where $S_d^{++}$ is the set of positive definite and symmetric matrices in $\mathbb{R}^{d\times d}$ equipped with the usual Frobenius norm. To put our result in context, we shall adopt a notion used in analyzing the convergence rate of parameter estimation in location-scale Gaussian mixtures in~\cite{Ho-Nguyen-Ann-16}. For any $k \geq 1$, let $\overline{r}(k)$ be the minimum value of $r$ such that the following system of polynomial equations:
\vspace{-0.1cm}
\begin{eqnarray}
\sum \limits_{j=1}^{k+1} \sum \limits_{n_{1}, n_{2}} \dfrac{c_{j}^{2}a_{j}^{n_{1}}b_{j}^{n_{2}}}{n_{1}!n_{2}!} = 0 \ \text{for each} \ \alpha=1,\ldots,r, \label{eqn:system_polynomial_Gaussian_first}
\end{eqnarray}
\vspace{-0.05cm}
does not have any nontrivial solution for the unknown variables $(a_{j},b_{j},c_{j})_{j=1}^{k+1}$, where the ranges of $n_{1}$ and $n_{2}$ in the second sum consist of all natural pairs satisfying the equation $n_{1}+2n_{2}=\alpha$. A solution to the above system is considered \textit{nontrivial} if all of variables $c_{j}$ are non-zeroes, while at least one of the $a_{j}$ is non-zero. Some examples of known values of $\overline{r}$ are $\overline{r}(1) = 4$ and $\overline{r}(2) = 6$, and $\overline{r}(k) \geq 7$ for all $k\geq 3$. Using this notion, we can characterize the convergence of parameters of model \eqref{eq:true_model} for the location-scale Gaussian family via the following theorem for inverse bounds.
\begin{theorem}\label{theorem:distinguish_over_specified_weakly_ident_point_wise}
    Assume that $G^* \in \mathcal{E}_{k^*, c_0}(\Theta)$, and $k_{*}$ is unknown and strictly upper bounded by a given $K$. In addition, $f$ is location-scale Gaussian distribution and $(f, K)$ with varied location, fixed variance parameters is distinguishable in any order from $h_{0}$. Then, for any $G \in \Ocal_{K, c_0}( \Theta)$, there exist positive constant $C_{1}$ and $C_{2}$ depending only on $\lambda^{*}, G_{*}, h_{0}, \Theta$ such that the following holds:
    
(a) When $\lambda^{*} = 0$, then
	$V(p_{\lambda^{*} G_{*}}, p_{\lambda G}) \geq C_{1} \lambda$.

(b) When $\lambda^{*} \in (0, 1]$, then
$V(p_{\lambda^{*} G_{*}}, p_{\lambda G}) \geq C_{2} \overline{W}_{\overline{r}(K-k_*)}(\lambda G, \lambda^{*} G_{*}).$
\end{theorem}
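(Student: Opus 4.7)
The plan is to mirror the contradiction/Fatou scheme used for Theorems~\ref{theorem:distinguish_exact_specified_point_wise} and~\ref{theorem:distinguish_over_specified_point_wise}, but now I must perform a much higher-order Taylor expansion and exploit the Gaussian heat-equation identity $\partial^2 f/\partial\mu^2 = 2\,\partial f/\partial\Sigma$ to reveal the polynomial system~\eqref{eqn:system_polynomial_Gaussian_first}. Part (a) is handled first and quickly: with $\lambda^{*}=0$ we have $p_{\lambda^{*}G_{*}}=h_{0}$, and supposing $V(h_{0},p_{\lambda_{n}G_{n}})/\lambda_{n}\to 0$ along some sequence, compactness of $\Ocal_{K,c_{0}}(\Theta)$ plus Fatou yields $h_{0}-F(\cdot,G')=0$ for a subsequential limit $G'$, contradicting distinguishability of $(f,K)$ from $h_{0}$ in any order.

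For part (b) I would argue by contradiction: assume sequences $\lambda_{n}\in(0,1]$ and $G_{n}\in\Ocal_{K,c_{0}}(\Theta)$ with $V(p_{\lambda^{*}G_{*}},p_{\lambda_{n}G_{n}})/\overline{W}_{\overline{r}(K-k_{*})}(\lambda_{n}G_{n},\lambda^{*}G_{*})\to 0$. By compactness, extract a subsequence along which $\lambda_{n}\to\lambda'$ and $G_{n}\to G'$ in $W_{\overline{r}(K-k_{*})}$. If $(\lambda',G')\neq(\lambda^{*},G_{*})$, the denominator stays bounded away from $0$, forcing $p_{\lambda'G'}=p_{\lambda^{*}G_{*}}$ pointwise; since $G'\in\Ocal_{K,c_{0}}(\Theta)$ and $G^{*}\in\Ecal_{k^{*},c_{0}}(\Theta)$, distinguishability of $(f,K)$ in any order from $h_{0}$ (applied to the difference) then gives $(\lambda',G')=(\lambda^{*},G_{*})$, a contradiction. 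Hence $\lambda_{n}\to\lambda^{*}$ and $G_{n}\to G_{*}$.

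The heart of the argument is the convergent case. Partition the atoms $\{(\mu_{nj},\Sigma_{nj})\}$ of $G_{n}$ into Voronoi clusters $S_{1},\ldots,S_{k_{*}}$ around the atoms $(\mu_{i}^{*},\Sigma_{i}^{*})$ of $G_{*}$; since $G_{n}$ has at most $K$ atoms, at most $K-k_{*}$ of the $|S_{i}|-1$ atoms are ``extra'', which is exactly why the parameter $K-k_{*}$ enters $\overline{r}(K-k_{*})$. For each $j\in S_{i}$, Taylor-expand $f(x\mid\mu_{nj},\Sigma_{nj})$ around $(\mu_{i}^{*},\Sigma_{i}^{*})$ up to order $\overline{r}(K-k_{*})$, then use the Gaussian PDE $\partial^{2}f/\partial\mu^{2}=2\,\partial f/\partial\Sigma$ to rewrite every $\Sigma$-derivative as a $\mu$-derivative of doubled order. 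Collecting the coefficient of each location derivative $\partial^{|\eta|}f(x\mid\mu_{i}^{*},\Sigma_{i}^{*})/\partial\mu^{\eta}$ yields a weighted sum of the monomials $c_{j}^{2}a_{j}^{n_{1}}b_{j}^{n_{2}}/(n_{1}!n_{2}!)$ appearing in~\eqref{eqn:system_polynomial_Gaussian_first}, with $a_{j},b_{j},c_{j}$ encoding the normalized location offset, scale offset, and mass of the $j$-th atom in $S_{i}$. I would then divide the numerator and denominator by $\overline{W}_{\overline{r}(K-k_{*})}(\lambda_{n}G_{n},\lambda^{*}G_{*})$ so that the rescaled coefficients remain bounded and at least one stays bounded away from zero (by the definition of $\overline{W}_{r}$ one of the rescaled $|\lambda_{n}-\lambda^{*}|$, $|p_{j}^{n}-p_{i}^{*}|$, or $\|\theta_{nj}-\theta_{i}^{*}\|^{\overline{r}(K-k_{*})}$ terms has this property).

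Applying Fatou's lemma to the ratio then produces an identity of the form
\begin{equation*}
\alpha^{(0)}h_{0}(x)+\sum_{i=1}^{k_{*}}\sum_{|\eta|\leq\overline{r}(K-k_{*})}\alpha^{(i)}_{\eta}\frac{\partial^{|\eta|}f}{\partial\mu^{\eta}}(x\mid\mu_{i}^{*},\Sigma_{i}^{*})=0\quad\text{a.e. }x,
\end{equation*}
where the coefficients $\alpha^{(i)}_{\eta}$ arise from a nontrivial solution $(a_{j},b_{j},c_{j})$ of the polynomial system for some $i$. The location-only distinguishability of $(f,K)$ in any order from $h_{0}$ forces $\alpha^{(0)}=0$ and $\alpha^{(i)}_{\eta}=0$ for all $i,\eta$, which by the defining property of $\overline{r}(K-k_{*})$ means the underlying $(a_{j},b_{j},c_{j})$ must be trivial, contradicting the fact that at least one rescaled coefficient was bounded away from zero. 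The main obstacle I anticipate is the bookkeeping of this Taylor expansion: matching the combinatorial indices $(n_{1},n_{2})$ with $n_{1}+2n_{2}=\alpha$ in~\eqref{eqn:system_polynomial_Gaussian_first} to the collapsed $\mu$-derivatives, and verifying that ``at least one rescaled coefficient stays away from zero'' really translates into the nontriviality clause ($c_{j}\neq 0$ for all $j$ and some $a_{j}\neq 0$), which requires a careful case analysis on whether the dominant contribution to $\overline{W}_{\overline{r}(K-k_{*})}$ comes from $|\lambda_{n}-\lambda^{*}|$, from mass shifts within a cluster, or from parameter shifts.
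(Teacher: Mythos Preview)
Your approach is essentially the paper's, and all the right ingredients are present; but the logical role of the polynomial system~\eqref{eqn:system_polynomial_Gaussian_first} is inverted, and this is a genuine gap as written. The claim that ``at least one rescaled coefficient stays bounded away from zero'' does \emph{not} follow from the definition of $\overline{W}_{\overline{r}(K-k_{*})}$: that definition only tells you that one of the raw quantities $|\lambda_{n}-\lambda^{*}|$, $|\Delta p_{i\cdot}^{n}|$, or $p_{ij}^{n}\|\Delta\theta_{ij}^{n}\|^{\overline{r}}$ is comparable to the denominator. The Taylor coefficients $E_{i,\alpha}$ that multiply $\partial^{\alpha}f/\partial\mu^{\alpha}$ are sums of the form $\sum_{j\in S_{i}} p_{ij}^{n}(\Delta\mu_{ij}^{n})^{n_{1}}(\Delta v_{ij}^{n})^{n_{2}}$, and these can cancel even when the individual offsets are not small---this is precisely the weak-identifiability phenomenon. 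The polynomial system must therefore be invoked \emph{before} Fatou, as a Step~2: suppose all $E_{i,\alpha}/\overline{W}_{\overline{r}}\to 0$; then using the first observation you locate a cluster $i^{*}$ where $\sum_{j} p_{i^{*}j}^{n}(\|\Delta\mu\|^{\overline{r}}+\|\Delta v\|^{\overline{r}})/\overline{W}_{\overline{r}}\not\to 0$; renormalize inside that cluster by $\overline{p}_{n}=\max_{j}p_{i^{*}j}^{n}$ and $\overline{M}_{n}=\max_{j}\{|\Delta\mu_{i^{*}j}^{n}|,|\Delta v_{i^{*}j}^{n}|^{1/2}\}$; the limits $(a_{j},b_{j},c_{j})$ then form a nontrivial solution to~\eqref{eqn:system_polynomial_Gaussian_first} with at most $K-k_{*}+1$ terms (here the lower bound $p_{ij}^{n}\geq c_{0}$ is what guarantees all $c_{j}\neq 0$), contradicting the definition of $\overline{r}(K-k_{*})$. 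Only after this contradiction establishes that $\max_{i,\alpha}|E_{i,\alpha}|\not\to 0$ do you renormalize by that maximum, apply Fatou, and invoke distinguishability to finish.

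One smaller point: after the heat equation substitution, the location derivatives run up to order $2\overline{r}(K-k_{*})$, not $\overline{r}(K-k_{*})$, since each $\Sigma$-derivative doubles the $\mu$-order. This is harmless because distinguishability is assumed in \emph{any} order, but your Fatou identity should read $\sum_{|\eta|\leq 2\overline{r}(K-k_{*})}$.
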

{  The proof technique of this result involves doing Taylor expansion of both location and scale parameter up to order $\overline{r}$, then utilize the heat equation $\dfrac{\partial{\gaussden}}{\partial{\Sigma}} (x| \mu, \Sigma) = \dfrac{1}{2}\dfrac{\partial^2{\gaussden}}{\partial{\mu^2}} (x| \mu, \Sigma) $ to compress this expression into linear combination of $h_0$ and derivatives of $f(x|\mu, \Sigma)$ with respect to $\mu$ only. This allows us to use the condition in this theorem to imply a contradiction, and gives rise to Eq.~\eqref{eqn:system_polynomial_Gaussian_first}. }
\vspace{-0.4 em}
\subsection{Inverse Bounds in Partially Distinguishable Setting}
\vspace{-0.3 em}
\label{subsec:pardistinguish_point_wise}
What happens if the distinguishability condition required by Def.~\ref{definition:distinguishable} no longer holds generally? Recall in Example \ref{example:distinguishable_condition} (b) that this situation is not uncommon, specifically when
\vspace{-0.5 em}
\begin{align}
	h_{0}(x) = f(x; G_{0}) = \sum_{i = 1}^{k_{0}} p_{i}^{0} f(x| \theta_{i}^{0}), \label{eq:par_distin_h0}
\end{align}
\vspace{-0.2 em}
where $G_{0} \mydefn \sum_{i = 1}^{k_{0}} p_{i}^{0} \delta_{\theta_{i}^{0}}$. In some specific cases of this setting, in fact, we fail to attain distinguishability, and the model may not even be identifiable in the classical sense, i.e. $p_{\lambda G} = p_{\lambda^* G_*}$ does not guarantee to have $\lambda G = \lambda^* G_*$. Since $h_0$ is the pdf of a mixture distribution --- a popular choice for modeling complex forms of probability densities given its amenability to interpretation compared to black box type models --- it is of interest to study the implication of parameter estimation for the deviated components in this setting, provided that the distinguisability condition may be at least partially achieved in some suitable sense. As we shall see, our theory demands a more refined analysis. To facilitate the presentation, denote $\overlapset \mydefn \bigr\{1 \leq i \leq k_{*}: \theta_{i}^{*} \in \{\theta_{1}^{0}, \ldots, \theta_{k_{0}}^{0} \} \bigr\}$. Also, set $\bar{k} \mydefn | \overlapset|$, which stands for the cardinal of the set $\overlapset$. Our results will be divided into three separate regimes of $\bar{k}$ and $\lambda^{*}$: (i) $\lambda^{*} = 0$, (ii) $\bar{k} < k_{0}$ and $\lambda^{*} \in (0, 1]$, and (iii) $\bar{k} = k_{0}$ and $\lambda^{*} \in (0, 1]$. We only choose to present results of the second regime (ii) in the main text because of { limited space} and because of its representativeness as it shows all the intriguing behaviours of the model in this partially distinguishable setting.
The first and third regime  are deferred to Appendix~\ref{sec:additional_results}.

\vspace{-0.3 em}
\subsubsection{Regime B: $\bar{k} < k_{0}$ and $\lambda^{*} \in (0, 1]$}
\vspace{-0.3 em}
\label{subsec:pardistinguish_point_wise_unequal}
First, we consider the exactly-specified setting of model~\eqref{eq:true_model}, namely, $k_{*}$ is known. When $\bar{k} < k_{0}$, we can check that we still have dishtinguishability of $h_0$ and linear combinations of $\{f(x|\theta_i^*)\}_{i=1}^{k_*}$ and its derivatives. Therefore, as long as $f$ is first order identifiable, one can invoke the proof of Theorem~\ref{theorem:distinguish_exact_specified_point_wise} to establish the same lower bound  $V(p_{\lambda G},p_{\lambda^{*} G_{*}})$ in terms of $\overline{W}_r(\lambda G,\lambda^{*}G_{*})$ for some $r\geq 1$. Thus, our focus in this subsection is the settings when $k_{*}$ is unknown.

\textbf{Over-fitted setting with strongly identifiable $f$.}
Moving to the over-fitted settings of model setup~\eqref{eq:true_model}, i.e., $k_{*}$ is unknown and strictly upper bounded by a given $K$, as long as $K \geq k_{0}$, $(f, K)$ is not distinguishable from $\sumfun$. Therefore, the results of Theorem~\ref{theorem:distinguish_exact_specified_point_wise} are not always applicable to the setting when $K \geq k_{0}$. Besides, in the over-fitted setting, the identifiability of model~\eqref{eq:true_model} no longer holds. Indeed, for any $\lambda > \lambda^*$, if we take
\begin{eqnarray}\label{eq:pathology-case}
\overline{G}_{*}(\lambda) = \left(1-\lambda^{*}/\lambda \right)G_0 + (\lambda^{*}/\lambda) G_*,
\end{eqnarray}
then $p_{\lambda^* G_*} = p_{\lambda \overline{G}_{*}(\lambda)}$. 
We present this pathological behavior in the following result.
\begin{theorem}
\label{theorem:pardistinguish_point_wise_overspec_strong_iden}
Assume that $\sumfun$ takes the form~\eqref{eq:par_distin_h0} and $\bar{k} < k_{0}$. Besides that, $K \geq k_{0}$ and $f$ is second order identifiable. Then, for any $G \in \Ocal_{K}( \Theta)$, there exist positive constants $C_{1}$ and $C_{2}$ depending only on $\lambda^{*}, G_{*}, \sumfun, \Theta$ such that the following hold:

(a) If $K \leq k_{*} + k_{0} - \bar{k} - 1$, then $V(p_{\lambda^{*}, G_{*}}, p_{\lambda, G}) \geq C_{1} \overline{W}_{2}(\lambda G, \lambda^{*} G_{*})$,

(b) If $K \geq k_{*} + k_{0} - \bar{k}$, then
\begin{align*}
 V(p_{\lambda^{*}, G_{*}}, p_{\lambda, G}) \geq C_{2} \left( 1_{\{\lambda \leq \lambda^{*}\}} \overline{W}_{2}(\lambda G, \lambda^{*} G_{*}) +  1_{\{\lambda > \lambda^{*}\}} W_{2}^2(G, \overline{G}_{*}(\lambda))\right).
\end{align*}

(c) As a special case, if $K = k_{*} + k_{0} - \bar{k}$, we have $$V(p_{\lambda^{*}, G_{*}}, p_{\lambda, G})  \geq C_{3}  1_{\{\lambda > \lambda^{*} + \delta\}}  W_{1}(G, \overline{G}_{*}(\lambda)),$$ for all $\delta > 0$, where $C_3$ depends on $\lambda^{*}, G_{*}, \sumfun, \Theta, \delta$.
\end{theorem}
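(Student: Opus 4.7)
The plan is to adapt the contradiction-via-Taylor-and-Fatou argument sketched for Theorem~\ref{theorem:distinguish_exact_specified_point_wise} (and its second-order analogue used for Theorem~\ref{theorem:distinguish_over_specified_point_wise}) to the present partially distinguishable setting. The crucial new idea is to exploit the mixture representation $h_{0}(x) = \sum_{j=1}^{k_{0}} p_{j}^{0} f(x|\theta_{j}^{0})$ to recover an \emph{effective} identifiability: any non-trivial linear combination of $h_{0}$ and the derivatives $\partial^{\eta} f(\cdot|\theta_{i}^{*})$ produced by a Fatou step can be unfolded into a single combination of $f$ and its derivatives evaluated on the union $\{\theta_{i}^{*}\} \cup \{\theta_{j}^{0}\}$, consisting of $k_{*}+k_{0}-\bar{k}$ distinct atoms, to which second-order identifiability of $f$ applies directly. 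I will also use repeatedly the elementary observation that, by first-order identifiability of $f$, the density equation $p_{\lambda G} = p_{\lambda^{*} G_{*}}$ is equivalent to the measure equation $\lambda G + (\lambda^{*}-\lambda) G_{0} = \lambda^{*} G_{*}$; this pins down exactly when the pathology $\overline{G}_{*}(\lambda)$ is feasible.

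For part (a), the component-count restriction $K \leq k_{*} + k_{0} - \bar{k} - 1$ prevents $\overline{G}_{*}(\lambda)$ (which carries $k_{*}+k_{0}-\bar{k}$ distinct atoms when $\lambda > \lambda^{*}$) from lying in $\Ocal_{K}(\Theta)$, and when $\lambda < \lambda^{*}$ the identity $\lambda G = \lambda^{*} G_{*} - (\lambda^{*}-\lambda) G_{0}$ fails to produce a nonnegative measure because $\bar{k} < k_{0}$. Consequently $p_{\lambda G} = p_{\lambda^{*} G_{*}}$ forces $(\lambda, G) = (\lambda^{*}, G_{*})$. Assuming the claim of (a) fails, I extract by compactness a sequence $(\lambda_{n}, G_{n}) \to (\lambda^{*}, G_{*})$ with $V/\overline{W}_{2} \to 0$, perform a second-order Taylor expansion of $p_{\lambda_{n} G_{n}} - p_{\lambda^{*} G_{*}}$ around the atoms of $G_{*}$ (second order is needed because several atoms of $G_{n}$ may collapse onto a single atom of $G_{*}$), normalize by $\overline{W}_{2}(\lambda_{n} G_{n}, \lambda^{*} G_{*})$, and invoke Fatou to obtain a nontrivial a.e.\ vanishing combination of $h_{0}$ and $\partial^{\eta} f(\cdot|\theta_{i}^{*})$, $|\eta| \leq 2$. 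Unfolding $h_{0}$ and applying second-order identifiability of $f$ at the $k_{*}+k_{0}-\bar{k}$ distinct atoms forces all coefficients to vanish, giving the desired contradiction.

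For part (b), along a contradictory sequence I split into the subcases $\lambda_{n} \leq \lambda^{*}$ and $\lambda_{n} > \lambda^{*}$. The former is handled exactly as in part (a) because the identity-based analysis (using $\bar{k}<k_{0}$) again rules out the pathology and yields the $\overline{W}_{2}$ bound. For the latter, I exploit the pathology identity $p_{\lambda^{*} G_{*}} = p_{\lambda_{n} \overline{G}_{*}(\lambda_{n})}$ to reduce to a pure-mixture problem: $V(p_{\lambda^{*} G_{*}}, p_{\lambda_{n} G_{n}}) = \lambda_{n}\, V\big(F(\cdot, G_{n}), F(\cdot, \overline{G}_{*}(\lambda_{n}))\big)$. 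Since $G_{n} \in \Ocal_{K}(\Theta)$, $\overline{G}_{*}(\lambda_{n})$ has at most $k_{*}+k_{0}-\bar{k} \leq K$ atoms, and $f$ is second-order identifiable, the standard over-fitted Wasserstein inverse bound for the pure mixture $F$ (i.e., the $h_{0} \equiv 0$ instance of Theorem~\ref{theorem:distinguish_over_specified_point_wise}) delivers $V \gtrsim W_{2}^{2}(G_{n}, \overline{G}_{*}(\lambda_{n}))$, as claimed.

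For part (c), when $K = k_{*} + k_{0} - \bar{k}$ and $\lambda > \lambda^{*}$ the target $\overline{G}_{*}(\lambda)$ has exactly $K$ distinct atoms, so the reduction in part (b) places us in the \emph{exact-fitted} pure-mixture regime; first-order identifiability of $f$ then gives the sharper bound $V \gtrsim W_{1}(G, \overline{G}_{*}(\lambda))$. The restriction $\lambda > \lambda^{*}+\delta$ keeps the weight $1 - \lambda^{*}/\lambda \geq \delta/(\lambda^{*}+\delta)$ on the $G_{0}$-atoms of $\overline{G}_{*}(\lambda)$ bounded away from zero, which is what allows the exact-fitted inverse constant to be selected uniformly in $\lambda$. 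The main technical obstacle throughout the plan is precisely this uniformity: as $\lambda \downarrow \lambda^{*}$ in parts (b) and (c), $\overline{G}_{*}(\lambda)$ degenerates onto $G_{*}$, the effective atom-count drops from $K$ to $k_{*}$, and the over- or exact-fitted mixture constant threatens to collapse. In part (b) this is absorbed inside the contradiction argument, since any subsequence with $\lambda_{n} \downarrow \lambda^{*}$ is routed through the $\lambda_{n} \leq \lambda^{*}$ branch; in part (c) the $\delta$-shift bypasses the issue at the price of forfeiting information for $\lambda$ too close to $\lambda^{*}$.
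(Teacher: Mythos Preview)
Your proposal follows essentially the same route as the paper: reduce to local and global claims, use the representation $p_{\lambda_n G_n}-p_{\lambda^* G_*}=(\lambda^*-\lambda_n)\sum_{j>\bar k}p_j^0 f(\cdot|\theta_j^0)+\lambda_n F(\cdot,G_n)-\sum_i \bar p_i^* f(\cdot|\theta_i^*)$, and for $\lambda_n>\lambda^*$ rewrite this as $\lambda_n\bigl(F(\cdot,G_n)-F(\cdot,\overline G_*(\lambda_n))\bigr)$ to reduce to a pure-mixture inverse bound. Parts (a) and (c) are correct as outlined; your ``unfold $h_0$'' idea is exactly the paper's observation that, since $\bar k<k_0$, some atom $\theta_I^0$ with $I>\bar k$ contributes a pure coefficient $p_I^0(\lambda^*-\lambda_n)$, forcing $|\lambda_n-\lambda^*|/\overline W_2\to 0$.

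There is, however, one genuine gap in part (b). Your claim that ``any subsequence with $\lambda_n\downarrow\lambda^*$ is routed through the $\lambda_n\le\lambda^*$ branch'' is wrong: a sequence with $\lambda_n>\lambda^*$ for all $n$ never enters the $\le$ branch, and the divergences $W_2^2(G_n,\overline G_*(\lambda_n))$ and $\overline W_2(\lambda_n G_n,\lambda^* G_*)$ are not comparable (take $G_n=\overline G_*(\lambda_n)$ to see one can vanish while the other is positive). So you must resolve the uniformity issue \emph{inside} the $\lambda_n>\lambda^*$ branch. The paper does this by passing to a subsequence and splitting on whether $\lambda_n\to\lambda^*$ or not. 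If $\lambda_n\to\lambda'>\lambda^*$, then $\overline G_*(\lambda_n)\to\overline G_*(\lambda')$ has exactly $k_*+k_0-\bar k$ atoms and the standard over-fitted Taylor/Fatou argument applies with this fixed target. If $\lambda_n\to\lambda^*$, then $\overline G_*(\lambda_n)\to G_*$ (the $G_0$-weights collapse), so both $G_n$ and $\overline G_*(\lambda_n)$ converge to $G_*$; one then runs the second-order Taylor expansion of $F(\cdot,G_n)-F(\cdot,\overline G_*(\lambda_n))$ around the atoms of $G_*$ and the over-fitted argument again goes through. In short, the moving target does not break the method, but it must be handled by a subsequence/limit argument within Case $\lambda_n>\lambda^*$, not by re-routing to the other case.
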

As we can see, the magnitude of $\lambda$ compared to $\lambda^*$ will decide the solution of $(\lambda, G)$ to the identifiable equation $p_{\lambda G} = p_{\lambda^* G^*}$, therefore lead to different lower bounds such in part (b) of the theorem. In particular, if $\lambda \leq \lambda^*$, the solution is $(\lambda, G) = (\lambda^*, G_*)$, and for any $\lambda > \lambda^*$, the solution is $G = \overline{G}_{*}(\lambda)$ given in Eq. \eqref{eq:pathology-case}. Specifically, when $\lambda$ is strictly larger than $\lambda^*$ by some amount $\delta > 0$, then the latter case is well separated from the former, and we have an exact-fitted result when $K = k_0 + k_* - \overline{k}$. 

\vspace{-0.3 em}

\section{Experiments}\label{sec:experiments}
We now would like to  demonstrate the convergence rates in Section~\ref{sec:conv-rate-density-estimation} via two synthetic experiments: one for distinguishable setting and one for partially distinguishable setting. For the partially distinguishable one, the experiments are in Appendix~\ref{sec:add-exp}. 

\textbf{Distinguishable setting.} We conduct an experiment where the original data distribution comes from an uniform distribution on a curve (half circle) in $\mathbb{R}^2$ convoluted with Gaussian noises (red curve and blue points in Fig. \ref{fig:distinguishable-strong-ident-result}(a)), and train a Normalizing Flow  neural network \cite{nflows} (Masked Autoregressive architecture) with 5 layers to get a good density estimation $h_0$ for this dataset. Then we assume that there are new data coming in, and the original distribution $h_0$ is deviated by a mixture of distributions in the location Gaussian family $f(x|\theta)$. So the true generating density now is
\vspace{-0.3 em}
\begin{equation}\label{eq:simulation-true-model}
    p_{\lambda^* G_*}(x) = (1-\lambda^*) h_0(x) + \lambda^* \sum_{i=1}^{3} p_i^* f(x|\theta_i^*),
\end{equation}
where $\lambda^* = 0.5, G_* = \sum_
{i=1}^{3} p_i^* \delta_{\theta_i^*}$, where $p_1^* = 0.3, p_2^* = 0.3, p_3^* = 0.4, \theta_1^* = (-0.7, 1.5), \theta_2^* = (0.1, 2.0), \theta_3^* = (1.0, 1.5)$. Samples from the deviated component are green points in Fig. \ref{fig:distinguishable-strong-ident-result}(a). It can be seen from Proposition~\ref{theorem:example-distinguishable-Gaussian}(a) that $h_0$ is distinguishable with family $f$. For each $n$, we simulate $n$ data points from true model~\eqref{eq:simulation-true-model}, estimate $\hat{\lambda}_n, \hat{G}_n$ by the EM algorithm (it is possible because Normalizing Flows provides exact density computation), and measure its convergence to the true $\lambda^{*}, G_{*}$. 
We conduct 16 replications for each sample size. The average error estimations with a 75\% error bar can be seen in Fig.~\ref{fig:distinguishable-strong-ident-result}. The $W_1$ error in the exact-fitted case is of order $(\log(n)/n)^{1/2}$ and $W_2$ error in the over-fitted case is of order $(\log(n)/n)^{1/4}$. Meanwhile, thanks to the distinguishability, the estimation errors in both cases of $\lambda$ are all of the order $(\log(n)/n)^{1/2}$. These simulation results are matched with the theoretical results found in Theorem~\ref{theorem:distinguish_exact_specified_point_wise} and Theorem~\ref{theorem:distinguish_over_specified_point_wise}. From the result, we see that the deviating mixture model successfully learns the deviated components and reuses the pre-trained black box model $h_0$, which helps to reduce computational costs.
    
\begin{figure}[ht]
      \centering
      \subcaptionbox*{\scriptsize (a) Synthetic data set \par}{\includegraphics[width = 0.32\textwidth]{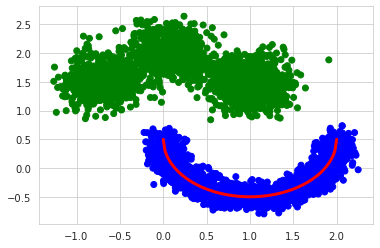}}
      \subcaptionbox*{\scriptsize (b) Convergence rates of $\hat{\lambda}_n$ \par}{\includegraphics[width = 0.32\textwidth]{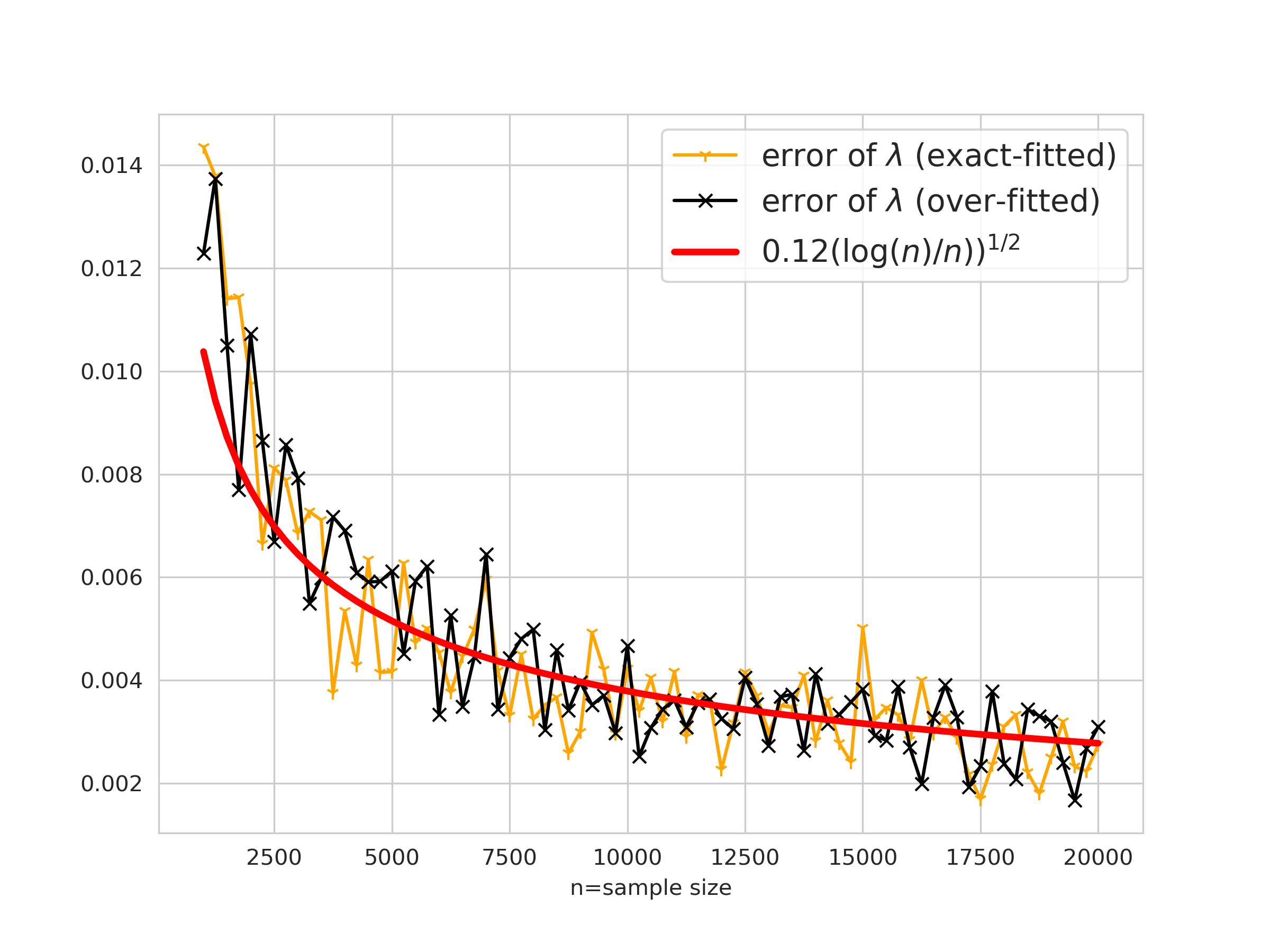}}
      \subcaptionbox*{\scriptsize (c) Convergence rates of $W(\hat{G}_n, G_{*})$ \par}{\includegraphics[width = 0.32\textwidth]{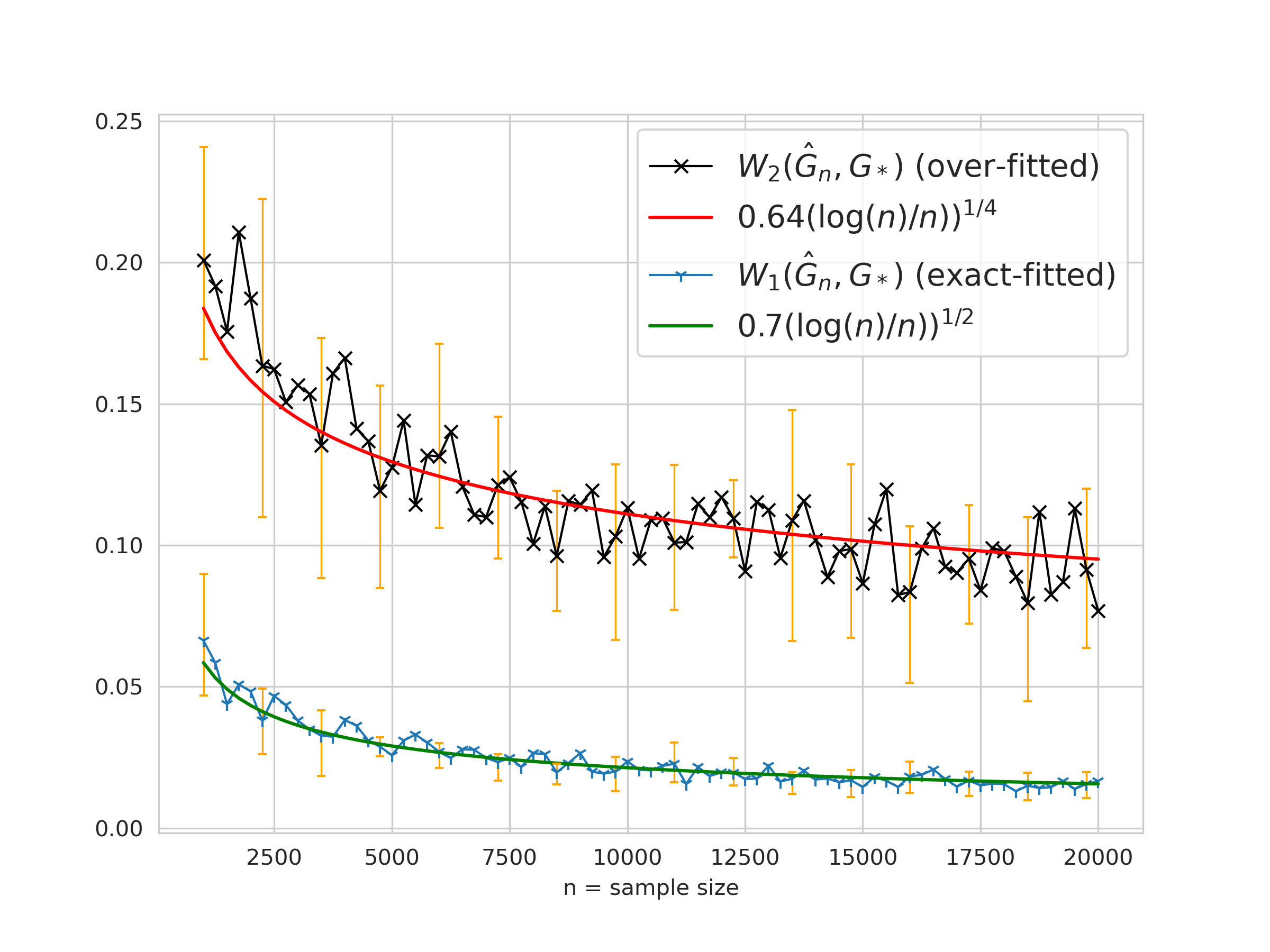}}
      \caption{Convergence rates for parameter estimation in the distinguishable case.}\label{fig:distinguishable-strong-ident-result}
    \end{figure}

\section{Discussion}
In this work, we have presented the deviating mixture model and studied its parameter learning rates under MLE procedure. With a novel notion of distinguishability between distributions, we are able to prove inverse bounds for our model under several distinguishability settings, which allow us to deduce the parameter learning rates from the convergence rate of density functions. The distinguishability condition is shown to be satisfied for multiple families of distributions including those that come from black box models. 

We now discuss practical implication of the theory. The deviating mixture model is designed to capture the deviated mixture components, and learning its parameters can reveal meaningful information about subpopulations in the data. When there is distinguishability in the model, our theory implies that we can learn the deviated proportion with the parametric rate and deviated components with a rate depending on the identifiablity of $f$. However, our theory does not support employing the deviating mixture model when the existing distribution $h_0$ itself is a mixture distributions in family $f$ and possesses parameters similar to deviated part, as the learning rate can be slow, and the deviated proportion estimator may not converge to the true value. {  Asymptotically, when $h_0$ is estimated using a very complex model (eg. a wide and deep neural network) and somehow approximates a mixture of $f$, and/or the signal from deviating components is low, then the provided learning rates in the paper, while still the same with respect to sample size $n$, may deteriorate from a large multiplicative constant that depends on $h_0, \lambda^*$, and $G_*$.}

We believe that this work is the first attempt in the effort of understanding a broader class of mixture models combining with black box models, and interpreting the learned model parameters. There is room for future work going forward. From a theoretical viewpoint, one may be interested in establishing minimax lower bounds for the learning behavior of the deviating mixture model, or show uniform inverse bounds for the model when $\lambda^*$ and $G_*$ are considered as signals that will change with samples. From a modeling viewpoint, it is worthwhile to explore mixtures of black box models and develop a  suitable notion of identifiability and inverse bounds so that the learning process is efficient.         

\section*{Acknowledgements}
Nhat Ho
acknowledges support from the NSF IFML 2019844 and the NSF AI Institute for Foundations of Machine Learning.
Long Nguyen is partially supported by NSF grant DMS-2015361.

\bibliography{Dat.bib}
\bibliographystyle{plain}

\section*{Checklist}

\begin{enumerate}
\item For all authors... 
\begin{enumerate}
\item Do the main claims made in the abstract and introduction accurately
reflect the paper's contributions and scope? \answerYes{See Section 1} 
\item Did you describe the limitations of your work? \answerYes{See Section 5} 
\item Did you discuss any potential negative societal impacts of your work?
\answerNo{} 
\item Have you read the ethics review guidelines and ensured that your paper
conforms to them? \answerYes{} 
\end{enumerate}
\item If you are including theoretical results... 
\begin{enumerate}
\item Did you state the full set of assumptions of all theoretical results?
\answerYes{} 
\item Did you include complete proofs of all theoretical results? \answerYes{} 
\end{enumerate}
\item If you ran experiments... 
\begin{enumerate}
\item Did you include the code, data, and instructions needed to reproduce
the main experimental results (either in the supplemental material
or as a URL)? \answerYes{} 
\item Did you specify all the training details (e.g., data splits, hyperparameters,
how they were chosen)? \answerYes{It can be seen in the source code.} 
\item Did you report error bars (e.g., with respect to the random seed after
running experiments multiple times)? \answerYes{See Figure 1, 2, 3.} 
\item Did you include the total amount of compute and the type of resources
used (e.g., type of GPUs, internal cluster, or cloud provider)? \answerNo{The experiments are run on CPU's only.} 
\end{enumerate}
\item If you are using existing assets (e.g., code, data, models) or curating/releasing
new assets... 
\begin{enumerate}
\item If your work uses existing assets, did you cite the creators? \answerNo{We do not use any existing assets.} 
\item Did you mention the license of the assets? \answerNo{} 
\item Did you include any new assets either in the supplemental material
or as a URL? \answerNo{} 
\item Did you discuss whether and how consent was obtained from people whose
data you're using/curating? \answerNo{} 
\item Did you discuss whether the data you are using/curating contains personally
identifiable information or offensive content? \answerNo{} 
\end{enumerate}
\item If you used crowdsourcing or conducted research with human subjects... 
\begin{enumerate}
\item Did you include the full text of instructions given to participants
and screenshots, if applicable? \answerNo{We do not use crowdsourcing/conducted research with human subjects} 
\item Did you describe any potential participant risks, with links to Institutional
Review Board (IRB) approvals, if applicable? \answerNo{} 
\item Did you include the estimated hourly wage paid to participants and
the total amount spent on participant compensation? \answerNo{} 
\end{enumerate}
\end{enumerate}

\newpage
\appendix

\appendix
\begin{center}
{\bf Supplement for "Beyond Black Box Densities: Parameter Learning for the Deviated Components"}
\end{center}
In the supplementary material, we collect proofs and results deferred from the main text. Section \ref{sec:additional_results} provides remaining results for the partially distinguishable case. Section \ref{sec:add-exp} presents the simulation studies that demonstrates the results in the partially distinguishable case. 
Section \ref{sec:section_2} contains proofs of results in Section 2, and Section \ref{sec:section_3} contains proofs of Section~\ref{sec:conv-rate-density-estimation}.
\section{Additional results}
\label{sec:additional_results}
In this appendix,  we provide theory for the inverse bounds in partially distinguishable setting when $\bar{k} = k_{0}$ and $\lambda^{*} \in (0, 1]$.

\subsection{Regime A: $\lambda^{*} = 0$.}
\label{subsec:pardistinguish_point_wise_zerolambda}
\begin{theorem}
\label{theorem:pardistinguish_point_wise_zerolambda}
Assume that $\sumfun$ takes the form~\eqref{eq:par_distin_h0} and $\lambda^{*} = 0$. Then, there exist positive constants $C_{1}$ and $C_{2}$ depending only on $\sumfun, \Theta$ such that the following holds:

(a) (exact-fitted) If $f$ is first order identifiable, then for any $G\in \Ecal_{k_0}(\Theta)$
\begin{align*}
V(p_{\lambda^{*}, G_{*}}, p_{\lambda, G}) \geq C_{1} \lambda W_{1}( G, G_{0}).
\end{align*}

(b) (over-fitted) If $f$ is second order identifiable, then for any $G\in \Ocal_{K}(\Theta)$ that $K > k_0$
\begin{align*}
V(p_{\lambda^{*}, G_{*}}, p_{\lambda, G}) \geq C_{2} \lambda W_{2}^2(G, G_{0}).
\end{align*}

(c) (over-fitted and weakly identifiable) If $f$ is location-scale Gaussian distribution and we further assume that $G_* \in \Ecal_{k_*, c_0}(\Theta)$, then for any $G\in \Ocal_{K, c_0}(\Theta)$ that $K > k_0$, there exists $C_3$ depends on $h_0, \Theta_0, c_0$ such that
\begin{align*}
V(p_{\lambda^{*}, G_{*}}, p_{\lambda, G}) \geq C_{3} \lambda W_{\overline{r}(K-k_*)}^{\overline{r}(K-k_*)}(G, G_{0}).
\end{align*}
\end{theorem}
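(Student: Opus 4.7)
The plan is to exploit the linear dependence of $p_{\lambda G}$ on $\lambda$ in order to reduce all three claims to classical inverse bounds for ordinary finite mixtures from the family $f$. Because $\lambda^{*}=0$, the target density equals $h_{0}=f(\,\cdot\,;G_{0})$, so
\[
p_{\lambda G}(x)-p_{\lambda^{*}G_{*}}(x)=(1-\lambda)h_{0}(x)+\lambda f(x;G)-h_{0}(x)=\lambda\bigl(f(x;G)-f(x;G_{0})\bigr).
\]
Integrating the absolute value yields the key identity
\[
V\bigl(p_{\lambda^{*}G_{*}},\,p_{\lambda G}\bigr)\;=\;\lambda\;V\bigl(f(\,\cdot\,;G),\,f(\,\cdot\,;G_{0})\bigr),
\]
and the problem is therefore equivalent to bounding $V(f(\,\cdot\,;G),f(\,\cdot\,;G_{0}))$ from below by the appropriate Wasserstein quantity between $G$ and $G_{0}$, where $G_{0}$ now plays the role of a fixed ``true'' mixing measure in a standard finite mixture model.

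From here, each part follows by a compactness-and-contradiction argument in the spirit of the sketch for Theorem~\ref{theorem:distinguish_exact_specified_point_wise}: suppose the bound fails along a sequence $G_{n}$ with the target Wasserstein divergence tending to zero when divided into the total variation, pass to a subsequence converging in Wasserstein to some $\tilde G$, Taylor-expand $f(x;G_{n})-f(x;G_{0})$ to the order dictated by the degree of over-fitting, divide by the Wasserstein quantity, apply Fatou's lemma to get that an a.e.\ limiting linear combination of $f$ and its derivatives (evaluated at the atoms of $G_{0}$) must vanish identically, and read off a contradiction with the relevant identifiability hypothesis on $f$. Concretely, part~(a) is the exact-fitted inverse bound of~\cite{Nguyen-13} for first-order identifiable $f$, giving $V\gtrsim W_{1}(G,G_{0})$; part~(b) is the over-fitted inverse bound of~\cite{Ho-Nguyen-Ann-17,heinrich2018strong} for second-order identifiable $f$, giving $V\gtrsim W_{2}^{2}(G,G_{0})$; and part~(c) is the weakly-identifiable over-fitted Gaussian analysis of~\cite{Ho-Nguyen-Ann-16}, giving $V\gtrsim W_{\bar{r}(K-k_{*})}^{\bar{r}(K-k_{*})}(G,G_{0})$. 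Multiplying each by the factor $\lambda$ recovered from the linearization above produces the three stated inequalities.

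The main technical obstacle is the weakly-identifiable case (c). Because of the heat-equation identity $\partial \gaussden/\partial\Sigma=\tfrac12\,\partial^{2}\gaussden/\partial\mu^{2}$, scale and second-order location derivatives are linearly dependent and a naive Taylor expansion produces only trivial identities. The remedy, following~\cite{Ho-Nguyen-Ann-16}, is to re-express every scale derivative appearing in the expansion in terms of location derivatives of $f$, carry the expansion to order $\bar{r}(K-k_{*})$, and track the resulting polynomial system in the variables $(a_{j},b_{j},c_{j})$ of~\eqref{eqn:system_polynomial_Gaussian_first}. The assumption $G\in\Ocal_{K,c_{0}}(\Theta)$ and $G_{*}\in\Ecal_{k_{*},c_{0}}(\Theta)$, together with the compactness of $\Theta$, guarantees that after normalization the limiting coefficients $c_{j}$ stay bounded away from zero, so the limit system is genuinely non-trivial; the minimality in the definition of $\bar{r}$ then forces the expansion to survive up to that order, which is what produces the claimed exponent. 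Parts~(a) and~(b) avoid this polynomial book-keeping because first- and second-order identifiability of $f$ make the contradiction step immediate once Fatou's lemma is applied.
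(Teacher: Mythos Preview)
Your proposal is correct and follows essentially the same approach as the paper: both exploit the identity $p_{\lambda G}-p_{\lambda^{*}G_{*}}=\lambda\bigl(f(\cdot;G)-f(\cdot;G_{0})\bigr)$ to factor out $\lambda$ and then invoke the classical inverse bounds for finite mixtures (the paper cites Theorems~3.1 and~3.2 of~\cite{Ho-Nguyen-EJS-16} and Proposition~2.2 of~\cite{Ho-Nguyen-Ann-16} directly, while you sketch their proofs). The extra description you give of the contradiction/Taylor/Fatou machinery is accurate but not needed once those results are cited.
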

We may also "underfit" the deviated components by imposing $G\in \Ocal_{K}(\Theta)$ such that $K < k_0$. In that case, because of having less atoms, $p_{\lambda G}$ is $K-$distinguishable with $h_0$ and the result in Theorem~\ref{theorem:distinguish_exact_specified_point_wise} applies. 

\subsection{Regime B: $\bar{k} < k_{0}$ and $\lambda^{*} \in (0, 1]$}
We recall Theorem~\ref{theorem:pardistinguish_point_wise_overspec_strong_iden} in the main text, together with a similar theorem on weak identifiable family (Theorem~\ref{theorem:pardistinguish_point_wise_overspec_weak_iden}), and then provide some additional comments on the results.
\begin{theorem}
Assume that $\sumfun$ takes the form~\eqref{eq:par_distin_h0} and $\bar{k} < k_{0}$. Besides that, $K \geq k_{0}$ and $f$ is second order identifiable. Then, for any $G \in \Ocal_{K}( \Theta)$, there exist positive constants $C_{1}$ and $C_{2}$ depending only on $\lambda^{*}, G_{*}, \sumfun, \Theta$ such that the following hold:

(a) If $K \leq k_{*} + k_{0} - \bar{k} - 1$, then $V(p_{\lambda^{*}, G_{*}}, p_{\lambda, G}) \geq C_{1} \overline{W}_{2}(\lambda G, \lambda^{*} G_{*})$,

(b) If $K \geq k_{*} + k_{0} - \bar{k}$, then
\begin{align*}
 V(p_{\lambda^{*}, G_{*}}, p_{\lambda, G}) \geq C_{2} \left( 1_{\{\lambda \leq \lambda^{*}\}} \overline{W}_{2}(\lambda G, \lambda^{*} G_{*}) +  1_{\{\lambda > \lambda^{*}\}} W_{2}^2(G, \overline{G}_{*}(\lambda))\right).
\end{align*}

(c) As a special case, if $K = k_{*} + k_{0} - \bar{k}$, we have $$V(p_{\lambda^{*}, G_{*}}, p_{\lambda, G})  \geq C_{3}  1_{\{\lambda > \lambda^{*} + \delta\}}  W_{1}(G, \overline{G}_{*}(\lambda)),$$ for all $\delta > 0$, where $C_3$ depends on $\lambda^{*}, G_{*}, \sumfun, \Theta, \delta$.
\end{theorem}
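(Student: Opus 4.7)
The plan is to reduce everything to a pure mixture-identifiability problem for the family $f$ by exploiting that, in this regime, $h_0 = F(\cdot, G_0)$ is itself a mixture from $f$. Concretely, since $F$ is linear in the mixing measure,
\[
p_{\lambda G}(x) = (1-\lambda) F(x, G_0) + \lambda F(x, G) = F(x, H_{\lambda, G}), \qquad H_{\lambda, G} := (1-\lambda) G_0 + \lambda G,
\]
and analogously $p_{\lambda^{*} G_{*}} = F(\cdot, H^{*})$ with $H^{*} := (1-\lambda^{*}) G_0 + \lambda^{*} G_{*}$. The target $H^{*}$ is a probability measure with exactly $k_{*} + k_0 - \bar{k}$ distinct atoms, while $H_{\lambda, G}$ has at most $K + k_0$ atoms. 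Because $f$ is second-order identifiable, the over-fitted inverse bound that drives the proof of Theorem~\ref{theorem:distinguish_over_specified_point_wise}, applied at the level of mixtures of $f$ alone, gives $V(p_{\lambda G}, p_{\lambda^{*} G_{*}}) \gtrsim W_2^2(H_{\lambda, G}, H^{*})$. What is left is to translate $W_2^2(H_{\lambda, G}, H^{*})$ into the several right-hand sides of the theorem, which amounts to describing all branches of $(\lambda, G)$ solving $H_{\lambda, G} = H^{*}$ and then running the contradiction/Fatou argument around each.

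Rearranging $H_{\lambda, G} = H^{*}$ gives $\lambda G = \lambda^{*} G_{*} + (\lambda - \lambda^{*}) G_0$. The nonnegativity and unit-mass constraints on $G$ leave exactly two branches: the trivial branch $(\lambda, G) = (\lambda^{*}, G_{*})$, which is the only one compatible with $\lambda \leq \lambda^{*}$, and the pathological branch $(\lambda, G) = (\lambda, \overline{G}_{*}(\lambda))$ for $\lambda > \lambda^{*}$, where $\overline{G}_{*}(\lambda)$ has precisely $k_{*} + k_0 - \bar{k}$ atoms. In regime (a), $K \leq k_{*} + k_0 - \bar{k} - 1$ forces $\overline{G}_{*}(\lambda) \notin \Ocal_K(\Theta)$, so only the trivial branch is feasible, and the standard contradiction/Fatou argument sketched for Theorem~\ref{theorem:distinguish_exact_specified_point_wise}, run on the $H$-side and pulled back through the Lipschitz map $(\lambda, G) \mapsto H_{\lambda, G}$, yields $V \gtrsim \overline{W}_2(\lambda G, \lambda^{*} G_{*})$. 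In regime (b), the pathological branch becomes feasible; on $\{\lambda \leq \lambda^{*}\}$ the argument of part (a) still produces the $\overline{W}_2$ piece, while on $\{\lambda > \lambda^{*}\}$ the same over-fitted $W_2^2$ bound recentered around $\overline{G}_{*}(\lambda)$ produces the $W_2^2(G, \overline{G}_{*}(\lambda))$ piece, and the two local bounds are patched via the indicators in the statement.

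For part (c), the additional hypothesis $K = k_{*} + k_0 - \bar{k}$ makes the pathological branch exactly-fitted in the $G$-space (both $G$ and $\overline{G}_{*}(\lambda)$ have exactly $K$ atoms), and the separation $\lambda > \lambda^{*} + \delta$ keeps this branch uniformly away from the trivial branch, so a first-order rather than second-order Taylor expansion closes the contradiction, improving the rate from $W_2^2$ to $W_1$, with $C_3$ depending on $\delta$ through the uniform lower bound on $\lambda - \lambda^{*}$. The main obstacle throughout is uniformity of constants as $\lambda \downarrow \lambda^{*}$, where the trivial and pathological branches collide and the $W_2^2$ bounds recentered around each branch could a priori degenerate. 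I would handle this by a two-scale compactness split of any putative violating sequence $(\lambda_n, G_n)$: if $|\lambda_n - \lambda^{*}|$ stays bounded below, subsequential compactness forces convergence to exactly one of the two branches and the usual argument applies; if $|\lambda_n - \lambda^{*}| \to 0$, a careful uniform Taylor expansion of $F(\cdot, H_{\lambda_n, G_n})$ around $H^{*}$ identifies the coefficient of the degenerate direction as precisely the $\overline{W}_2$ functional, so that the two pieces in part (b) combine into a single coherent lower bound rather than competing with each other near the collision.
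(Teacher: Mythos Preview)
Your high-level reduction to a pure $f$-mixture via $H_{\lambda,G}=(1-\lambda)G_0+\lambda G$ and your identification of the two solution branches of $H_{\lambda,G}=H^*$ are exactly the paper's starting point. The gap is in the route you take after that. First, ``pulled back through the Lipschitz map $(\lambda,G)\mapsto H_{\lambda,G}$'' points the wrong way: Lipschitz continuity of this map upper-bounds $W_2(H_{\lambda,G},H^*)$ in terms of $(\lambda,G)$-distances, whereas you need a \emph{lower} bound. Second, and more seriously, the $H$-level over-fitted bound $V\gtrsim W_2^2(H_{\lambda,G},H^*)$ is too weak to yield part~(c). At the $H$-level the measure $H_{\lambda,G}$ always carries the $k_0$ atoms of $G_0$ on top of those of $G$, so you are over-fitting $H^*$ by at least $k_0$ atoms regardless of $K$; the best you can ever extract is a second-order $W_2^2$ bound. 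There is no way to upgrade $W_2^2$ to $W_1$ a posteriori, since $W_2^2\ll W_1$ as both tend to zero. Your later sentence about a ``first-order Taylor expansion'' for~(c) is the right idea but contradicts your stated $H$-route.

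The paper avoids both issues by never passing through $W_2^2(H,H^*)$. For $\lambda>\lambda^*$ it uses the exact factorization
\[
p_{\lambda G}-p_{\lambda^* G_*}=\lambda\bigl(F(\cdot\,;G)-F(\cdot\,;\overline{G}_*(\lambda))\bigr),
\]
and then applies the pure-mixture inverse bound directly at the $G$-level, where the exact/over-fitted dichotomy is visible: for $K\geq k_*+k_0-\bar{k}$ this is over-fitted and gives the $W_2^2(G,\overline{G}_*(\lambda))$ piece of~(b), while for $K=k_*+k_0-\bar{k}$ and $\lambda>\lambda^*+\delta$ both $G$ and $\overline{G}_*(\lambda)$ have exactly $k_*+k_0-\bar{k}$ atoms, so the exact-fitted first-order bound delivers $W_1$ in~(c). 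For $\lambda\leq\lambda^*$ (and for~(a)), the crucial mechanism is not a Lipschitz pullback but the hypothesis $\bar{k}<k_0$ itself: it guarantees an atom $\theta_I^0$ of $G_0$ that is not an atom of $G_*$, and in the Taylor expansion of $p_{\lambda_n G_n}-p_{\lambda^* G_*}$ around the atoms of $G_*$ the coefficient of $f(x\mid\theta_I^0)$ is exactly $(\lambda^*-\lambda_n)p_I^0$, which isolates the $|\lambda-\lambda^*|$ part of $\overline{W}_2$ directly.
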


We can view $p_{\lambda G}$ as a mixture distributions with latent mixing measures $\widehat{G} = (1 - \lambda) \sum_{i = 1}^{k_{0}} p_{i}^{0} \delta_{\theta_{i}^{0}} + \sum_{i = 1}^{K} p_{i} \delta_{\theta_{i}}$ having at most $K + k_{0}$ elements, while $p_{\lambda^* G_*}$ as a mixture with latent measure $\widehat{G}_{*} = \sum_{i = 1}^{\bar{k}}\biggr[(1 - \lambda^{*}) p_{i}^{0} + \lambda^{*} p_{i}^{*} \biggr] \delta_{\theta_{i}^{0}} +  \sum_{i = \bar{k} + 1}^{k_0}(1-\lambda^*) p_{i}^{0} \delta_{\theta_{i}^{0}} +  \sum_{i = \bar{k} + 1}^{k_*}\lambda^* p_{i}^{*} \delta_{\theta_{i}^{*}}$ having exactly $k_0 + k_{*} - \bar{k}$ elements. Because $k_0 + k_{*} - \bar{k} < K + k_0$, a direct application of Theorem 3.2 in ~\cite{Ho-Nguyen-EJS-16} gives us 
$V(p_{\lambda^{*}, G_{*}}, p_{\lambda, G}) \gtrsim W_{2}^2(\widehat{G}_{*}, \widehat{G})$.
But this bound is not as tight as what in Theorem \ref{theorem:pardistinguish_point_wise_overspec_strong_iden}(c), since $W_1 \gtrsim W_2^2$. The bounds established in the theorem are possible as we carefully explore the structure of $\widehat{G}_{*}$ and $\widehat{G}$.

\textbf{Over-fitted setting with weakly identifiable $f$.} Similar to Theorem~\ref{theorem:distinguish_over_specified_weakly_ident_point_wise}, when $f$ is the location-scale Gaussian, the weak identifiability can worsen the power of the bound in the over-fitted case. 

\begin{theorem}
\label{theorem:pardistinguish_point_wise_overspec_weak_iden} 
Assume that $\sumfun$ takes the form~\eqref{eq:par_distin_h0}. Besides that, $K \geq k_{0}$ and $f$ is location-scale Gaussian distribution. Then, for any $\lambda \in [0, 1]$ and $G \in \Ocal_{K, c_{0}}( \Theta)$ for some $c_{0} > 0$, there exist positive constants $C_{1}, C_{2}, C_3, C_4$ depending only on $\lambda^{*}, G_{*}, G_{0}, \Theta$ ($C_3$ and $C_4$ also depend on $\delta$) such that the following holds:

(a) When $K \leq k_{*} + k_{0} - \bar{k} - 1$, then $V(p_{\lambda^{*}, G_{*}}, p_{\lambda, G}) \geq  C_1 \overline{W}_{\overline{r}(K - k_{*} )}(\lambda G, \lambda^{*} G_{*})$.

(b) When $K \geq k_{*} + k_{0} - \bar{k} $, then
\begin{align*}
V(p_{\lambda^{*}, G_{*}}, p_{\lambda, G}) \geq C_{2} \biggr( 1_{\{\lambda \leq \lambda^{*}\}} \overline{W}_{\overline{r}(K - k_{*} )}(\lambda G, \lambda^{*} G_{*}) + 1_{\{\lambda > \lambda^{*}\}}  W_{\overline{r}(K - k_{*} )}^{\overline{r}(K - k_{*})}(G, \overline{G}_{*}(\lambda))\biggr).
\end{align*}
\vspace{-0.5 em}
(c) For $\delta > 0$, when $K = k_{*} + k_{0} - \bar{k}$, we have 
\begin{align*}
 V(p_{\lambda^{*}, G_{*}}, p_{\lambda, G}) \geq C_{3}  1_{\{\lambda > \lambda^{*} + \delta\}}  W_{1}(G, \overline{G}_{*}(\lambda)),
\end{align*}
and when $K > k_{*} + k_{0} - \bar{k}$, we have 
\begin{align*}
 V(p_{\lambda^{*}, G_{*}}, p_{\lambda, G}) \geq C_{4}  1_{\{\lambda > \lambda^{*} + \delta\}}  W_{\overline{r}(K - k_{0} - k_{*} + \bar{k} )}^{\overline{r}(K - k_{0}\ - k_{*} + \bar{k} )}(G, \overline{G}_{*}(\lambda)).
\end{align*}
\end{theorem}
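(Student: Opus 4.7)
The plan is to follow the contradiction-plus-Taylor-expansion template of Theorems~\ref{theorem:distinguish_over_specified_weakly_ident_point_wise} and~\ref{theorem:pardistinguish_point_wise_overspec_strong_iden}, combining the two complications now present simultaneously: (i) $h_0$ itself lies in $\mathcal{E}_{k_0}(\Theta)$, creating a family of pathological solutions $\overline{G}_*(\lambda)$ to the identifiability equation, and (ii) $f$ is only weakly identifiable, so only after invoking the heat equation $\partial_{\Sigma} f = \tfrac{1}{2}\partial^2_{\mu} f$ does the Taylor expansion yield a linearly free family in location derivatives alone. Suppose the conclusion fails; then there is a sequence $(\lambda_n, G_n) \in [0,1] \times \mathcal{O}_{K,c_0}(\Theta)$ along which $V(p_{\lambda^* G_*}, p_{\lambda_n G_n})$ is negligible compared to the right-hand side. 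Compactness of $[0,1]$ and of $\mathcal{O}_{K,c_0}(\Theta)$ under $W_1$ yields subsequential limits $\lambda_n \to \lambda'$ and $G_n \xrightarrow{W_1} G'$, and passing to the limit under $L^1$ lower semicontinuity forces $p_{\lambda' G'} = p_{\lambda^* G_*}$.

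The next step is to classify the admissible limits $(\lambda', G')$. In part (a), the constraint $K \leq k_* + k_0 - \bar{k} - 1$ rules out the pathological solution (which requires $k_0 + k_* - \bar{k}$ atoms), leaving only $(\lambda', G') = (\lambda^*, G_*)$. In part (b), a dichotomy emerges: either $\lambda' \leq \lambda^*$ and then $(\lambda', G') = (\lambda^*, G_*)$, or $\lambda' > \lambda^*$ and then $G' = \overline{G}_*(\lambda')$ as in Eq.~\eqref{eq:pathology-case}. In part (c), the $\delta$-separation $\lambda_n > \lambda^* + \delta$ forces $\lambda' \geq \lambda^* + \delta > \lambda^*$, so $G' = \overline{G}_*(\lambda')$ is the unique option.

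In each branch I partition the atoms of $G_n$ into groups converging to the atoms of $G'$ and Taylor expand $f(x|\theta_i^n)$ about the corresponding limit atom. Using the heat equation to recast all $\Sigma$-derivatives as iterated $\mu$-derivatives, the normalized difference $(p_{\lambda_n G_n} - p_{\lambda^* G_*})/\overline{W}_{\overline{r}(\cdot)}(\lambda_n G_n, \lambda^* G_*)$ reduces to a linear combination of $h_0(x)$ together with pure location-derivatives of $f$ at a pairwise-distinct collection of atoms (the atoms of $G'$, supplemented by the atoms of $G_0$ that enter through the pathological term). Fatou's lemma then forces this combination to vanish almost everywhere; by the definition of $\overline{r}$ through the polynomial system \eqref{eqn:system_polynomial_Gaussian_first} and the hypothesis that $(f, K)$ with fixed scale and varied location is distinguishable from $h_0$ at any order, every Taylor coefficient must vanish, contradicting the unit-normalization of the ratio. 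The sharper $W_1$ bound in part (c) at $K = k_* + k_0 - \bar{k}$ appears because then each atom of $\overline{G}_*(\lambda')$ receives exactly one atom of $G_n$; no merging occurs and a first-order expansion suffices, exactly as in the exact-fitted Theorem~\ref{theorem:distinguish_exact_specified_point_wise}(b).

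The principal obstacle will be the combinatorial bookkeeping in the "$\lambda' > \lambda^*$" branch of (b) and (c): one must ensure that the weights of atoms of $G_n$ merging onto shared atoms $\theta_i^0 \in \{\theta_1^*,\dots,\theta_{\bar k}^*\}$ of $G_0$ (where both deviation-from-$h_0$ mass and deviation-from-$G_*$ mass accumulate with opposite signs) do not cancel in a way that trivializes the polynomial system, and that the correct order $\overline{r}(K - m)$ is applied with $m$ the number of distinct limit atoms ($m = k_*$ in (a) and in the $\lambda' \leq \lambda^*$ subcase of (b), versus $m = k_0 + k_* - \bar{k}$ in the $\lambda' > \lambda^*$ subcases). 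Equally delicate is isolating the coefficient of $h_0$ from the Fatou limit before invoking distinguishability, since $h_0$ shares components with part of $\overline{G}_*(\lambda')$ and the two contributions must be algebraically separated using the uniform weight lower bound $c_0$.
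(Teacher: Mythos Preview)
Your overall skeleton (contradiction, compactness, grouping atoms of $G_n$ around limit atoms, Taylor expansion, heat equation to reduce to pure location derivatives, Fatou, polynomial system~\eqref{eqn:system_polynomial_Gaussian_first}) matches the paper's proof. However, there is a genuine gap in the final step.

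You write that the Fatou limit is ``a linear combination of $h_0(x)$ together with pure location-derivatives of $f$'' and that the contradiction follows from ``the hypothesis that $(f, K)$ with fixed scale and varied location is distinguishable from $h_0$ at any order.'' No such hypothesis is present in this theorem. This is precisely the \emph{partially} distinguishable setting: $h_0 = \sum_{i=1}^{k_0} p_i^0 f(\cdot|\theta_i^0)$ with $K \geq k_0$, so by Example~\ref{example:distinguishable_condition}(b), $(f,K)$ is \emph{not} distinguishable from $h_0$. You cannot isolate a coefficient of $h_0$ and kill it by distinguishability; indeed the whole point of the regime split (a)--(c) and of $\overline{G}_*(\lambda)$ is that the identifiability equation $p_{\lambda G} = p_{\lambda^* G_*}$ has nontrivial solutions here.

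The paper's fix is to never keep $h_0$ as a standalone term. One rewrites
\[
p_{\lambda_n G_n}(x) - p_{\lambda^* G_*}(x) = (\lambda^* - \lambda_n)\sum_{i=\bar{k}+1}^{k_0} p_i^0 f(x|\theta_i^0,\Sigma_i^0) + \lambda_n f(x;G_n) - \sum_{i=1}^{k_*} \bar{p}_i^* f(x|\theta_i^*,\Sigma_i^*),
\]
so that after Taylor expansion and the heat equation the Fatou limit is a linear combination of pure location derivatives of $f$ at a finite set of \emph{pairwise distinct} atoms (those of $G_*$, plus $\{\theta_i^0:\bar{k}+1\leq i\leq k_0\}$, all Gaussian). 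The contradiction then comes from the any-order identifiability of the location-Gaussian family in the sense of Definition~\ref{definition:strong_order_identifiability} (established in the proof of Theorem~\ref{theorem:example-distinguishable-Gaussian}(a)), not from distinguishability of $h_0$. Your last paragraph already senses the difficulty (``$h_0$ shares components with part of $\overline{G}_*(\lambda')$''), but the resolution is to dissolve $h_0$ into those components, not to separate it.

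A smaller point: in part (b) when $\lambda_n > \lambda^*$ but $\lambda_n \to \lambda^*$, the measure $\overline{G}_*(\lambda_n)$ degenerates back to $G_*$ with only $k_*$ atoms, so the worst-case over-fit is $K - k_*$, which is why the order is $\overline{r}(K-k_*)$ and not $\overline{r}(K - (k_0 + k_* - \bar{k}))$ as your atom-count would suggest. The sharper order $\overline{r}(K - k_0 - k_* + \bar{k})$ in part (c) is available only because the $\delta$-gap prevents this degeneration.
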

\vspace{-0.3 em}
In this theorem, we once again observe the pathological behavior of the lower bound by Wasserstein distances caused by the unidentifiability of the model~\eqref{eq:true_model}. In part (c), when there is a well separation between two region of solutions of equation $p_{\lambda G} = p_{\lambda^*G_*}$, we can improve the order of Wasserstein distances for both exact-fitted case and over-fitted case.  In application, if $\hat{\lambda}_n$ and $\hat{G}_n$ are the MLE of model~\eqref{eq:true_model} estimated by $n$ i.i.d. data, then the convergence of $(\hat{\lambda}_n, \hat{G}_n)$ depends on the limit of $\hat{\lambda}_n$ (or its subsequence) comparing to $\lambda^*$. If $K = k_0 + k_* - \overline{k}$, any subsequence of $(\hat{\lambda}_n)$ having limit greater than $\lambda^*$ can achieve $W_1$ convergence rate of the distance between $\hat{G}_n$ and $\overline{G}_{*}(\hat{\lambda}_n)$. If $K > k_0 + k_* - \overline{k}$, any subsequence of $(\hat{\lambda}_n)$ having limit greater than $\lambda^*$ can achieve $W_{\overline{r}(K - k_{0} - k_{*} + \bar{k} )}^{\overline{r}(K - k_{0} - k_{*} + \bar{k} )}$ convergence rate of the distance between $\hat{G}_n$ and $\overline{G}_{*}(\hat{\lambda}_n)$, where $\overline{r}(K - k_{0} - k_{*} + \bar{k}) $ is smaller than $\overline{r}(K-k_*)$ in part (b).

\subsection{Regime C: $\bar{k} = k_{0}$ and $\lambda^{*} \in (0, 1]$.}
\label{subsec:pardistinguish_point_wise_equal}
When $\bar{k} = k_{0}$, $(f, k_{*})$ and $(f, K)$ are not distinguishable from $h_{0}$. It indicates that the results of Theorem~\ref{theorem:distinguish_exact_specified_point_wise} are no longer applicable to this setting. If $G^* = G_0$, the setting goes back to the case $\lambda^* = 0$ and it is already considered, so from this section, we assume that $G_* \neq G_0$. To streamline the argument, we further denote a few more notations. As $\bar{k} = k_{0}$, we can rewrite $G_{*}$ as follows:
\begin{align}
	G_{*} = \sum_{i = 1}^{k_{0}} p_{i}^{*} \delta_{\theta_{i}^{0}} + \sum_{i = k_{0} + 1}^{k_{*}} p_{i}^{*} \delta_{\theta_{i}^{*}}. \label{eq:refor_G*}
\end{align}
Because of the non-identifiability, the lower bound of $V(p_{\lambda G}, p_{\lambda^* G_*})$ must be inspected carefully based on the magnitude of mixing proportions of $p_{\lambda G}$ compared to what of $p_{\lambda^* G_*}$. To serve this purpose, we denote
\begin{align*}
\extraset & : = \{ \lambda \in [0, 1]: (\lambda^{*} - \lambda) p_{i}^{0} \leq \lambda^{*} p_{i}^{*} \ \forall  \ 1 \leq i \leq k_{0} \}, \\ \nonumber
\extraind( \lambda) & : = \{1 \leq i \leq k_{0}: (\lambda^{*} - \lambda) p_{i}^{0} > \lambda^{*} p_{i}^{*} \}.
\end{align*}
For any $\lambda \in [0, 1]$, we say that the set $\extraind( \lambda)$ is \textit{ratio-independent} if and only if $|\extraind( \lambda)| = 1$ or $p_{i}/ p_{i}^{*} = p_{j}/ p_{j}^{*}$ for all $i, j \in \extraind( \lambda)$ when $|\extraind( \lambda)| \geq 2$. Moreover, we define 
\begin{eqnarray}\label{eq:G-tilde-star-lambda}
    \widetilde{G}_{*} (\lambda) : = \frac{1}{\sumind( \mathcal{I} (\lambda))} \biggr( \sum_{i \in \extraind( \lambda)^{c}} \biggr[ p_{i}^{*} \lambda^{*} + \left( \lambda - \lambda^{*} \right) p_{i}^{0} \biggr] \delta_{\theta_{i}^{0}}\nonumber \\ + \lambda^{*} \sum_{i = k_{0} + 1}^{k_*} p_{i}^{*} \delta_{\theta_{i}^{*}} \biggr),
\end{eqnarray}
where $\sumind(\extraind (\lambda)) : = \sum_{i \in \extraind( \lambda)^{c}} \biggr[ p_{i}^{*} \lambda^{*} + \left( \lambda - \lambda^{*} \right)p_{i}^{0} \biggr] + \lambda^{*} \sum_{i = k_{0} + 1}^{k} p_{i}^{*}$. In the case $\mathcal{I}(\lambda)$ is ratio-independent, the identifiable equation $p_{\lambda G} = p_{\lambda^* G_*}$ attains a solution $G = \widetilde{G}_{*} (\lambda)$ as in equation~\eqref{eq:G-tilde-star-lambda}. Hence, in the following, we need to divide $\lambda$ into several regimes to specify the lower bound for $V(p_{\lambda G}, p_{\lambda^* G_*})$ based on appropriate distances of $(\lambda, G)$ and $(\lambda^*, G_*)$.
\paragraph{Setting with second order identifiable $f$:} We first consider the setting when $f$ is second order identifiable and the model setup~\eqref{eq:true_model} is  over-fitted. The following result demonstrates that under different settings of $\lambda$ and $\extraind( \lambda)$, the lower bound of $V(p_{\lambda G}, p_{\lambda^{*} G_{*}})$ in terms of its corresponding parameters $(\lambda, G)$ and $(\lambda^{*}, G_{*})$ can be very different.  
\begin{theorem}
\label{theorem:pardistinguish_point_wise_equal_exactspec}
Assume that $\sumfun$ takes the form~\eqref{eq:par_distin_h0} and $\bar{k} = k_{0}$. Besides that, $f$ is second order identifiable. Then, for any $\lambda \in [0, 1]$ and $G \in \Ocal_{K}( \Theta)$ that $K \geq k_*$, there exist positive constants $C_{1}$ and $C_{2}$ depending only on $\lambda^{*}, G_{*}, G_{0}, \Theta$ such that the following holds:
\begin{itemize}
\item[(a)] If $\extraind( \lambda)$ is not ratio-independent, then
\begin{align}
V(p_{\lambda^{*} G_{*}}, p_{\lambda G}) & \geq C_{1} \biggr[ 1_{\{\lambda \in \extraset^{c}\}} + 1_{\{\lambda\in \extraset\}} W_{2}^2(G, \overline{G}_{*} ( \lambda))\biggr]. \label{eq:pardistinguish_point_wise_equal_exactspec_ratdep}
\end{align}
\item[(b)] If $\extraind( \lambda)$ is ratio-independent, then
\begin{align}
V(p_{\lambda^{*}, G_{*}}, p_{\lambda, G}) & \geq C_{2} \biggr[ 1_{\{\lambda \in\extraset^{c}\}} \biggr(  \sum_{i \in \extraind( \lambda)} \biggr[ (\lambda^{*} - \lambda) p_{i}^{0}  \nonumber \\
& - \lambda^{*}p_{i}^{*} \biggr]+ \sumind(\mathcal{I} (\lambda)) W_{2}^2(G, \widetilde{G}_{*} ( \lambda)) \biggr) \nonumber \\ 
& + 1_{\{\lambda\in\extraset\}}W_{2}^2(G, \overline{G}_{*} ( \lambda))\biggr]. \label{eq:pardistinguish_point_wise_equal_exactspec_ratindep}
\end{align}
\end{itemize}
\end{theorem}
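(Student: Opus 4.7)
The plan is to adapt the contradiction-plus-Taylor-expansion scheme used for Theorem~\ref{theorem:distinguish_exact_specified_point_wise} to the partially distinguishable regime, with a careful case analysis to accommodate the non-identifiability of the model. The crucial preliminary observation is that because $h_0 = \sum_{i=1}^{k_0} p_i^0 f(\cdot|\theta_i^0)$ is itself a mixture in the family $f$, both $p_{\lambda G}$ and $p_{\lambda^* G_*}$ are mixtures of $f$ with combined latent measures $(1-\lambda)G_0 + \lambda G$ and $(1-\lambda^*)G_0 + \lambda^* G_*$ respectively. Since second order identifiability of $f$ implies the usual identifiability of finite mixtures of $f$, the equation $p_{\lambda G} = p_{\lambda^* G_*}$ is equivalent to the equation $\lambda G = \lambda^* G_* + (\lambda - \lambda^*) G_0$ of signed measures on $\Theta$.

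Requiring $G$ to be a valid probability measure then yields a three-way classification of $(\lambda, G)$ solving this equation: (i) when $\lambda \in \mathcal{B}$, the unique solution is $G = \overline{G}_*(\lambda)$; (ii) when $\lambda \in \mathcal{B}^c$ and $\mathcal{I}(\lambda)$ is ratio-independent, the only admissible solution is $G = \widetilde{G}_*(\lambda)$, supported away from $\{\theta_i^0 : i \in \mathcal{I}(\lambda)\}$, together with an unavoidable mass discrepancy equal to $\sum_{i \in \mathcal{I}(\lambda)}\bigl[(\lambda^* - \lambda) p_i^0 - \lambda^* p_i^*\bigr]$; (iii) when $\lambda \in \mathcal{B}^c$ and $\mathcal{I}(\lambda)$ is not ratio-independent, no valid solution exists. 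This classification is the backbone of the proof and explains the different functional forms appearing in \eqref{eq:pardistinguish_point_wise_equal_exactspec_ratdep} and \eqref{eq:pardistinguish_point_wise_equal_exactspec_ratindep}.

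Next I would assume for contradiction that the stated bound fails, producing a sequence $(\lambda_n, G_n) \in [0,1] \times \mathcal{O}_K(\Theta)$ along which $V(p_{\lambda_n G_n}, p_{\lambda^* G_*})$ is vanishingly small compared to the right hand side. By compactness of $[0,1] \times \mathcal{O}_K(\Theta)$ in $W_2$, I pass to a subsequence with $\lambda_n \to \lambda'$ and $G_n \xrightarrow{W_2} G'$, forcing $p_{\lambda' G'} = p_{\lambda^* G_*}$ and hence placing $(\lambda', G')$ into one of the surviving regimes above (in case (a), ratio-non-independence of $\mathcal{I}(\lambda')$ would preclude a valid solution, so the limit must satisfy $\lambda' \in \mathcal{B}$ and $G' = \overline{G}_*(\lambda')$). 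With $G'$ pinned down, I would perform a second order Taylor expansion of each $f(\cdot|\theta_{j,n})$ around the atom of $G'$ it converges to, decompose the normalized density difference $(p_{\lambda_n G_n} - p_{\lambda^* G_*})/\text{RHS}_n$ as a linear combination of $h_0$, the atom densities $f(\cdot|\theta)$, and their first and second derivatives at the atoms of $G'$, and apply Fatou's lemma exactly as in the sketch of Theorem~\ref{theorem:distinguish_exact_specified_point_wise} to obtain a vanishing linear identity with at least one nonzero coefficient, contradicting second order identifiability of $f$.

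The main obstacle will be the bookkeeping in case (b) when $\lambda' \in \mathcal{B}^c$: the atoms of $G_n$ clustering near $\theta_i^0$ for $i \in \mathcal{I}(\lambda')$ must be treated separately from those clustering near the support atoms of $\widetilde{G}_*(\lambda')$, because the former produce the irreducible additive term $\sum_{i \in \mathcal{I}(\lambda)}[(\lambda^* - \lambda) p_i^0 - \lambda^* p_i^*]$ rather than a Wasserstein contribution, while the latter produce the $W_2^2(G, \widetilde{G}_*(\lambda))$ term via the usual over-fitted second order argument. A secondary technical point is ensuring uniformity of $C_1, C_2$ as $\lambda$ crosses the boundary between $\mathcal{B}$ and $\mathcal{B}^c$; this follows from the same compactness argument together with separately handling the degenerate boundary cases $\lambda \to 0$ and $\lambda \to \lambda^*$.
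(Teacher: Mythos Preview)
Your approach is essentially the same contradiction-plus-Taylor-expansion scheme the paper uses, organized as a single compactness argument rather than the paper's split into a local claim ($D\to 0$) and an implicit global claim. The case analysis you outline (three regimes for solutions of $p_{\lambda G}=p_{\lambda^* G_*}$) matches the paper's Case~1 / Case~2.1 / Case~2.2 structure.

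Two bookkeeping points to correct. First, the limit $(\lambda',G')$ cannot land in $\mathcal{B}^c$: when $\lambda'\in\mathcal{B}^c$ there is \emph{no} probability measure $G'$ with $p_{\lambda' G'}=p_{\lambda^* G_*}$, ratio-independent or not, since the required weight at some $\theta_i^0$ is strictly negative. What actually happens in case~(b) with $\lambda_n\in\mathcal{B}^c$ is that $D_n\to 0$ forces $\sum_{i\in\mathcal{I}(\lambda_n)}[-\bar p_i^*]\to 0$, so $\lambda_n$ drifts to $\partial\mathcal{B}\subset\mathcal{B}$. Second, the additive term $\sum_{i\in\mathcal{I}(\lambda)}[(\lambda^*-\lambda)p_i^0-\lambda^* p_i^*]$ does not come from atoms of $G_n$ clustering near $\theta_i^0$ for $i\in\mathcal{I}(\lambda_n)$; since $W_2(G_n,\widetilde G_*(\lambda_n))\to 0$, the atoms of $G_n$ stay \emph{away} from those $\theta_i^0$. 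The additive term arises on the target side: in the rewriting $p_{\lambda_n G_n}-p_{\lambda^* G_*}=\sum_{i\in\mathcal{I}(\lambda_n)}(-\bar p_i^*)f(\cdot|\theta_i^0)+[\cdots]$, the coefficients $-\bar p_i^*>0$ are exactly the leftover weights that cannot be matched by any probability $G_n$, and these are the coefficients of $f(\cdot|\theta_i^0)$ in the Fatou step. Finally, there is no separate $h_0$ term in the expansion here---since $h_0$ is itself a mixture of $f$, everything is expressed through $f$ and its derivatives at the atoms $\theta_i^0,\theta_j^*$, and the contradiction comes from second-order identifiability of $f$ alone rather than distinguishability from $h_0$.
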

\noindent 
We can see that when $\lambda \in \extraset^{c}$ and $\extraind( \lambda)$ is not ratio-independent, the bound in equation~\eqref{eq:pardistinguish_point_wise_equal_exactspec_ratdep} shows that $V(p_{\lambda^{*} G_{*}}, p_{\lambda G}) \geq C_{1}$. It is due to the fact that $(\lambda^{*} - \lambda) p_{i}^{0} - \lambda^{*} p_{i}^{*}$ cannot be simultaneously arbitrarily small as $i \in \extraind( \lambda)$. On the other hand, these terms can become very small at the same time when $\extraind( \lambda)$ is ratio-independent. It implies that $V(p_{\lambda^{*} G_{*}}, p_{\lambda G})$ can become arbitrarily close to 0 under this setting of $\extraind( \lambda)$. It explains the difference of bounds between two settings of $\extraind( \lambda)$. 
\paragraph{Setting with weakly identifiable $f$:} Finally, we consider the settings of model setup~\eqref{eq:true_model} when $f$ is weakly identifiable. We specifically choose $f$ to be location-scale Gaussian distribution and study the lower bounds of $V(p_{\lambda G}, p_{\lambda^{*} G_{*}})$ in terms of their parameters.

\begin{theorem}
\label{theorem:pardistinguish_point_wise_equal_overspec_weakind}
Assume that $\sumfun$ takes the form~\eqref{eq:par_distin_h0} and $\bar{k} = k_{0}$. Besides that, $f$ is location-scale Gaussian distribution. Then, for $\tilde{k}:=\max\{k_* - k_0, 1\}$, and for any $\lambda \in [0, 1]$ and $G \in \Ocal_{K, c_{0}}( \Theta)$ for some $K \geq k_*$ and $c_{0} > 0$, there exist positive constants $C_{1}$ and $C_{2}$ depending only on $\lambda^{*}, G_{*}, G_{0}, \Theta$ such that
on $\lambda^{*}, G_{*}, G_{0}, \Theta$ such that
\begin{itemize}
\item[(a)] If $\extraind( \lambda)$ is not ratio-independent, then
\begin{align}
V(p_{\lambda^{*} G_{*}}, p_{\lambda G}) & \geq C_{1} \biggr[ 1_{\{\lambda \in\extraset^{c}\}} \nonumber \\
& + 1_{\{\lambda \in\extraset\}}   W_{\overline{r}(K - \tilde{k} )}^{\overline{r}(K - \tilde{k})}(G, \bar{G}_{*}( \lambda) ) \biggr]. \label{eq:pardistinguish_equal_overspec_weakind_ratdep}
\end{align}
\item[(b)] If $\extraind( \lambda)$ is ratio-independent, then
\begin{align}
V(p_{\lambda^{*}, G_{*}}, p_{\lambda, G}) & \geq C_{2} \biggr[ 1_{\{\lambda \in\extraset^{c}\}} \biggr(  \sum_{i \in \extraind( \lambda)} \biggr[ (\lambda^{*} - \lambda) p_{i}^{0}  \nonumber \\
- \lambda^{*}p_{i}^{*} \biggr]& + \sumind(\mathcal{I} (\lambda)) W_{\overline{r}(K - \tilde{k})}^{\overline{r}(K - \tilde{k})}(G, \widetilde{G}_{*} ( \lambda)) \biggr) \nonumber \\ 
& + 1_{\{\lambda \in\extraset\}}   W_{\overline{r}(K - \tilde{k} )}^{\overline{r}(K - \tilde{k})}(G, \bar{G}_{*}( \lambda)) \biggr]. \label{eq:pardistinguish_equal_overspec_weakind_ratindep}
\end{align}
\end{itemize}
\end{theorem}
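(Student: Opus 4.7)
\textbf{Proposal for the proof of Theorem~\ref{theorem:pardistinguish_point_wise_equal_overspec_weakind}.}

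The plan is to argue by contradiction along the same conceptual lines as the proof sketch of Theorem~\ref{theorem:distinguish_exact_specified_point_wise}, combined with the higher-order Taylor / heat-equation technique indicated after Theorem~\ref{theorem:distinguish_over_specified_weakly_ident_point_wise} and the non-identifiability bookkeeping introduced around $\bar{G}_{*}(\lambda)$, $\widetilde{G}_{*}(\lambda)$, $\extraset$, $\extraind(\lambda)$. Suppose the asserted bound fails for some choice of the constant. Then there exists a sequence $(\lambda_n, G_n)$ with $G_n \in \Ocal_{K,c_0}(\Theta)$ such that $V(p_{\lambda^* G_*}, p_{\lambda_n G_n})$ divided by the corresponding right-hand side tends to $0$. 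Since $[0,1]\times \Ocal_{K,c_0}(\Theta)$ is compact under $(|\cdot|, W_1)$, we pass to a subsequence so that $\lambda_n\to \bar\lambda$ and $G_n \xrightarrow{W_1} \bar G$. By continuity of the map $(\lambda,G)\mapsto p_{\lambda G}$ and Scheff\'e's lemma, $p_{\bar\lambda \bar G}=p_{\lambda^* G_*}$; because $h_0 = f(\cdot;G_0)$ is itself a mixture of $f$, identifiability of finite mixtures of Gaussians forces the equality of measures $(1-\bar\lambda)G_0 + \bar\lambda\bar G = (1-\lambda^*)G_0 + \lambda^*G_*$.

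The next step is to enumerate the possible limits $\bar G$ consistent with this mixing-measure equation, using the structure $G_{*}$ given in~\eqref{eq:refor_G*}. If $\bar\lambda \in \extraset$, then the identity is solvable with nonnegative weights only when $\bar G=\bar G_{*}(\bar\lambda)$; if $\bar\lambda \notin \extraset$ and $\extraind(\bar\lambda)$ is ratio-independent, the only admissible solution is $\bar G=\widetilde G_{*}(\bar\lambda)$ paired with an unavoidable residual of size $\sum_{i\in\extraind(\bar\lambda)}[(\lambda^*-\lambda_n)p_i^0-\lambda^*p_i^*]_+$; and if $\bar\lambda \notin \extraset$ with $\extraind(\bar\lambda)$ not ratio-independent, no admissible solution exists, which already contradicts $V(p_{\lambda_n G_n},p_{\lambda^* G_*})\to 0$ and yields the constant lower bound in~\eqref{eq:pardistinguish_equal_overspec_weakind_ratdep} and the indicator term of~\eqref{eq:pardistinguish_equal_overspec_weakind_ratindep}. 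Thus the remaining analytic work concentrates on the two Wasserstein-type regimes, where the limiting measure is either $\bar G_{*}(\bar\lambda)$ or $\widetilde G_{*}(\bar\lambda)$.

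In these regimes, write $G_n=\sum_j p_j^n\delta_{(\mu_j^n,\Sigma_j^n)}$ and partition its atoms into clusters converging to each atom $(\mu_i^\star,\Sigma_i^\star)$ of the relevant limit measure. Since at most $K - \tilde k$ atoms can cluster around any single point of the limit (the limit has at least $\tilde k$ distinct atoms by construction of $\tilde k = \max\{k_*-k_0,1\}$), after dividing the pointwise difference $p_{\lambda_n G_n}-p_{\lambda^* G_*}$ by the Wasserstein-type remainder $W_{\bar r(K-\tilde k)}^{\bar r(K-\tilde k)}(G_n,\cdot)$ and performing a Taylor expansion of $f(x|\mu,\Sigma)$ around each $(\mu_i^\star,\Sigma_i^\star)$ to order $\bar r(K-\tilde k)$, we collect the coefficients as in the proof of Theorem~\ref{theorem:distinguish_over_specified_weakly_ident_point_wise}. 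Using the heat-equation identity $\partial f/\partial\Sigma = \tfrac12\,\partial^2 f/\partial\mu^2$, every $\Sigma$-derivative is rewritten as an even-order $\mu$-derivative, condensing the expansion into a linear combination of $h_0(x)$ and $\mu$-derivatives of $f$ at the limiting locations. Fatou's lemma applied to the ratio $V(p_{\lambda_n G_n},p_{\lambda^* G_*})/W_{\bar r(K-\tilde k)}^{\bar r(K-\tilde k)}(\cdot,\cdot)$, together with the fact that this ratio vanishes, forces the limiting linear combination to equal zero almost surely; the distinguishability of $(f,K)$ from $h_0$ in every order (with respect to location only) then produces a nontrivial solution to the polynomial system~\eqref{eqn:system_polynomial_Gaussian_first} with $r=\bar r(K-\tilde k)$, contradicting the definition of $\bar r$.

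The main obstacle is the bookkeeping in the second step: distinct regimes of $\bar\lambda$ in $\extraset$ versus $\extraset^c$, ratio-independent versus ratio-dependent $\extraind(\bar\lambda)$, and the different limiting measures $\bar G_{*}$ and $\widetilde G_{*}$, all have to be matched to the correct right-hand side term in~\eqref{eq:pardistinguish_equal_overspec_weakind_ratdep}--\eqref{eq:pardistinguish_equal_overspec_weakind_ratindep}, and the Taylor-expansion coefficients must be normalized so that the residual mass term $\sum_{i\in\extraind(\lambda)}[(\lambda^*-\lambda)p_i^0-\lambda^*p_i^*]$ and the Wasserstein term appear with matching denominators, so that Fatou's lemma applies uniformly. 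Once the cases are partitioned correctly, the local analysis in each regime reduces to the weakly-identifiable Gaussian calculation already deployed for Theorem~\ref{theorem:distinguish_over_specified_weakly_ident_point_wise}, giving the exponent $\bar r(K-\tilde k)$ claimed in the statement.
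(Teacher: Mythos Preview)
Your overall contradiction--compactness--Taylor--Fatou architecture matches the paper's, and the case split according to $\lambda_n \in \extraset$ versus $\lambda_n \in \extraset^c$ and ratio-independence of $\extraind(\lambda_n)$ is exactly what the paper does. However, the analytic core of your Wasserstein-regime argument invokes a hypothesis that is explicitly false here.

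You write that the Taylor expansion ``condenses into a linear combination of $h_0(x)$ and $\mu$-derivatives of $f$'' and then appeal to ``the distinguishability of $(f,K)$ from $h_0$ in every order (with respect to location only)'' to force all coefficients to vanish. But the whole point of Regime~C ($\bar k = k_0$) is that every atom of $G_0$ is already an atom of $G_*$, so $(f,K)$ is \emph{not} distinguishable from $h_0$; this is stated verbatim at the start of Section~\ref{subsec:pardistinguish_point_wise_equal}. The correct observation, which the paper exploits in equation~\eqref{eq:key_express_pardistinguish_point_wise_equal_weak}, is that because $\bar k = k_0$ the term $(\lambda^*-\lambda_n)h_0(x)$ is completely absorbed into the weights $\bar p_i^*=\lambda^*p_i^*+(\lambda_n-\lambda^*)p_i^0$ of the atoms $\theta_i^*$, and the difference $p_{\lambda_n G_n}-p_{\lambda^* G_*}$ becomes a \emph{pure} difference of location-scale Gaussian mixtures with no residual $h_0$ term at all. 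The contradiction in Step~3 then comes from the ordinary identifiability of location-scale Gaussians in any order with respect to location (which the paper uses to reduce to the argument of Proposition~2.2 in~\cite{Ho-Nguyen-Ann-16}), not from any distinguishability with $h_0$. Your proposal as written would fail at this step because the premise you need is unavailable; once you replace ``distinguishability of $(f,K)$ from $h_0$'' by ``linear independence of $\partial^\alpha f/\partial\theta^\alpha(\cdot\,|\,\theta_i^*,v_i^*)$ across distinct $(\theta_i^*,v_i^*)$'' the rest of your outline goes through essentially as in the paper.
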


\section{Additional Experiment}\label{sec:add-exp}
We provide a simulation experiment with partially distinguishable setting in this section to demonstrate the theoretical results in Section~\ref{subsec:pardistinguish_point_wise}.

\textbf{Partially distinguishable setting.} Consider the partial distinguishable case as in Theorem~\ref{theorem:pardistinguish_point_wise_overspec_weak_iden} with weakly identifiable $f$, we will conduct an experiment to distinguish two regimes in part (b) and (c) of the theorem, which are $\lambda > \lambda^*$ and $\lambda \leq \lambda^*$. We simulate $n$ data from the true data generating model~\eqref{eq:true_model}, where $p_1^0 = 0.4, p_2^0 = 0.6$, $p_1^* = 1,\lambda^* = 0.3$,
$\mu_1^0 = \mu_1^* = (-2, 3), \Sigma_1^0 = \Sigma_1^* = \begin{pmatrix} 3 & -1 \\ -1 & 2 \end{pmatrix}$, $\mu_2^0 = (1, -4), \Sigma_2^0 = \begin{pmatrix} 1 & 0 \\ 0 & 4 \end{pmatrix}$. In this case, $k_* = 1, k_0 = 2, \bar{k} = 1$, $k_* + k_0 - \overline{k} = 2$ and we will fit the data with model $p_{\lambda G}$, where $G$ has 3 atoms. The MLE $(\hat{\lambda}_n, \hat{G}_n)$ is found by the EM algorithm. In the regime $\hat{\lambda}_n < \lambda^*$, we see that $\hat{\lambda}_n\to \lambda^*$ in the parametric rate and the convergence of $\hat{G}_n$ to $G_*$ is of order $(\log(n)/n)^{2\overline{r}(K-k_*)} = (\log(n)/n)^{12}$ (Fig.~\ref{fig:partial-distinguishable-weak-ident}). When $\hat{\lambda}_n > \lambda^*$, because of the indistinguishability of the model, we do not expect $\hat{\lambda}_n\to \lambda^*$ but the Wasserstein distance between $\hat{G}_n$ and $\overline{G}_*(\hat{\lambda}_n)$ converges to 0 with the rate $(\log(n)/n)^{2\overline{r}(2)}=(\log(n)/n)^{1/8}$. The simulation study matches with this result, where $\hat{\lambda}_n$ converges to some number greater than $\lambda^*$, and the rate that $W_4(G, \overline{G}_*(\hat{\lambda}_n))$ converges to 0 is of order $(\log(n)/n)^{1/8}$ (Fig. \ref{fig:partial-distinguishable-weak-ident-large-lambda}).

    \begin{figure}[ht]
      \centering
      \subcaptionbox*{\scriptsize (a) Convergence rates of $W_6(\hat{G}_n, G_{*})$ \par}{\includegraphics[width = 0.45\textwidth]{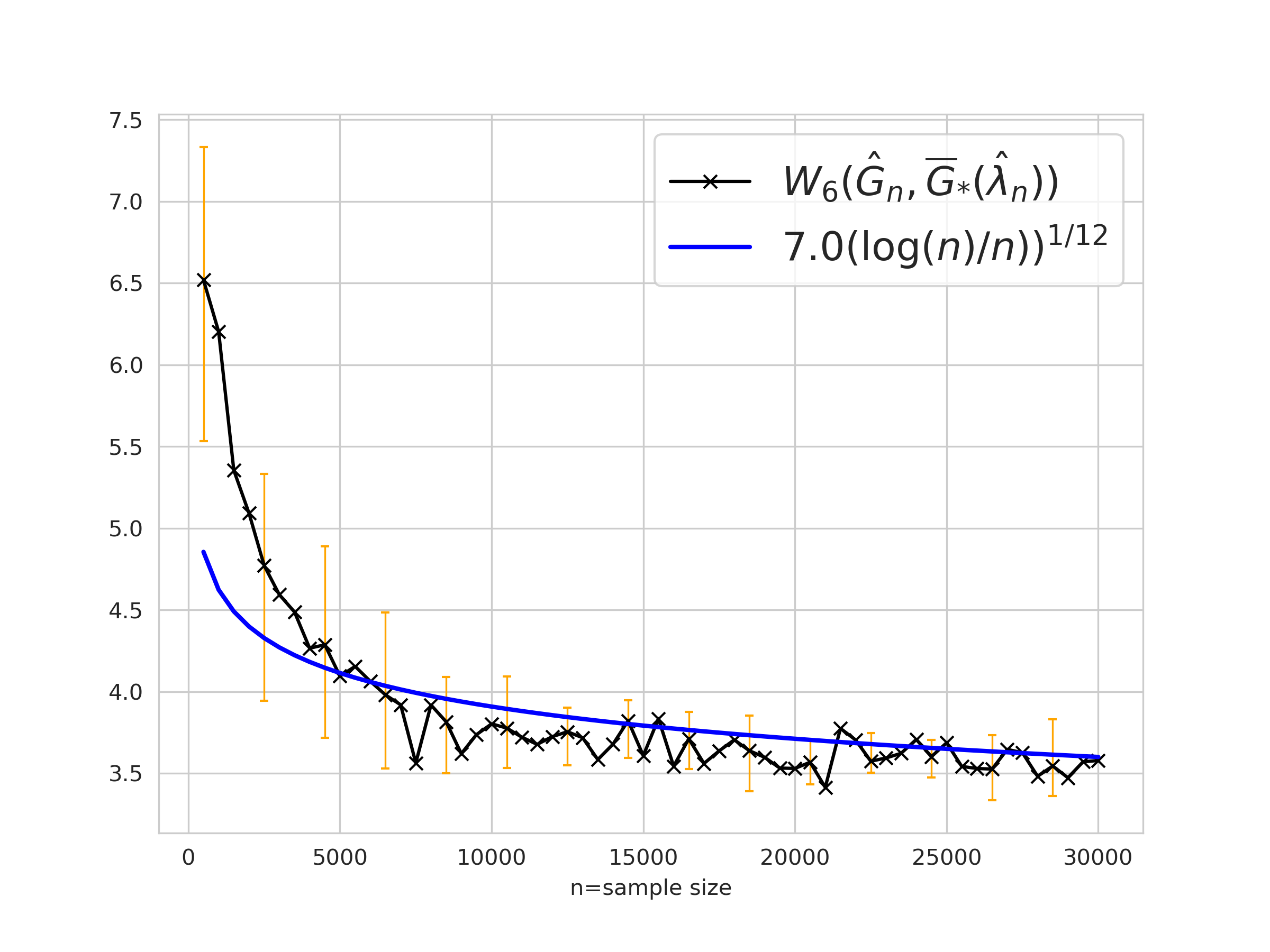}}
      \subcaptionbox*{\scriptsize(b) Convergence rates of $|\hat{\lambda}_n - \lambda^*|$ \par}{\includegraphics[width = 0.45\textwidth]{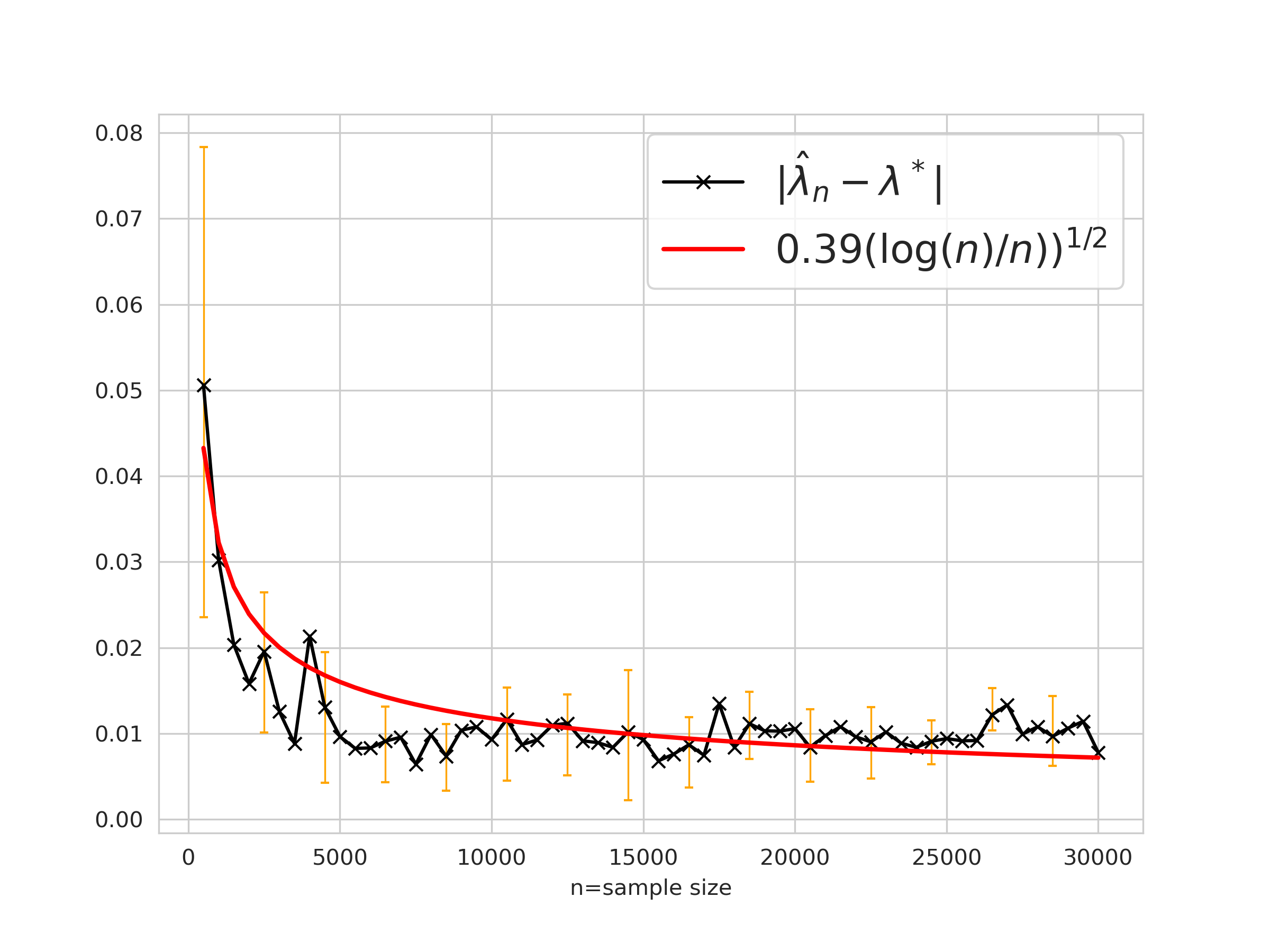}}
      \caption{Parameter learning rates in regime $\lambda \leq \lambda^*$.}\label{fig:partial-distinguishable-weak-ident}
    \end{figure}

\begin{figure}[ht]
      \centering
      \subcaptionbox*{\scriptsize (a) Convergence rates of $W_4(\hat{G}_n, G_{*})$ \par}{\includegraphics[width = 0.45\textwidth]{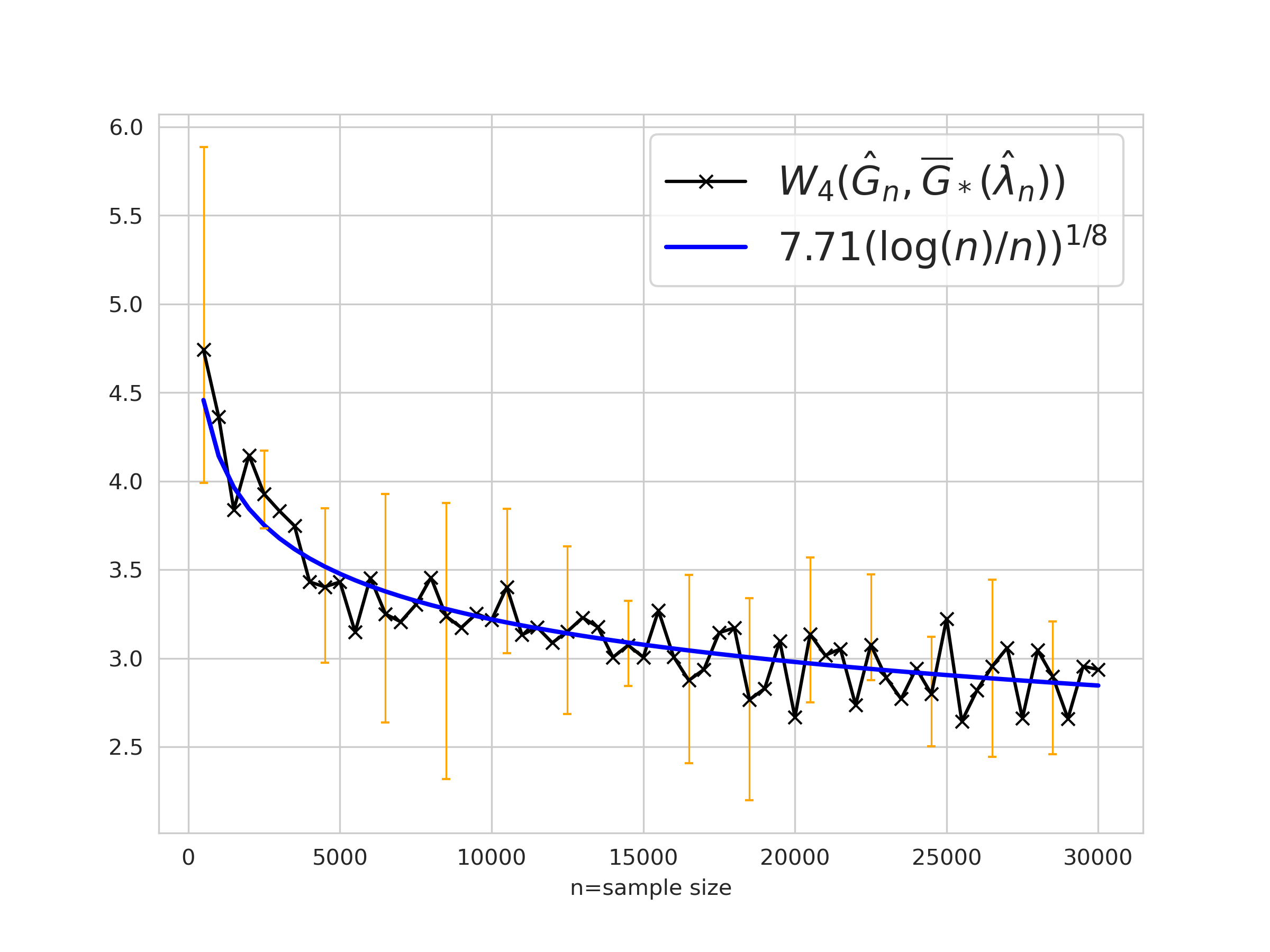}}
      \subcaptionbox*{\scriptsize (b) Limit of $\hat{\lambda}_n$ \par}{\includegraphics[width = 0.45\textwidth]{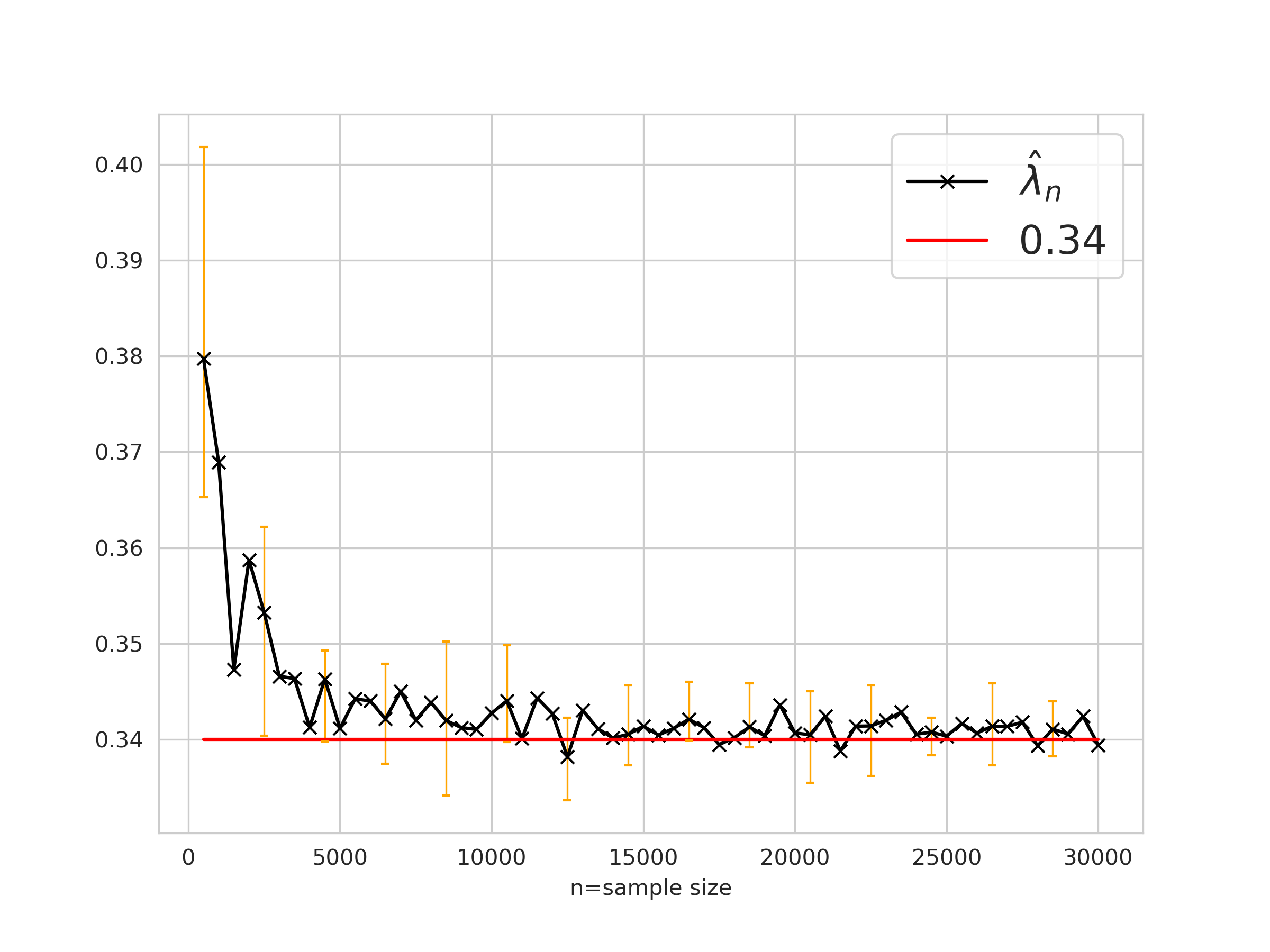}}
      \caption{Parameter learning rates in regime $\lambda >  \lambda^*$.}\label{fig:partial-distinguishable-weak-ident-large-lambda}
    \end{figure}

\section{Proofs of Section 2}
\label{sec:section_2}
\subsection{Proof of Theorem~\ref{theorem:example-distinguishable-Gaussian}}
\label{subsec:proof:theorem:example-distinguishable-Gaussian}
(a) We first prove that $h_0$ is distinguishable with $(f, k)$ up to first order with any $k$ and $f$ being location-scale Gaussian family, i.e., if there exists $\lambda, \alpha_j\in \mathbb{R}, \beta_j \in \mathbb{R}^{d}, $ symmetric matrices $\gamma_i\in \mathbb{R}^{d\times d}$, $\theta_j\in \mathbb{R}^d$, and positive definite symmetric $\Sigma_j \in \mathbb{R}^{d\times d}$  for $j = 1,\dots, k$ such that
    \begin{equation*}
        \lambda h_0(x) + \sum_{j=1}^{k} \alpha_j f(x|\theta_j, \Sigma_j) + \sum_{j=1}^{k} \beta_j^T \dfrac{\partial f}{\partial \theta}(x|\theta_j, \Sigma_j) + \trace\left(\dfrac{\partial f}{\partial \Sigma}(x|\theta_j, \Sigma_j)^T \gamma_j \right) = 0,
    \end{equation*}
    then $\lambda = \alpha_j = \beta_j =  \gamma_j = 0$ for all $j = 1,\dots, k$, where $f(x|\theta, \Sigma)$ is the density evaluated at $x$ of Gaussian distribution with mean $\theta$ and covariance $\Sigma$ and $(\theta_j, \Sigma_j)_{j=1}^{k}$ are pairwise different. Suppose there exists such $(\lambda, \alpha_j, \beta_j, \gamma_j)_{j=1}^{k}$. We borrow a technique from \cite{Ho-Nguyen-EJS-16, yakowitz1968identifiability}, where we find a one-dimensional space to project $x \in \mathbb{R}^d$ onto and work with the order of means and variances in that space to show that the solution must be trivial. Calculating the first derivatives of $f$ gives
    \begin{equation}\label{eq:ident-location-scale}
        \lambda h_0(x) + \sum_{j=1}^{k} \left(\alpha_j' + (\beta_j')^T(x-\theta_j) + (x - \theta_j)^T \gamma_j^{-1} (x - \theta_j) \right) e^{-\dfrac{1}{2} (x-\theta_j)^T \Sigma_j^{-1} (x-\theta_j)} = 0,
    \end{equation}
    where
    \begin{equation*}
        \alpha_j' = \dfrac{2\alpha_j -\trace(\Sigma_j^{-1} \gamma_j)}{2\pi^{d/2} |\Sigma_j|^{1/2}}, \quad
        \beta_j' = \dfrac{2}{\pi^{d/2} |\Sigma_j|^{1/2}} \Sigma_j^{-1} \beta_j, \quad
        \gamma_j' = \dfrac{1}{\pi^{d/2} |\Sigma_j|^{1/2}} \Sigma_j^{-1}\gamma_j \Sigma_j^{-1},
    \end{equation*}
    for all $j = 1,\dots, k$. If all the covariance matrices are equal, i.e., $\Sigma_1 = \dots = \Sigma_k$, then $(\theta_j)_{j=1}^{k}$ are pairwise different. Denote by $\delta_{ij} = \theta_i - \theta_j$, then for any $x' \not \in \cup_{1\leq i\leq j\leq k} \{ u\in\mathbb{R}^d : \delta_{ij}^T u = 0 \}$, we have $(x')^T \theta_1, \dots, (x')^T \theta_k$ are distinct. Otherwise, if (without loss of generality) there are $\Sigma_1, \dots, \Sigma_m$ different matrices among $\Sigma_1, \dots, \Sigma_k$, then for every $x' \not \in \cup_{1\leq i\leq j\leq m} \{ u\in\mathbb{R}^d : u^T (\Sigma_i-\Sigma_j) u = 0 \}$, we have $(x')^T \Sigma_1 (x'), \dots, (x')^T \Sigma_m (x')$ are distinct. In both cases, we find a finite collection of hyperplanes and cones such that for every $x'$ not belongs to any set of this collection, we have $((x')^T \theta_1, (x')^T \Sigma_1(x')), \dots, ((x')^T \theta_k, (x')^T \Sigma_k(x'))$ are pairwise different. Note that because the union of these collection of $(d-1)$ dimensional manifolds can not be $\mathbb{R}^d$, such a non-zero $x'$ exists. Now we only consider $x$ belongs to the one-dimensional linear space spanned by this $x'$, i.e., $x = y (x')$, where $y\in \mathbb{R}$. Denote by 
    \begin{equation*}
        a_j = (x')^T \gamma_j' x',\,\, b_j = [(\beta_j')^T - 2 \theta_j^T\gamma_j'] x',\,\, c_j = \theta_j^T \gamma_j' \gamma_j - (\beta_j')^T \theta_j,
    \end{equation*}
    \begin{equation*}
        d_j = (x')^T \Sigma_j^{-1} x',\,\, e_j = (x')^T \Sigma_j^{-1} \theta_j',\,\, f_i = \theta_j^T \Sigma_j^{-1} \theta_j,
    \end{equation*}
    for $j = 1,\dots, k$, we proved that $((d_j, e_j))_{j=1}^{k}$ are distinct. Equation~\eqref{eq:ident-location-scale} implies that 
    \begin{equation}\label{eq:ident-location-scale-1d}
        \lambda h_0(yx') + \sum_{j=1}^{k}( \alpha_j' + a_j y^2 + b_j y + c_j) \exp(d_j y^2 + e_j y + f_j) = 0.
    \end{equation}
    \paragraph{Case 1.} If $-\log h_0(x) \gtrsim \enorm{x}^{\beta_1}$ for some $\beta_1 > 2$ and for all $\enorm{x} > x_0$, we have $h_0(x) \lesssim \exp^{-\enorm{x}^{\beta_1}}$. Choose $d_{i_1} = \max_{1\leq i \leq k} d_k$ and $e_{i_2} = \max \{e_{j}: d_j = d_{j_1}\}$. Because $h_0$ has a lighter tail than Gaussian and 
    \begin{equation*}
        d_j y^2 + e_j y + f_j < d_{i_2} y^2 + e_{i_2} y + f_{i_2}, \quad\forall j\neq i_2,
    \end{equation*}
    for all $y$ large enough, divide both sides of~\eqref{eq:ident-location-scale-1d} by $\exp(d_{i_2} y^2 + e_{i_2} y + f_{i_2})$ and let $y \to \infty$, we have $a_{i_2} = b_{i_2} = 0$. It implies that $(x')^T \gamma_{i_2}' x' = [(\beta_{i_2}')^T - 2 \theta_{i_2}^T\gamma_{i_2}'] x' = 0 $. If $\gamma_{i_2}' \neq 0$ then we can further choose $x'$ outside a cone such that $(x')^T \gamma_{i_2}' x'\neq 0$. Hence, $\gamma_{i_2} = 0$, which implies $(\beta_{i_2}')^T (x') = 0$. If $\beta_{i_2} \neq 0$ then we can further choose $x'$ outside a hyperplane such that $(\beta_{i_2}')^T (x') \neq 0$. Hence, in any case, we can argue so that $\beta'_{i_2} = \theta'_{i_2} = 0$. Put it back to \eqref{eq:ident-location-scale-1d}, we also have $\alpha'_{i_2} = 0$. Therefore, $\alpha_{i_2} = \beta_{i_2} = \gamma_{i_2} = 0$. Repeat the same argument, notice that the tail of $h_0$ is lighter than any Gaussian distribution, we have $\alpha_j = \beta_j = \gamma_j = 0$ for all $j=1,\dots, k$. It finally leads to $\lambda = 0$. Hence, we have the distinguishability of $h_0$ with family of location-scale Gaussians up to first order.
    \paragraph{Case 2.} If $-\log h_0(x) \lesssim \enorm{x}^{\beta_2}$ for some $\beta_2 <2$ and for all $\enorm{x} > x_0$. We have $p(x|\theta_j, \Sigma_j) / h_0(x) \to 0$ as $x\to \infty$ for all $j = 1,\dots, k$. Therefore, dividing both sides of~\eqref{eq:ident-location-scale} by $h_0(x)$ and let $x \to\infty$ by some direction, we have $\lambda = 0$. Now proceed to argue similar to Case 1, we also have the distinguishability of $h_0$ with family of location-scale Gaussians up to first order.
    
    Now we proceed to prove that $h_0$ is distinguishable with $(f, k)$ up to the any order, for $f$ being family of location Gaussian and any $k > 0$. Arguing similar to above, we only need to work on one-dimensional space. Suppose that there exists $\lambda, (c_{i, j})_{i=1, \dots, k, j = 1,\dots, r}$ such that
    \begin{equation}\label{eq:ident-location-raw}
        \lambda h_0(x) + \sum_{i=1}^{k} \sum_{j=0}^{r} c_{i, j} \dfrac{\partial^{j} f}{\partial \theta^{j}} (x|\theta_i, v_i) = 0,  
    \end{equation}
    where $f(\cdot|\theta, v)$ is the density function of normal distribution with mean $\theta$ and variance $v$, and $(\theta_1, v_1), \dots, (\theta_k, v_k)$ are distinct. We need to prove that $\lambda = c_{i,j} = 0$ for all $i=1, \dots, k, j = 1,\dots, r$. Calculating the partial derivatives of $f$, we have
    \begin{equation}\label{eq:ident-location}
        \lambda h_0(x) + \sum_{i=1}^{k} \left(\sum_{j=0}^{r} \gamma_{i, j}  (x-\theta_i)^{j} \right) \exp\left(-\dfrac{(x-\theta_i)^2}{2v_i} \right) = 0,
    \end{equation}
    such that $\gamma_{i,j}$ for odd j are linear combination of $(c_{i, l})$ with odd $l \leq j$, $\gamma_{i,j}$ for even j are linear combination of $(c_{i, l})$ with even $l \leq j$, and one can prove (for example, by induction) that $\gamma_{i,j}=0\forall j$ is equivalent to $c_{i,j} = 0\forall j$. Now we can argue similar to Case 1 and Case 2 above to get the contradiction, with the notice that polynomials grow slower than exponential functions.
    
(b) Let $T$ be a piecewise linear function with a positive finite number of breakpoints and $h_0$ is the density function of $N(0,I_d)$ being pushforwarded by $T$. Argue similar to above, we only need to prove the result in one-dimensional case. In order to prove the distinguishable of $h_0$ with mixtures of location Gaussians family or mixtures of location-scale Gaussians family, it all boils down to prove that if there exists $\lambda \in \mathbb{R}$ and polynomials $Q_1(x), Q_2(x), \dots, Q_k(x)$ such that  
    \begin{equation}\label{eq:ident-pushforwarded-normal}
        \lambda h_0(x) + \sum_{i=1}^{k} Q_i(x) f(x|\theta_i, v_i^2) = 0,
    \end{equation}
    where $(\theta_1, v_1^2), \dots, (\theta_k, v_k^2)$ are distinct, then $\lambda = Q_1(x) = \dots = Q_k(x) = 0$. We will prove this by induction in $k$. Consider the case $k=1$, we have
    \begin{equation}\label{eq:ident-pushforwarded-normal-k-1}
        \lambda h_0(x) +  Q_1(x) f(x|\theta_1, v_1^2) = 0.
    \end{equation}
    Because $T$ has finite number of break points, there exists some $x_0$ large enough so that for all $x > x_0$, $T$ is a linear one-to-one function between $[x_0, \infty)$ and its image. Denote by $T(x) = ax 
    + b$ when $x > x_0$. We can argue that $a \neq 0$, because otherwise the distribution of $h_0$ will has an atom, which directly leads to distinguishability between $h_0$ and mixtures of Gaussians. Then, 
    $h_0(x) = f(x|b, a^2)$ and we have
    \begin{equation*}
        \lambda f(x|b, a^2) +  Q_1(x) f(x|\theta_1, v_1^2) = 0.
    \end{equation*}
    Argue similar to part (a), if $(b, a^2) \neq (\theta_1, v_1^2)$, we have $\lambda = Q_1(x) = 0$, which implies the distinguishability. Otherwise, we have $b = \theta_1, a^2 = v_1^2$, and $Q_1(x) = - \lambda$ for all $x\in \mathbb{R}$. We can rewrite \eqref{eq:ident-pushforwarded-normal-k-1} as 
    \begin{equation*}
        h_0(x) - f(x|\theta_1, v_1^2) = 0.
    \end{equation*}
    Because $h_0$ is $N(0,1)$ being pushforwarded by a piecewise linear function, we can write $\mathbb{R}$ as a partition $(-\infty, c_1], (c_1, c_2], \dots, [c_m, \infty)$ such that each semi-open interval is image of some linear functions of $T$. Consider a semi-open interval $(c_{i}, c_{i+i}]$ being image of $T_j(z) = a_j z + b_j$ for $j=1,\dots, h$, by the change of variable formula for many-to-one map, we have
    \begin{equation}\label{eq:ident-pushforwarded-normal-k-1-rewrite}
        0 = h_0(x) - f(x|\theta_1, v_1^2)  = \sum_{j=1}^{h} f(x | b_j, a_j^2) -  f(x|\theta_1, v_1^2),
    \end{equation}
    for all $x\in (c_{i}, c_{i+i}]$. Applying Lemma \ref{lem:local-identifiable}, we have equation~\eqref{eq:ident-pushforwarded-normal-k-1-rewrite} is true for all $x\in \mathbb{R}$. Hence, by integrating both side, we get $h = 1$, and then $b_1 = \theta_1, a_1^2 = v_1^2$. Because this is true for all semi-open intervals $(c_{i}, c_{i+i}]$, we have $T(x) = a_1x + b_1$ for all $x\in \mathbb{R}$, which is contradict to our assumption that $T$ is non-linear.
    
    Suppose that our inductive hypothesis is correct for $k=n$, now we proceed to prove it is true for $k = n+1$. If there exists $\lambda \in \mathbb{R}$ and polynomials $Q_1(x), Q_2(x), \dots, Q_{n+1}(x)$ such that  
    \begin{equation}\label{eq:ident-pushforwarded-normal-k-plus-1}
        \lambda h_0(x) + \sum_{i=1}^{n+1} Q_i(x) f(x|\theta_i, v_i^2) = 0,
    \end{equation}
    where $(\theta_1, v_1), \dots, (\theta_{n+1}, v_{n+1}^2)$ are distinct. Without loss of generality, assume that $v^2_1 = \max_{1\leq i \leq n+1} v^2_k$ and $\theta_{1} = \max \{\theta_{j}: v_j^2 = v_1^2 \}$. Because $T$ has finite number of break points, there exists some $x_0$ large enough so that for all $x > x_0$, $T$ is a linear one-to-one function between $[x_0, \infty)$ and its image. Denote by $T(x) = ax 
    + b$ when $x > x_0$. We have
    \begin{equation}\label{eq:ident-pushforwarded-normal-k-plus-1-gtr-x0}
        \lambda f(x | b, a^2) + \sum_{i=1}^{n+1} Q_i(x) f(x|\theta_i, v_i^2) = 0, \quad \forall \, x > x_0.
    \end{equation}
    If $a^2 > v_1^2$ or $a^2 = v_1^2, b > \theta_1$, divide both sides of equation~\eqref{eq:ident-pushforwarded-normal-k-plus-1-gtr-x0} by $\exp((x-b)/2a^2)$ and let $x\to \infty$, we have $\lambda = 0$ and the conclusion follows from the identifiability of Gaussians family. 
    
    If $v_1^2 > a^2$ or $ v_1^2=a^2, \theta_1 > b$, divide both sides of equation~\eqref{eq:ident-pushforwarded-normal-k-plus-1-gtr-x0} by $\exp((x-\theta_1)/2v_1^2)$ and let $x\to \infty$, we have $Q_1(x) = 0$. The problem is back to the case $k=n$ and is proved using the inductive hypothesis. 
    
    If $a^2 = v_1^2, b = \theta_1$, divide both sides of equation~\eqref{eq:ident-pushforwarded-normal-k-plus-1-gtr-x0} by $\exp((x-b)/2a^2)$ and let $x\to \infty$, we have $Q_1(x) = -\lambda$ for all $x\in \mathbb{R}$. Hence for $x$ large enough, 
    $$\sum_{i=2}^{n+1} Q_i(x) f(x|\theta_i, v_i^2) = 0,$$
    which implies $Q_2(x) = \dots = Q_{n+1} (x) = 0$. The problem is back to the case $k=1$ and is proved using the inductive hypothesis. 
    
The following lemma presents the local identifiability of location-scale Gaussians mixtures.
\begin{lemma}\label{lem:local-identifiable}
    Denote by $f(\cdot | \theta, \sigma^2)$ the density function of Gaussian distribution with mean $\theta $ and variance $\sigma^2$. For all $a < b$ and pairs $\{(\theta_i, \sigma_i^2)\}_{i=1}^{k}$, if there exists $\alpha_1, \alpha_2, \dots, \alpha_n \in \mathbb{R}$ such that
    \begin{equation*}
        \alpha_1 f(x | \theta_1, \sigma_1^2) + \dots + \alpha_k f(x | \theta_k, \sigma_k^2) = 0
    \end{equation*}
    for all $x\in [a, b]$, then 
    \begin{equation}\label{eq:lem-ident-conclusion}
        \alpha_1 f(x | \theta_1, \sigma_1^2) + \dots + \alpha_k f(x | \theta_k, \sigma_k^2) = 0,
    \end{equation}
    for all $x\in \mathbb{R}$.
\end{lemma}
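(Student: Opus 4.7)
The plan is to exploit the fact that each Gaussian density $f(x\mid \theta_i,\sigma_i^2) = \frac{1}{\sqrt{2\pi}\,\sigma_i}\exp\!\left(-\frac{(x-\theta_i)^2}{2\sigma_i^2}\right)$ is the restriction to $\mathbb{R}$ of an entire function on $\mathbb{C}$, being the composition of the entire exponential with a polynomial in $x$. Hence the linear combination $g(x) := \sum_{i=1}^{k} \alpha_i f(x\mid \theta_i,\sigma_i^2)$ is itself entire, and in particular real-analytic on all of $\mathbb{R}$.

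Given this, the argument reduces to a single appeal to the identity theorem. The interval $[a,b]$ on which $g$ vanishes contains infinitely many points with an interior accumulation point $x_0 \in (a,b)$, so the zero set of the entire function $g$ has an accumulation point in the connected domain $\mathbb{C}$. The identity theorem then forces $g\equiv 0$ on $\mathbb{C}$, which upon restriction yields $g(x)=0$ for all $x\in\mathbb{R}$, as claimed in \eqref{eq:lem-ident-conclusion}.

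If one prefers a purely real-analytic presentation, I would argue instead as follows. At any interior $x_0\in (a,b)$, smoothness of $g$ and its vanishing on an open neighborhood of $x_0$ give $g^{(n)}(x_0)=0$ for every $n\geq 0$. Since each Gaussian density admits a Taylor series at $x_0$ with infinite radius of convergence that converges to it globally, the same holds for the finite sum $g$. Thus the identically zero Taylor series of $g$ at $x_0$ coincides with $g$ on all of $\mathbb{R}$, yielding $g\equiv 0$. There is no substantive obstacle in this proof: the only mild verification is that $g$ has globally convergent Taylor series, which is inherited from each $f(\cdot\mid\theta_i,\sigma_i^2)$ via a standard estimate on the derivatives of Gaussians (equivalently, from entireness together with Cauchy's estimate).
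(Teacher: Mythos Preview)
Your proof is correct and takes a genuinely different route from the paper. The paper's argument proceeds by first shifting the interval so that it is centered at the origin, then multiplying the identity by $\exp(-x^{2}/\sigma_{i_1}^{2})$ (with $\sigma_{i_1}$ the smallest variance) so that each term becomes $\beta_i \exp(s_i^{2}x^{2}+m_i x)$ with $s_i^{2}\geq 0$; this expression is then recognized as a bilateral Laplace/moment-generating transform of a signed Gaussian mixture evaluated on a neighborhood of $0$, and the injectivity of that transform is invoked to conclude. Your route is more elementary and more robust: you simply note that each Gaussian density is the restriction of an entire function and apply the identity theorem directly. This avoids the change of variables, the auxiliary multiplication, and the Laplace-transform machinery altogether, and the same argument would apply verbatim to any finite linear combination of functions that extend to entire functions (or, in the real-analytic version, to any real-analytic functions whose Taylor series at a point converge globally). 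The paper's approach buys nothing extra here; your argument is the cleaner one.
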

\begin{proof}
    \textit{Step 1. (Centralize and normalize coefficients). }
    Suppose that there exists $\alpha_1, \alpha_2, \dots, \alpha_n \in \mathbb{R}$ such that
    \begin{equation*}
        \alpha_1 f(x | \theta_1, \sigma_1^2) + \dots + \alpha_k f(x | \theta_k, \sigma_k^2) = 0
    \end{equation*}
    for all $x\in [a, b]$. Denote by $\theta'_i = \theta_i - \dfrac{a+b}{2}$ for all $i=1, \dots, k$, then 
    \begin{equation}\label{eq:lem-local-ident}
        \alpha_1 \dfrac{1}{\sqrt{2\pi} \sigma_1} \exp\left(-\dfrac{(x-\theta_1')^2}{2\sigma_1^2} \right) + \dots + \alpha_k \dfrac{1}{\sqrt{2\pi} \sigma_k} \exp\left(-\dfrac{(x-\theta_k')^2}{2\sigma_k^2} \right) = 0,
    \end{equation}
    for all $x\in [-\frac{b-a}{2}, \frac{b-a}{2}]$. Denote by $\sigma_{i_1} = \min \{\alpha_1, \dots, \alpha_k\}$. Multiple both sides of \eqref{eq:lem-local-ident} by $\exp(-\frac{x^2}{\sigma_{i_1}^2})$, and denote by $s_i^2 = \frac{1}{\sigma_{i_1}^2} - \frac{1}{2\sigma_{i}^2}, m_i = \theta_i'/\sigma_i^2, \beta_i = \dfrac{1}{\sqrt{2\pi}\sigma_i} \exp(-(\theta_i')^2/2\sigma_i^2)$ for all $i=1,\dots, k$, we have
    \begin{equation}\label{eq:lem-local-ident-rewrite}
        \beta_1 \exp\left(s_1^2 x^2 + m_1 x\right) + \dots + \beta_k \exp\left(s_k^2 x^2 + m_k x\right) = 0,
    \end{equation}
    for all $x \in [-\frac{b-a}{2}, \frac{b-a}{2}]$.
    
    \textit{Step 2. (Use properties of Laplace transformation).} The left-hand side of equation \eqref{eq:lem-local-ident-rewrite} is the Laplace transformation of $\sum_{i=1}^{k} \beta_i f(x | m_i, s_i^2)$ and is identical to 0 in an open set around 0. Hence $$\sum_{i=1}^{k} \beta_i f(x | m_i, s_i^2) = 0,$$
    for all $x\in \mathbb{R}$. It implies that 
    \begin{equation*}
        \beta_1 \exp\left(s_1^2 x^2 + m_1 x\right) + \dots + \beta_k \exp\left(s_k^2 x^2 + m_k x\right) = 0,
    \end{equation*}
    for all $x \in \mathbb{R}$, which is equivalent to equation~\eqref{eq:lem-ident-conclusion}.
\end{proof}
\subsection{Proof of Proposition~\ref{prop:KDE-distinguish}}
\label{subsec:proof:prop:KDE-distinguish}
If $k_{\sigma}$ is the Gaussian kernel with $m > K$, then we get the conclusions as  direct consequences of Example~\ref{example:distinguishable_condition}(a). If $k_{\sigma}$ is the multivariate Student kernel, then $h_0$ has a tail heavier than Gaussian tail, so that we get the conclusions as consequences of Proposition~\ref{theorem:example-distinguishable-Gaussian}(a).
\subsection{Proof of Proposition~\ref{prop:neural_networks}}
\label{subsec:proof:prop:neural_networks}
Because $T$ has a finite and postive number of layers, it is a piecewise linear and non-linear function. So the conclusions are direct consequences of Proposition~\ref{theorem:example-distinguishable-Gaussian}(b).

{ 
\subsection{Proof of Theorem~\ref{thm:density_estimation_rate}}
This result can be obtained by modifying the proof of Theorem 7.4 in \cite{geer2000empirical}. Recall that we defined the function class 
\begin{equation}
\overline{\mathcal{P}}^{1/2}_k(\Theta,\epsilon)  
 = \left\{\bar p_{\lambda G}^{1/2}  : G \in \mathcal{O}_k(\Theta), ~ h(\bar p_{\lambda G}, p_{\lambda^* G_*}) \leq \epsilon\right\},
\end{equation}
where for any $G \in \mathcal{O}_K(\Theta)$, 
we write $\bar{p}_{\lambda G} = (p_{\lambda G} + p_{\lambda^*G_*})/2$, 
and measure the complexity of this class
through the bracketing entropy integral
\begin{equation*}   
\mathcal{J}_B(\epsilon, \overline{\mathcal{P}}^{1/2}_k(\Theta,\epsilon), \nu)
= \int_{\epsilon^2/2^{13}}^{\epsilon} \sqrt{\log N_B(u, \overline{\mathcal{P}}_k^{1/2}(\Theta,u), \nu)}du \vee \epsilon,
 \end{equation*}
where $N_B(\epsilon, X, \eta)$ denotes
the $\epsilon$-bracketing number
of a metric space $(X,\eta)$ and $\nu$ is the Lebesgue measure. We denote by $P_{\lambda G}$ the distribution corresponding to the density $p_{\lambda G}$. The technique to prove this theorem is to bound the convergence rate by the increments of an empirical processes: 
$$\nu_n(\lambda G) = \sqrt{n} \int_{\{p_{\lambda^* G_*}\} > 0} \dfrac{1}{2} \log \dfrac{\overline{p}_{\lambda G}}{p_{\lambda^* G_*}} d(P_n - P_{\lambda^* G_*}),$$ 
where $P_n = \frac{1}{n}\sum_{i=1}^{n} \delta_{X_i}$ is the empirical measure ($X_1, \dots, X_n \overset{iid}{\sim} p_{\lambda^* G_*}$).
We first recall Theorem 5.11 in \cite{geer2000empirical} with the notations adapted from our setting:
\begin{theorem}\label{thm:bound-tail-uniform}
    Let $R >0$, $k\geq 1$, and $\mathcal{G}$ be a subset of $\mathcal{O}_k(\Theta)$, which contains $G_*$. Given $C_1 < \infty$, for all $C$ sufficiently large, and for $n\in \mathbb{N}$ and $t>0$ satisfying
    \begin{equation}\label{eq:cond-t-leq}
        t \leq \sqrt{n}((8R) \wedge (C_1 R^2)),
    \end{equation}
    and
    \begin{equation}\label{eq:cond-t-geq}
        t\geq C^2(C_1+1)\left(R \vee \int_{t/(2^6\sqrt{n})}^{R} H_{B}^{1/2}\left(\frac{u}{\sqrt{2}}, \overline{\mathcal{P}}_k^{1/2}(\Theta, R), \nu\right)du\right),
    \end{equation}
    we have
    \begin{equation}
        \mathbb{P}_{\lambda^* G_*} \left(\sup_{G\in \mathcal{G}, h(\overline{p}_{\lambda G}, p_{\lambda^* G_*})\leq R} |\nu_{n}(\lambda G)| \geq t\right) \leq C \exp\left(-\dfrac{t^2}{C^2(C_1+1)R^2}\right).
    \end{equation}
\end{theorem}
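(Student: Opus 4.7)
The plan is to prove this tail bound by the standard peeling-and-chaining technology for empirical processes indexed by a class with controlled bracketing entropy, exactly in the spirit of Theorem 5.11 of \cite{geer2000empirical}. First I would rewrite $\nu_n(\lambda G) = \sqrt{n}(P_n - P_{\lambda^* G_*})(g_{\lambda G})$, where $g_{\lambda G}(x) = \tfrac{1}{2}\log\bigl(\overline{p}_{\lambda G}(x)/p_{\lambda^* G_*}(x)\bigr)$ on $\{p_{\lambda^* G_*} > 0\}$, and record the two functional-analytic facts that drive the entire argument. Namely, because $\overline{p}_{\lambda G} \geq p_{\lambda^* G_*}/2$ one has $g_{\lambda G} \geq -\tfrac{1}{2}\log 2$, and standard convexity identities for the Hellinger affinity show that on the sub-class $\{h(\overline{p}_{\lambda G}, p_{\lambda^* G_*}) \leq R\}$ both $\int g_{\lambda G}^2\,dP_{\lambda^* G_*}$ is of order $R^2$ and the pointwise envelope is of order $C_1 R^2$ provided (\ref{eq:cond-t-leq}) holds.

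Next I would convert bracketing entropy for half-densities into bracketing entropy for the log-ratio class. For each dyadic level $u_j = 2^{-j} R$, I would take $u_j/\sqrt{2}$-brackets of $\overline{\mathcal{P}}_k^{1/2}(\Theta, R)$ in $L^2(\nu)$; the identity $h^2(\overline{p}_1,\overline{p}_2) = \tfrac{1}{2}\int(\overline{p}_1^{1/2}-\overline{p}_2^{1/2})^2\,d\nu$ translates these into Hellinger brackets for $\overline{p}_{\lambda G}$, from which one obtains brackets for $g_{\lambda G}$ whose $L^2(P_{\lambda^* G_*})$-width and pointwise width are controlled, up to a universal constant, by $u_j$. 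The number of such brackets at level $j$ is $N_j = N_B(u_j/\sqrt{2}, \overline{\mathcal{P}}_k^{1/2}(\Theta, R), \nu)$.

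The heart of the proof is the chaining step: to each $g_{\lambda G}$ I associate its nearest bracket at every scale, write the telescoping decomposition $g_{\lambda G} - g_{0} = \sum_j (g_{j+1} - g_{j})$, and apply Bernstein's inequality to $(P_n - P_{\lambda^* G_*})(g_{j+1} - g_{j})$. Assigning a tail budget $t_j \propto u_{j-1}\sqrt{\log N_j}$ at each level and union-bounding over $N_j^2$ bracket pairs yields the familiar aggregate bound $\sum_j t_j \lesssim \int_{t/(2^6\sqrt{n})}^{R} \sqrt{\log N_B(u/\sqrt{2},\overline{\mathcal{P}}_k^{1/2}(\Theta, R),\nu)}\,du$. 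The lower limit of the entropy integral corresponds to truncating the chain once the approximation error in $L^2(P_{\lambda^* G_*})$ falls below $t/(2^6\sqrt{n})$, and hypothesis (\ref{eq:cond-t-geq}) is exactly what guarantees that this aggregate chaining contribution stays below $t/2$, so only the initial-level deviation remains.

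The final step is to combine the chaining bound with a single Bernstein inequality applied to the initial-level element (covered at scale $R$), yielding the claimed tail $C\exp(-t^2/(C^2(C_1+1)R^2))$. The main obstacle, and the place where I would spend most of the effort, is the careful accounting of the Bernstein variance versus boundedness trade-off: hypothesis (\ref{eq:cond-t-leq}) is what keeps $t$ in the sub-Gaussian regime, so that the variance term $t^2/R^2$ dominates the linear-in-$t$ term produced by the $C_1 R^2$ pointwise envelope in the Bernstein exponent. Tracking the numerical constant $C^2(C_1+1)$ through the peeling in $R$ and through the telescoping losses of the chaining, and verifying that the entropy integral condition absorbs all of these in one quantified step, is the technical crux; everything else is standard bookkeeping around a Bernstein-plus-chaining scheme.
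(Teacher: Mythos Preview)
Your proposal is a correct sketch of the standard Bernstein-plus-chaining argument, and it matches the approach in van de Geer's Theorem 5.11. Note, however, that the paper does not prove this statement at all: it explicitly introduces it as ``We first recall Theorem 5.11 in \cite{geer2000empirical} with the notations adapted from our setting,'' and then uses it as a black box in the proof of Theorem~\ref{thm:density_estimation_rate}. So there is nothing to compare against in the paper itself; your sketch is simply a reconstruction of the cited result's proof.
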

Now we proceed to prove Theorem~\ref{thm:density_estimation_rate}, the proof is divided into three parts: Bounding the tail probability of $h(p_{\hat{\lambda}_n \hat{G}_n}, p_{\lambda^* G_*})$ by sums of empirical processes increments using chaining technique, bounding the empirical processes increments using Theorem~\ref{thm:bound-tail-uniform}, and bounding the expectation of $h(p_{\hat{\lambda}_n \hat{G}_n}, p_{\lambda^* G_*})$ using its tail probability. 

\paragraph{Step 1 (Bounding the tail probability $h(p_{\hat{\lambda}_n \hat{G}_n}, p_{\lambda^* G_*})$ by sums of empirical processes increments):} Firstly, by Lemma 4.1 and 4.2 of \cite{geer2000empirical}, we have 
$$\dfrac{1}{16}h^2(p_{\hat{\lambda}_n \hat{G}_n}, p_{\lambda^* G_*}) \leq h^2(\overline{p}_{\hat{\lambda}_n \hat{G}_n}, p_{\lambda^* G_*})\leq \dfrac{1}{\sqrt{n}} \nu_n(\hat{\lambda}_n\hat{G}_n).$$
Hence, for any $\delta > \delta_n := (\log n / n)^{1/2}$, we have
\begin{align*}
    \Pbb_{\lambda^* G_*}(h(p_{\hat{\lambda}_n \hat{G}_n}, p_{\lambda^* G_*})\geq \delta) & \leq \Pbb_{\lambda^* G_*}\bigg( \nu_n(\hat{\lambda}_n \hat{G}_n) - \sqrt{n} h^2(\overline{p}_{\hat{\lambda}_n \hat{G}_n}, p_{\lambda^* G_*})\geq 0,\\
    & \hspace{5cm} h(\overline{p}_{\hat{\lambda}_n \hat{G}_n}, p_{\lambda^* G_*}) \geq \delta / 4 \bigg)\\
    & \leq \Pbb_{\lambda^*G_*}\left(\sup_{\lambda, G: h(\overline{p}_{\lambda G}, p_{\lambda^* G_*}) \geq \delta / 4} [ \nu_n(\lambda G) - \sqrt{n} h^2 (\overline{p}_{\lambda G}, p_{\lambda^* G_*})] \geq 0 \right)\\
    & \leq \sum_{s=0}^{S} \Pbb_{\lambda^*G_*}\left(\sup_{\lambda, G: 2^{s}\delta/4\leq h(\overline{p}_{\lambda G}, p_{\lambda^* G_*})\leq 2^{s+1}\delta/4} | \nu_n(\lambda G)| \geq \sqrt{n}2^{2s} (\delta/4)^{2} \right)\\
    &\leq \sum_{s=0}^{S} \Pbb_{\lambda^*G_*}\left(\sup_{\lambda, G:  h(\overline{p}_{\lambda G}, p_{\lambda^* G_*})\leq 2^{s+1}\delta/4} | \nu_n(\lambda G)| \geq \sqrt{n}2^{2s} (\delta/4)^{2} \right),
\end{align*}
where $S$ is a smallest number such that $2^S \delta /4 > 1$, as $h(\overline{p}_{\lambda G}, p_{\lambda^* G_*}) \leq 1$. Now we will bound the each term above using Theorem~\ref{thm:bound-tail-uniform}.
\paragraph{Step 2 (Bounding the empirical processes increments using Theorem~\ref{thm:bound-tail-uniform}):} In Theorem~\ref{thm:bound-tail-uniform}, choose $R = 2^{s+1}\delta, C_1=15$ and $t = \sqrt{n} 2^{2s} (\delta/4)^2$, we can readily check that Condition~\eqref{eq:cond-t-leq} satisfies (because $2^{s-1}\delta/4\leq 1$ for all $s=0, \dots, S$). Condition~\eqref{eq:cond-t-geq} satisfies thanks to Assumption A3:
\begin{align*}
    \int_{t/(2^6\sqrt{n})}^{R} H_B^{1/2}\left(\dfrac{u}{\sqrt{2}}, \mathcal{P}_{k}^{1/2}(\Theta, R), \nu\right) du \vee 2^{s+1} \delta  & = \sqrt{2}\int_{R^2/2^{13}}^{R/\sqrt{2}} H_B^{1/2}\left(u, \mathcal{P}_{k}^{1/2}(\Theta, R), \nu\right) du \vee 2^{s+1} \delta\\
    & \leq 2 \mathcal{J}_B(R, \mathcal{P}^{1/2}(\Theta, R), \nu)\\
    &\leq 2J \sqrt{n}2^{2s+1} \delta^2 = 2^{6} J t.
\end{align*}
So the conclusion of Theorem~\ref{thm:bound-tail-uniform} gives us 
\begin{equation}
    \Pbb_{\lambda^* G_*}(h(p_{\hat{\lambda}_n \hat{G}_n}, p_{\lambda^* G_*}) > \delta) \leq C\sum_{s=0}^{\infty} \exp\left(\dfrac{2^{2s}n\delta^2}{J^2 2^{14}}\right) \leq c\exp\left(\dfrac{n\delta^2}{c^2}\right),
\end{equation}
where $c$ is a large constants that does not depend on $\lambda^*, G_*$. 
\paragraph{Step 3 (Implying the bound on supremum of expectation):}
Thus, we have
\begin{equation*}
    \mathbb{E} h(p_{\hat{\lambda}_n \hat{G}_n}, p_{\lambda^* G_*}) = \int_0^{\infty} \mathbb{P}(h(p_{\hat{\lambda}_n \hat{G}_n}, p_{\lambda^* G_*}) >\delta) d\delta \leq \delta_n + c\int_{\delta_n}^{\infty} \exp\left(-\dfrac{n\delta^2}{c^2} \right)\leq \tilde{c} \delta_n,
\end{equation*}
for some $\tilde{c}$ does not depend on $\lambda^*, G_*$. Hence, we finally proved that
\begin{equation*}
\sup_{G_{*} \in \mathcal{O}_k(\Theta), \lambda^*\in [0, 1]} \mathbb{E}_{\lambda^*, G_*} h(p_{\widehat{\lambda}_n\widehat{G}_n}, p_{\lambda^*G_*})  
\leq  C\sqrt{\log n/n}.
\end{equation*}
As a consequence, we obtain the conclusion of the theorem.
}

\section{Proof of Section~\ref{sec:conv-rate-density-estimation}}\label{sec:section_3}
\subsection{Proof of Proposition~\ref{prop:entropy-number-calculation}}
We first need to denote some notations that are required for the proof. Those notations are well-known in Empirical Processes field \cite{geer2000empirical}. Denote by 
\begin{equation*}
    \mathcal{P}_k(\Theta) = \{p_{\lambda G}: \lambda\in [0,1], G\in\Ocal_k(\Theta) \},
\end{equation*}
and let $N(\epsilon, \Pcal_k(\Theta), \|\cdot\|_{\infty})$ be the $\epsilon-$covering number of $(\Pcal_k(\Theta, \|\cdot\|_{\infty})$ and $N_B(\epsilon, \mathcal{P}_k(\Theta), h)$ be the bracketing number of $\mathcal{P}_k(\Theta)$ measured by Hellinger metric $h$. $H_B(\epsilon, \mathcal{P}_k(\Theta), h) = \log  N_B(\epsilon, \mathcal{P}_k(\Theta), h)$ is called the bracketing entropy of $\mathcal{P}_k(\Theta)$ under metric $h$. Let $\overline{\mathcal{P}}_k(\Theta) = \{(p_{\lambda G} + p_{\lambda^* G_*})/2 : \lambda \in [0,1], G\in \Ocal_k(\Theta) \}$ and $\overline{\mathcal{P}}_k^{1/2}(\Theta) = \{p^{1/2} : p\in \overline{\mathcal{P}}_k(\Theta) \}$.
We want to show that
\begin{equation}\label{eq:entropy-integral-bound}
    \mathcal{J}_B(\epsilon, \overline{\Pcal_k}^{1/2}(\Theta, \epsilon), L^2(\mu)) = \left(\int_{\epsilon^2/2^{2^{13}}}^{\epsilon} H_B^{1/2}(\delta, \overline{\Pcal_k}^{1/2}(\Theta, \delta), \nu) d\delta \vee \delta\right) \lesssim \sqrt{n} \epsilon^2,
\end{equation}
for all $n > N$ large enough and $\epsilon > (\log n / n)^{1/2}$. We proceed to show that claim~\eqref{eq:entropy-integral-bound} will be proved if 
\begin{equation}\label{eq:covering-supnorm-bound}
    \log N (\epsilon, \mathcal{P}_{k}(\Theta), \norm{\cdot}_{\infty}) \lesssim \log(1/\epsilon),
\end{equation}
\begin{equation}\label{eq:entropy-hellinger-bound}
    H_B (\epsilon, \mathcal{P}_{k}(\Theta), h) \lesssim \log(1/\epsilon),
\end{equation}
and then prove claim~\eqref{eq:covering-supnorm-bound} and~\eqref{eq:entropy-hellinger-bound}. 

\paragraph{Proof of that claim~\eqref{eq:entropy-hellinger-bound} implies claim~\eqref{eq:entropy-integral-bound}} Because $\overline{\mathcal{P}}_k^{1/2}(\Theta, \delta) \subset \overline{\mathcal{P}}_k^{1/2}(\Theta)$ and from the definition of Hellinger distance, 
\begin{equation*}
    H_B(\delta, \overline{\mathcal{P}}_k^{1/2}(\Theta, \delta), \mu) \leq H_B(\delta, \overline{\mathcal{P}}_k^{1/2}(\Theta), \mu) = H_B(\frac{\delta}{\sqrt{2}}, \overline{\mathcal{P}}_k(\Theta), h).
\end{equation*}
Now use the fact that for densities $f_*, f_1, f_2$, we have $h^2((f_1 + f_*)/2, (f_2 + f_*)/2)\leq h^2(f_1, f_2)/2$, oen can readily check that $H_B(\frac{\delta}{\sqrt{2}}, \overline{\mathcal{P}}_k(\Theta), h)\leq H_B(\delta, \mathcal{P}_k(\Theta), h)$. Hence, if claim~\eqref{eq:entropy-hellinger-bound} holds true, then 
\begin{equation*}
    H_B(\delta, \overline{\mathcal{P}}_k^{1/2}(\Theta, \delta), \mu) \leq H_B(\delta, \mathcal{P}_k(\Theta), h) \lesssim \log (1/\delta),
\end{equation*}
which implies that
\begin{equation*}
    \mathcal{J}_B(\epsilon, \overline{\mathcal{P}}_k^{1/2}(\Theta, \delta), \mu)\lesssim \epsilon (\log(2^{13}/\epsilon^2 ))^{1/2} < n \epsilon^2,
\end{equation*}
for all $\epsilon > (\log n / n)^{1/2}$. Hence, claim~\eqref{eq:entropy-integral-bound} is proved. 
\paragraph{Proof of claim~\eqref{eq:covering-supnorm-bound}} By invoking the proof of Lemma 2.1. of \cite{Ho-Nguyen-Ann-16}, we have a $\epsilon$-net $\mathcal{S}$ for $(\{p_{G}: G\in \mathcal{O}_k(\Theta), h \})$ with the cardinality being bounded as follows
\begin{equation*}
    |\mathcal{S}|\leq \left(\dfrac{2d\overline{\lambda}}{\epsilon}\right)^{d(d+1)k/2} \times \left(\dfrac{2a}{\epsilon}\right)^{dk} \left(\dfrac{5}{\epsilon}\right)^{k}.
\end{equation*}
Denote by $\mathcal{G}$ the set of latent mixing measures $G$ in that net. Let $\mathcal{S}_0$ be an $\epsilon-$net in $[0,1]$ for $\lambda$, it is seen that $|\mathcal{S}_0|\leq 1/\epsilon$. Now we form a net for $\mathcal{P}_{k}(\Theta)$ by $\{p_{\lambda G} : \lambda \in \mathcal{S}_0, G\in \mathcal{G} \}$. Hence, for any $\lambda, G$, there exists $\tilde{\lambda}\in \mathcal{S}_0, G \in \mathcal{G}$ such that
\begin{equation*}
    |\lambda - \tilde{\lambda}| \leq \epsilon, \norm{p_{G} - p_{\tilde{G}}}_{\infty} \leq \epsilon.
\end{equation*}
This implies
\begin{align*}
    \norm{p_{\lambda G} - p_{\tilde{\lambda}\tilde{G}}}_{\infty}& \leq \norm{p_{\lambda G} - p_{\tilde{\lambda}{G}}}_{\infty} + \norm{p_{\tilde{\lambda}{G}} - p_{\tilde{\lambda}\tilde{G}}}_{\infty} \\
    &\leq |\lambda - \tilde{\lambda}|(\norm{h_0}_{\infty} + \norm{p_G}_{\infty}) + \tilde{\lambda} \norm{p_{G} - p_{\tilde{G}}}_{\infty}\\
    &\leq \epsilon \left(\norm{h_0}_{\infty} + \frac{1}{(\sqrt{2\pi}\underline{\lambda})^{d/2}}\right) + \epsilon\\
    & \lesssim \epsilon.
\end{align*}
Hence, we get an $\epsilon-$net for $\mathcal{P}_k(\Theta)$ with the cardinality less than or equal 
\begin{equation*}
    |\mathcal{S}_0|\times |\mathcal{S}| = \dfrac{1}{\epsilon}\times \left(\dfrac{2d\overline{\lambda}}{\epsilon}\right)^{d(d+1)k/2} \times \left(\dfrac{2a}{\epsilon}\right)^{dk} \left(\dfrac{5}{\epsilon}\right)^{k}.
\end{equation*}
Thus, 
\begin{equation*}
    \log N(\epsilon, \mathcal{P}_k(\Theta), \norm{\cdot}_{\infty}) \lesssim \log(1/\epsilon).
\end{equation*} 

\paragraph{Proof of claim~\eqref{eq:entropy-hellinger-bound}} Now, from the entropy number to get the bracketing number, we let $\eta \leq \epsilon$ which will be chosen later. Let $f_1, \dots, f_N$ be a $\eta$-net for $\Pcal_k(\Theta)$. We have
\begin{equation}\label{eq:bound-tail-construct-net}
    (x-\theta)^{T} \Sigma^{-1} (x-\theta) \geq \dfrac{\enorm{x-\theta}^2}{\bar{\lambda}}\geq \dfrac{\enorm{x}^2}{4\bar{\lambda}},\quad  \forall \, \norm{x} \geq 2\sqrt{d} a, (\theta, \Sigma) \in \Theta,
\end{equation}
Moreover, $h_0$ has an exponential tail $- \log h_0(x) \gtrsim \enorm{x}^{\beta}$ for some $\beta > 0$, and $\norm{h_0}_{\infty} < C$ for some constant $C$. 
Therefore, if we let $\beta' = \min\{\beta, 2\} > 0$ and $C'=\max\left\{C, \dfrac{1}{(2\pi)^{d/2} \underline{\lambda}^{d}}\right\}$, then 
\begin{equation}
    H(x) = \begin{cases}
    C_1\exp(-\enorm{x}^{\beta'}), & \quad \enorm{x} \geq B_1,\\
    C' , &\quad \text{otherwise}
    \end{cases}
\end{equation}
is an envelop for $\Pcal_k(\Theta)$, where $C_1$ depends only on $\underline{\lambda}$ and $h_0$, $B_1$ depends on $a, \overline{\lambda}, h_0$. We can construct brackets $[p_i^L, p_i^U]$ as follows. 
\begin{equation}
    p_i^L(x) = \max\{f_i(x) - \eta, 0 \}, p_i^U(x) = \min\{f_i(x) + \eta, H(x) \}.
\end{equation}
Because for each $f\in \Pcal_k(\Theta)$, there is $f_i$ such that $\norm{f - f_i}_{\infty} < \eta$, therefore $ p_i^{L}\leq f\leq p_i^{U}$. Moreover, for any $B\geq B_1$,
\begin{align}
    \int_{\mathbb{R}^{d}} (p_i^{U} - p_i^{L}) d\mu & \leq \int_{\enorm{x} \leq B} 2\eta dx + \int_{\enorm{x} \geq B} H(x) dx \nonumber \\
    &\lesssim \eta B^{d} + B^{d}\exp\left( - B^{\beta'}\right)\label{bracket-radi},
\end{align}
where we use spherical coordinate to have
\begin{equation*}
	 \int_{\norm{x} \leq B}  dx = \dfrac{\pi^{d/2}}{\Gamma(d/2 + 1)} B^d \lesssim B^d,
\end{equation*}
and
\begin{align*}
	\int_{\norm{x} \geq B}\exp\left(-\enorm{x}^{\beta'} \right) & \lesssim \int_{r \geq B} r^{d-1} \exp \left(-r^{\beta'} \right) dr \\
	& = \dfrac{1}{\beta}\int_{B^{\beta'}}^{\infty} u^{d/\beta' - 1} \exp(-u) du \quad (\text{change of variable }u = r^{\beta'})\\
	& \leq \dfrac{1}{\beta'} B^{d-\beta'} \exp(-B^{\beta'}),
\end{align*}
in which the last step we use the inequality (with change of variable formula)
\begin{equation}
	\int_{z}^{\infty} u^{d/\beta-1} e^{-u} du = z^{d/\beta} e^{-z}\int_{0}^{\infty} (1+s)^{d/\beta-1}e^{-zs}ds \leq z^{d/\beta} e^{-z} \dfrac{1}{z - d/\beta + 1} < z^{d/\beta} e^{-z},
\end{equation}
whenever $z > d/\beta'$, and we use $z = B^{\beta'}$. Hence, in \eqref{bracket-radi}, if we choose $B = B_1(\log(1/\eta))^{1/\beta'}$ then 
\begin{equation}\label{bracket-bound}
	\int_{\mathbb{R}^{d}} (p_i^{U} - p_i^{L}) d\mu\lesssim \eta \left(\log\left(\dfrac{1}{ \eta}\right)\right)^{d/\beta'}.
\end{equation}
Therefore, there exists a positive constant $c$ which does not depend on $\eta$ such that 
\begin{equation*}
    H_B(c\eta \log(1/\eta)^{d/\beta'}, \mathcal{P}_k(\Theta), \norm{\cdot}_1)
    \lesssim \log(1/\eta).
\end{equation*}
Let $\epsilon = c \eta (\log(1/\eta))^{d/\beta'}$, we have $\log(1/\epsilon) \asymp \log(1/\eta)$, which combines with inequality $\norm{\cdot}_1\leq h^2$ leads to 
\begin{equation*}
    H_B(\epsilon, \mathcal{P}_k(\Theta), h)\leq H_B(\epsilon^2, \mathcal{P}_k(\Theta), \norm{\cdot}_1)\lesssim \log(1/\epsilon^2) \lesssim \log(1/\epsilon).
\end{equation*}
Thus, we have proved claim~\eqref{eq:entropy-hellinger-bound}.

{  We put a remark here that the technique in this proof can be generalized for any family of $f(x|\theta)$ that have sub-exponential tails, i.e. $f(x|\theta) \lesssim \exp(-\|x\|^{\gamma})$ for all $x$ large enough and $\gamma>0$. We can substitute this condition into equation~\eqref{eq:bound-tail-construct-net}, then proceed to continue the proof similarly.}

Next, we provide proofs for inverse bounds in Section~\ref{sec:conv-rate-density-estimation} of the paper. Because there are several results with the same spirit in this section, to make it easy for reader, we recall each result before proving it.  
\subsection{Proof of Theorem~\ref{theorem:distinguish_exact_specified_point_wise}}
\label{subsec:proof:theorem:distinguish_exact_specified_point_wise}
\begin{customthm}{\ref{theorem:distinguish_exact_specified_point_wise}}
Assume that $k_{*}$ is known, $f$ is first order identifiable and $(f, k_{*})$ is distinguishable from $h_{0}$. Then, for any $G \in \Ecal_{k_{*}}( \Theta)$, there exist positive constant $C_{1}$ and $C_{2}$ depending only on $\lambda^{*}, G_{*}, h_{0}, \Theta$ such that the following holds:
\begin{itemize}
\item[(a)] When $\lambda^{*} = 0$, then
	$V(p_{\lambda^{*} G_{*}}, p_{\lambda G}) \geq C_{1} \lambda$.

\item[(b)] When $\lambda^{*} \in (0, 1]$, then
\begin{align*}
	V(p_{\lambda^{*} G_{*}}, p_{\lambda G}) \geq C_{2} \underbrace{\left[ |\lambda-\lambda^{*}|+(\lambda+\lambda^{*}) W_{1}(G, G_{*}) \right]}_{\overline{W}_{1}(\lambda G, \lambda^{*} G_{*})}.
\end{align*}
\end{itemize}
\end{customthm}

We first provide the proof of the theorem for the setting $\lambda^{*} \in (0, 1]$ in Section~\ref{subsec:proof:theorem:distinguish_exact_specified_point_wise_nonzero}. Then, the proof for the setting $\lambda^{*} = 0$ is presented in Section~\ref{subsec:proof:theorem:distinguish_exact_specified_point_wise_zero}.
\subsubsection{Proof of setting $\lambda^{*} \in (0, 1]$}
\label{subsec:proof:theorem:distinguish_exact_specified_point_wise_nonzero}
Recall that, we define $\overline{W}_{1}(\lambda G, \lambda^{*} G_{*}) : = |\lambda-\lambda^{*}|+(\lambda+\lambda^{*}) W_{1}(G, G_{*})$. Besides that, $G_{*} = \sum_{i = 1}^{k_{*}} p_{i}^{*} \delta_{ \theta_{i}^{*}}$. In order to obtain the proof of the theorem for the setting $\lambda^{*} \in (0, 1]$, it is sufficient to verify the following two claims:
\begin{align}
	\lim \limits_{\epsilon \to 0} \inf \limits_{\lambda \in [0, 1], G \in \Ecal_{k_{*}}( \Theta)}{\left\{\dfrac{V(p_{\lambda G}, p_{\lambda^{*} G_{*}})}{\overline{W}_{1}(\lambda G, \lambda^{*} G_{*})}: \ \overline{W}_{1}(\lambda G, \lambda^{*} G_{*}) \leq \epsilon\right\}} > 0, \label{eq:claim_1_distin_exact__point_wise} \\
	\inf \limits_{\lambda \in [0, 1], G \in \Ecal_{k_{*} ( \Theta)}: \overline{W}_{1}(\lambda G, \lambda^{*} G_{*}) > \epsilon'} \dfrac{V(p_{\lambda G}, p_{\lambda^{*} G_{*}})}{\overline{W}_{1}(\lambda G, \lambda^{*} G_{*})} > 0, \label{eq:claim_2_distin_exact__point_wise}
\end{align}
for any $\epsilon' > 0$. 
\paragraph{Proof of claim~\eqref{eq:claim_1_distin_exact__point_wise}:}
Assume that claim~\eqref{eq:claim_1_distin_exact__point_wise} does not hold. It indicates that there exists a sequence of probability measures $G_{n} \in \Ecal_{k_{*}} ( \Theta)$ and a sequence of $\lambda_{n} \in [0, 1]$ such that $\overline{W}_{1}(\lambda_{n} G_{n}, \lambda^{*} G_{*}) \to 0$ and $V(p_{\lambda_{n} G_{n}}, p_{\lambda^{*} G_{*}})/ \overline{W}_{1}(\lambda_{n} G_{n}, \lambda^{*} G_{*}) \to 0$ as $n \to \infty$. Therefore, we have $\lambda_{n} \to \lambda^{*}$ and $W_{1}(G_{n}, G_{*}) \to 0$ as $n \to \infty$. We can relabel the atoms and weights of $G_{n}$ such that it admits the following form:
\begin{align}
	G_{n} = \sum_{i = 1}^{k_{*}} p_{i}^{n} \delta_{ \theta_{i}^{n}}, \label{eq:relabel_measure}
\end{align}
where $p_{i}^{n} \to p_{i}^{*}$ and $\theta_{i}^{n} \to \theta_{i}^{*}$ for all $i \in [k_{*}]$. To ease the ensuing presentation, we denote $\Delta \theta_{i}^{n} : = \theta_{i}^{n} - \theta_{i}^{*}$ and $\Delta p_{i}^{n} : = p_{i}^{n} - p_{i}^{*}$ for $i \in [k_{*}]$. Then, using the coupling between $G_n$ and $G_*$ such that it put mass $\min\{p_i^n, p_i^*\}$ on $\delta_{(\theta_i^n, \theta_i^*)}$, we can verify that 
\begin{align}
	W_{1}( G_{n}, G_{*}) \asymp \sum_{i = 1}^{k_{*}} \abss{ \Delta p_{i}^{n}} + p_{i}^{n} \enorm{ \Delta \theta_{i}^{n}}. \label{eq:wasserstein_equivalence}
\end{align} 
Our proof is divided into three steps.
\paragraph{Step 1 - Taylor expansion:} Invoking Taylor expansion up to the first order, we find that
\begin{align*}
	f(x| \theta_{i}^{n} ) = f(x| \theta_{i}^{*}) + (\Delta \theta_{i}^{n})^{ \top} \frac{\partial{f}}{\partial{ \theta}} (x| \theta_{i}^{*}) + R_{i}(x),
\end{align*}
where $R_{i}(x)$ is Taylor remainder such that $R_{i}(x) = o ( \enorm{ \Delta \theta_{i}^{n}})$ for $i \in [k_{*}]$. Given the above expressions, we obtain that
\begin{align}
	p_{\lambda_{n} G_{n}}(x) - p_{\lambda^{*} G_{*}}(x) = ( \lambda^{*} - \lambda_{n}) h_{0}(x) + \sum_{i = 1}^{k_{*}} \parenth{ \lambda_{n} p_{i}^{n} - \lambda^{*} p_{i}^{*}} f(x| \theta_{i}^{*}) & \nonumber \\
	& \hspace{- 5 em} + \lambda_{n} p_{i}^{n} \parenth{\Delta \theta_{i}^{n}}^{ \top} \frac{\partial{f}}{\partial{ \theta}} (x| \theta_{i}^{*}) + R(x), \label{eq:Taylor_distinguish_exact_specified_point_wise}
\end{align}
where $R(x) = \lambda_{n} \sum_{i = 1}^{n} p_{i}^{n} R_{i}(x) = o \parenth{\lambda_{n} \sum_{i = 1}^{k_{*}} p_{i}^{n} \enorm{ \Delta \theta_{i}^{n}}}$. From the expression of $W_{1}(G_{n}, G_{*})$ in~\eqref{eq:wasserstein_equivalence}, we have $R(x)/ \overline{W}_{1}(\lambda_{n} G_{n}, \lambda^{*} G_{*}) \to 0$ as $n \to \infty$ for all $x$.
\paragraph{Step 2 - Non-vanishing coefficients:} From equation~\eqref{eq:Taylor_distinguish_exact_specified_point_wise}, we can represent the ratio $\parenth{ p_{\lambda_{n} G_{n}}(x) - p_{\lambda^{*} G_{*}}(x)}/ \overline{W}_{1}(\lambda_{n} G_{n}, \lambda^{*} G_{*})$ as a linear combination of elements of $\sumfun(x)$, $f(x| \theta_{i}^{*})$, $\frac{\partial{f}}{\partial{ \theta}} (x| \theta_{i}^{*})$ for $i \in [k_{*}]$. Assume that all of the coefficients associated with these terms go to 0 as $n \to \infty$. As the coefficient with $\sumfun(x)$ goes to 0, we obtain that $( \lambda^{*} - \lambda_{n})/ \overline{W}_{1}(\lambda_{n} G_{n}, \lambda^{*} G_{*}) \to 0$ as $n \to \infty$. Furthermore, the coefficients of $f(x| \theta_{i}^{*})$, $\frac{\partial{f}}{\partial{ \theta}} (x| \theta_{i}^{*})$ vanish to 0 are equivalent to the following limits
\begin{align*}
	\parenth{ \lambda_{n} p_{i}^{n} - \lambda^{*} p_{i}^{*}}/ \overline{W}_{1}(\lambda_{n} G_{n}, \lambda^{*} G_{*}) \to 0, \ \ \
	p_{i}^{n} \enorm{ \Delta \theta_{i}^{n}}/  \overline{W}_{1}(\lambda_{n} G_{n}, \lambda^{*} G_{*}) \to 0. 
\end{align*}
As we have $( \lambda^{*} - \lambda_{n})/ \overline{W}_{1}(\lambda_{n} G_{n}, \lambda^{*} G_{*}) \to 0$, the above limits lead to
\begin{align*}
	\lambda^{*} \parenth{\Delta p_{i}^{n}}/ \overline{W}_{1}(\lambda_{n} G_{n}, \lambda^{*} G_{*}) \to 0.
\end{align*}
Putting the above results together, we obtain $1 = \overline{W}_{1}(\lambda_{n} G_{n}, \lambda^{*} G_{*})/ \overline{W}_{1}(\lambda_{n} G_{n}, \lambda^{*} G_{*}) \to 0$, which is a contraction. As a consequence, not all the coefficients of $\sumfun(x)$, $f(x| \theta_{i}^{*})$, $\frac{\partial{f}}{\partial{ \theta}} (x| \theta_{i}^{*})$ go to 0 for $i \in [k_{*}]$. 
\paragraph{Step 3: Show the contradiction using the distinguishability condition and Fatou's lemma:} Denote $m_{n}$ as the maximum of the absolute values of the coefficients of $\sumfun(x)$, $f(x| \theta_{i}^{*})$, $\frac{\partial{f}}{\partial{ \theta}} (x| \theta_{i}^{*})$ as $i \in [k_{*}]$. Since not all of these coefficients vanish to 0, we have $m_{n} \not \to 0$ as $n \to \infty$. Therefore, $d_{n} = 1/ m_{n} \not \to \infty$ as $n \to \infty$. Given the previous results, there exist $\alpha_{0}, \alpha_{1}, \ldots, \alpha_{k_{*}}$ and $\beta_{1}, \ldots, \beta_{k_{*}}$ such that not all of them are 0 and the following limit holds:
\begin{align*}
	d_{n} \cdot \frac{p_{\lambda_{n} G_{n}}(x) - p_{\lambda^{*} G_{*}}(x)}{\overline{W}_{1}(\lambda_{n} G_{n}, \lambda^{*} G_{*})} \to \alpha_{0} \sumfun(x) + \sum_{i = 1}^{k_{*}} \alpha_{i} f(x| \theta_{i}^{*}) + \beta_{i}^{\top} \frac{\partial{f}}{\partial{ \theta}} (x| \theta_{i}^{*}).
\end{align*}
By means of Fatou's lemma, we have
\begin{align}
	0 = \lim_{n \to \infty} d_{n} \cdot \frac{V( p_{\lambda_{n} G_{n}}, p_{\lambda^{*} G_{*}})}{\overline{W}_{1}(\lambda_{n} G_{n}, \lambda^{*} G_{*})} & \geq \int \mathop{ \lim \inf} \limits_{n \to \infty} d_{n} \cdot \frac{p_{\lambda_{n} G_{n}}(x) - p_{\lambda^{*} G_{*}}(x)}{\overline{W}_{1}(\lambda_{n} G_{n}, \lambda^{*} G_{*})} dx, \nonumber \\
	& = \int \parenth{ \alpha_{0} \sumfun(x) + \sum_{i = 1}^{k_{*}} \alpha_{i} f(x| \theta_{i}^{*}) + \beta_{i}^{\top} \frac{\partial{f}}{\partial{ \theta}} (x| \theta_{i}^{*})} dx.
\end{align}
The above equation indicates that 
\begin{align*}
	\alpha_{0} \sumfun(x) + \sum_{i = 1}^{k_{*}} \alpha_{i} f(x| \theta_{i}^{*}) + \beta_{i}^{\top} \frac{\partial{f}}{\partial{ \theta}} (x| \theta_{i}^{*}) = 0,
\end{align*}
for almost surely $x$. Since $(f, k_{*})$ is distinguishable from $h_{0}$ and $f$ is first order identifiable, the above equation suggests that $\alpha_{0} = \alpha_{1} = \ldots = \alpha_{k_{*}} = 0$ and $\beta_{1} = \ldots = \beta_{k_{*}} = \vec{0}$, which is a contradiction. 

As a consequence, we achieve the conclusion of claim~\eqref{eq:claim_1_distin_exact__point_wise}.
\paragraph{Proof of claim~\eqref{eq:claim_2_distin_exact__point_wise}}
Similar to the proof of claim~\eqref{eq:claim_1_distin_exact__point_wise}, we also prove claim~\eqref{eq:claim_2_distin_exact__point_wise} by contradiction. Assume that claim~\eqref{eq:claim_2_distin_exact__point_wise} does not hold. It implies that we can find sequences $\lambda_{n}' \in [0, 1]$ and $G_{n}' \in \Ecal_{k_{*}}( \Theta)$ such that $\overline{W}_{1}(\lambda_{n}' G_{n}', \lambda^{*} G_{*}) > \epsilon'$ and $V(p_{\lambda_{n}' G_{n}'}, p_{\lambda^{*} G_{*}})/ \overline{W}_{1}(\lambda_{n}' G_{n}', \lambda^{*} G_{*}) \to 0$ as $n \to \infty$. Since $[0, 1]$ and $\Theta$ are bounded sets, there exist $\lambda' \in [0, 1]$ and $G' \in \Ecal_{k_{*}}( \Theta)$ such that $\lambda_{n}' \to \lambda'$ and $W_{1}(G_{n}', G') \to 0$ as $n \to \infty$. Since $\overline{W}_{1}(\lambda_{n}' G_{n}', \lambda^{*} G_{*}) > \epsilon'$ for all $n$, the previous limits indicate that $\overline{W}_{1}(\lambda' G', \lambda^{*} G_{*}) \geq \epsilon'$. 

On the other hand, since $V(p_{\lambda_{n}' G_{n}'}, p_{\lambda^{*} G_{*}})/ \overline{W}_{1}(\lambda_{n}' G_{n}', \lambda^{*} G_{*}) \to 0$, we have $V(p_{\lambda_{n}' G_{n}'}, p_{\lambda^{*} G_{*}}) \to 0$ as $n \to \infty$. An application of Fatou's lemma leads to
\begin{align*}
	0 = \lim_{n \to \infty} V(p_{\lambda_{n}' G_{n}'}, p_{\lambda^{*} G_{*}}) & \geq \frac{1}{2} \int \mathop {\lim \inf} \limits_{n \to \infty} \abss{p_{\lambda_{n}' G_{n}'}(x) - p_{\lambda^{*} G_{*}}(x)} dx = V(p_{\lambda' G', \lambda^{*} G_{*}}).
\end{align*}
Due to the identifiability of model~\eqref{eq:true_model}, the above equation leads to $(\lambda', G') \equiv (\lambda^{*}, G_{*})$, which is a contradiction to the condition that $\overline{W}_{1}(\lambda' G', \lambda^{*} G_{*}) \geq \epsilon'$. As a consequence, we achieve the conclusion of claim~\eqref{eq:claim_2_distin_exact__point_wise}.
\subsubsection{Proof of setting $\lambda^{*} = 0$}
\label{subsec:proof:theorem:distinguish_exact_specified_point_wise_zero}
We want to show that
\begin{align}\label{eq:claim_1_distin_exact__point_wise_zero}
	\inf_{G\in \Ecal_{k_*}(\Theta)} \dfrac{V(p_{\lambda G}, p_{\lambda^*G_{*}})}{\lambda} > 0
\end{align}
\paragraph{Proof of claim~\eqref{eq:claim_1_distin_exact__point_wise_zero}:} Assume that claim~\eqref{eq:claim_1_distin_exact__point_wise_zero} does not hold. We can find two sequences $\bar{ \lambda}_{n} \in [0, 1]$ and $\bar{G}_{n} \in \Ecal_{k_{*}}( \Theta)$ such that $V(p_{ \bar{ \lambda}_{n} \bar{ G}_{n}}, p_{\lambda^{*} G_{*}})/ \bar{ \lambda}_{n} \to 0$ as $n \to \infty$. We denote $\bar{G}_{n} = \sum_{i = 1}^{k_{*}} \bar{p}_{i}^{n} \delta_{ \bar{ \theta}_{i}^{n}}$. Since $\Theta$ is a bounded set, there exists $\bar{G} = \sum_{i = 1}^{k_{*}} \bar{p}_{i} \delta_{ \bar{\theta}_{i}} \in \Ecal_{k_{*}}( \Theta)$ such that $W_{1}( \bar{G}_{n}, \bar{G}) \to 0$ as $n \to \infty$. Invoking Fatou's lemma, we obtain that
\begin{align*}
	0 = \lim_{n \to \infty} \frac{V( p_{ \bar{ \lambda}_{n} \bar{ G}_{n}}, p_{\lambda^{*} G_{*}})}{\lambda_{n}} & \geq \frac{1}{2} \int \mathop {\lim \inf} \limits_{n \to \infty} \abss{ \sum_{i = 1}^{k_{*}} \bar{p}_{i}^{n} f(x | \bar{ \theta}_{i}^{n}) - \sumfun(x)} dx \\
	& = V \parenth{ \sum_{i = 1}^{k_{*}} \bar{p}_{i} f(.| \bar{ \theta}_{i}), h_{0}(.) }.
\end{align*}
The above equation shows that $\sum_{i = 1}^{k_{*}} \bar{p}_{i} f(x| \bar{ \theta}_{i}) = h_{0}(x)$ for almost surely $x$, which is a contradiction to the hypothesis that $(f, k_{*})$ is distinguishable from $\sumfun$. Hence, we reach the conclusion of claim~\eqref{eq:claim_1_distin_exact__point_wise_zero}. 
\subsection{Proof of Theorem~\ref{theorem:distinguish_over_specified_point_wise}}
\label{subsec:proof:theorem:distinguish_over_specified_point_wise}
\begin{customthm}{\ref{theorem:distinguish_over_specified_point_wise}}
Assume that $k_{*}$ is unknown and strictly upper bounded by a given $K$. Besides that, $f$ is second order identifiable and $(f, K)$ is distinguishable from $h_{0}$. Then, for any $G \in \Ocal_{K}( \Theta)$, there exist positive constant $C_{1}$ and $C_{2}$ depending only on $\lambda^{*}, G_{*}, h_{0}, \Theta$ such that the following holds:
\begin{itemize}
\item[(a)] When $\lambda^{*} = 0$, then
	$V(p_{\lambda^{*} G_{*}}, p_{\lambda G}) \geq C_{1} \lambda$.
\item[(b)] When $\lambda^{*} \in (0, 1]$, then
\begin{align*}
	V(p_{\lambda^{*} G_{*}}, p_{\lambda G}) \geq C_{2} \underbrace{\left[ |\lambda-\lambda^{*}|+(\lambda+\lambda^{*}) W_{2}^2(G, G_{*}) \right]}_{\overline{W}_{2}(\lambda G, \lambda^{*} G_{*})}.
\end{align*}
\end{itemize}
\end{customthm}

The proof argument for the setting $\lambda^{*} = 0$ is similar to  that in Section~\ref{subsec:proof:theorem:distinguish_exact_specified_point_wise_zero}; therefore, it is omitted. We focus only on the proof of the setting $\lambda^{*} \in (0, 1]$.

Similar to the proof of Theorem~\ref{theorem:distinguish_exact_specified_point_wise}, in order to reach the conclusion of Theorem~\ref{theorem:distinguish_over_specified_point_wise} for the setting $\lambda^{*} \in (0, 1]$, it is sufficient to demonstrate the following claims:
\begin{align}
	\lim \limits_{\epsilon \to 0} \inf \limits_{\lambda \in [0, 1], G \in \Ocal_{K}( \Theta)}{\left\{\dfrac{V(p_{\lambda G}, p_{\lambda^{*} G_{*}})}{\overline{W}_{2}(\lambda G, \lambda^{*} G_{*})}: \ \overline{W}_{2}(\lambda G, \lambda^{*} G_{*}) \leq \epsilon\right\}} > 0, \label{eq:claim_1_distin_over_point_wise} \\
	\inf \limits_{\lambda \in [0, 1], G \in \Ocal_{K ( \Theta)}: \overline{W}_{2}(\lambda G, \lambda^{*} G_{*}) > \epsilon'} \dfrac{V(p_{\lambda G}, p_{\lambda^{*} G_{*}})}{\overline{W}_{2}(\lambda G, \lambda^{*} G_{*})} > 0, \nonumber
\end{align}
for any $\epsilon' > 0$. Since the proof of the second claim is similar to that of claim~\eqref{eq:claim_2_distin_exact__point_wise} in Section~\ref{subsec:proof:theorem:distinguish_exact_specified_point_wise}; therefore, it is omitted. 
\paragraph{Proof of claim~\eqref{eq:claim_1_distin_over_point_wise}:}
Similar to the proof of claim~\eqref{eq:claim_1_distin_exact__point_wise}, we use proof by contradiction for claim~\eqref{eq:claim_1_distin_over_point_wise}. Assume that claim~\eqref{eq:claim_1_distin_over_point_wise} does not hold. Given that assumption, we can find sequences $G_{n} \in \Ocal_{K} ( \Theta)$ and $\lambda_{n} \in [0, 1]$ such that $\overline{W}_{2}(\lambda_{n} G_{n}, \lambda^{*} G_{*}) \to 0$ and $V(p_{\lambda_{n} G_{n}}, p_{\lambda^{*} G_{*}})/ \overline{W}_{2}(\lambda_{n} G_{n}, \lambda^{*} G_{*}) \to 0$ as $n \to \infty$. As $W_{2}(G_{n}, G_{*}) \to 0$ as $n \to \infty$, using the similar argument as that in Section 3.2 in Ho et al.~\cite{Ho-Nguyen-Ann-17}, we can find a subsequence of $G_{n}$ (without loss of generality, we replace that subsequence by the whole sequence of $G_{n}$ with $k' \in [k_{*}, K]$ supports such that 
\begin{align}
	G_{n} = \sum_{i = 1}^{k_{*} + \bar{l}} \sum_{j = 1}^{s_{i}} p_{ij}^{n} \delta_{ \theta_{ij}^{n}}, \label{eq:relabel_measure}
\end{align}
where $\sum_{j = 1}^{s_{i}} p_{ij}^{n} \to p_{i}^{*}$ and $\theta_{ij}^{n} \to \theta_{i}^{*}$ for all $i \in [k_{*}+ \bar{l}]$. Here, $p_{i}^{*} = 0$ for $k_{*} + 1 \leq i \leq k_{*} + \bar{l}$. In addition, $s_{1}, \ldots, s_{k_{*} + \bar{l}} \geq 1$ are such that $\sum_{i = 1}^{k_{*} + \bar{l}} s_{i} = k'$.  To ease the ensuing presentation, we denote $\Delta \theta_{ij}^{n} : = \theta_{ij}^{n} - \theta_{i}^{*}$ and $\Delta p_{i.}^{n} : = \sum_{j = 1}^{s_{i}} p_{ij}^{n} - p_{i}^{*}$ for $i \in [k_{*} + \bar{l}]$. Then, based on Lemma 3.1 in Ho et al.~\cite{Ho-Nguyen-Ann-17}, we have
\begin{align}
	W_{2}^2( G_{n}, G_{*}) \asymp \sum_{i = 1}^{k_{*} + \bar{l}} \abss{ \Delta p_{i.}^{n}} + \sum_{i = 1}^{k_{*} + \bar{l}} \sum_{j = 1}^{s_{i}}  p_{ij}^{n} \enorm{ \Delta \theta_{ij}^{n}}^2. \label{eq:wasserstein_equivalence_over}
\end{align}  
We divide our proof of claim~\eqref{eq:claim_1_distin_over_point_wise} into three steps. 
\paragraph{Step 1 - Taylor expansion:}  An application of Taylor expansion up to the second order leads to
\begin{align*}
	f(x| \theta_{ij}^{n}) = f(x| \theta_{i}^{*}) + (\Delta \theta_{ij})^{\top} \frac{\partial{f}}{\partial{\theta}}(x| \theta_{i}^{*}) + (\Delta \theta_{ij})^{\top} \frac{\partial^2{f}}{\partial{\theta^2}}(x| \theta_{i}^{*}) (\Delta \theta_{ij}) + R_{ij}(x),
\end{align*} 
where $R_{ij}(x)$ is Taylor remainder such that $R_{ij}(x) = \smallO (\enorm{\Delta \theta_{ij}}^2)$ for all $i \in [k_{*} + \bar{l}]$ and $j \in [s_{i}]$. Collecting the above equations, we obtain that
\begin{align}
	p_{\lambda_{n} G_{n}}(x) - p_{\lambda^{*} G_{*}}(x) = ( \lambda^{*} - \lambda_{n}) h_{0}(x) + \sum_{i = 1}^{k_{*} + \bar{l}} \parenth{ \sum_{j = 1}^{s_{i}} \lambda_{n} p_{ij}^{n} - \lambda^{*} p_{i}^{*}} f(x| \theta_{i}^{*}) & \nonumber \\
	& \hspace{- 30 em} + \lambda_{n} \parenth{ \sum_{j = 1}^{s_{i}} p_{ij}^{n} \Delta \theta_{ij}^{n}}^{ \top} \frac{\partial{f}}{\partial{ \theta}} (x| \theta_{i}^{*}) + \lambda_{n} \parenth{ \sum_{j = 1}^{s_{i}} p_{ij}^{n} \parenth{ \Delta \theta_{ij}^{n}}^{\top} \frac{\partial^2{f}}{\partial{\theta^2}}(x| \theta_{i}^{*}) (\Delta \theta_{ij}^{n})} + R(x), \label{eq:Taylor_distinguish_over_specified_point_wise}
\end{align}
where $R(x) = \lambda_{n} \sum_{i = 1}^{k_{*} + \bar{l}} \sum_{j = 1}^{s_{i}} p_{ij}^{n} R_{ij}(x) = \smallO \parenth{ \lambda_{n} \sum_{i = 1}^{k_{*} + \bar{l}} \sum_{j = 1}^{s_{i}} p_{ij}^{n} \enorm{ \Delta \theta_{ij}^{n}}^2}$. Given the expression of $W_{2}^2(G_{n}, G_{*})$ in equation~\eqref{eq:wasserstein_equivalence_over}, we can verify that $R(x)/ \overline{W}_{2}(\lambda_{n} G_{n}, \lambda^{*} G_{*}) \to 0$ as $n \to \infty$. 
\paragraph{Step 2 - Non-vanishing coefficients:} Given the expression in equation~\eqref{eq:Taylor_distinguish_over_specified_point_wise}, we can view $(p_{\lambda_{n} G_{n}}(x) - p_{\lambda^{*} G_{*}}(x))/ \overline{W}_{2}(\lambda_{n} G_{n}, \lambda^{*} G_{*})$ as a linear combination of elements of the forms $h_{0}(x), f(x| \theta_{i}^{*}), \frac{\partial{f}}{\partial{ \theta}} (x| \theta_{i}^{*})$, and $\frac{\partial^2{f}}{\partial{\theta^2}}(x| \theta_{i}^{*})$ for all $i \in [k_{*} + \bar{l}]$. Assume that their coefficients go to 0 as $n$ tends to infinity. As the coefficient of $h_{0}(x)$ goes to 0, we have $(\lambda_{n} - \lambda^{*})/ \overline{W}_{2}(\lambda_{n} G_{n}, \lambda^{*} G_{*}) \to 0$. 

Similarly, by learning the coefficients of $f(x| \theta_{i}^{*})$ and $\brackets{ \frac{\partial^2{f}}{\partial{\theta^2}}(x| \theta_{i}^{*})}_{jj}$ for $j \in [d]$, we obtain the following limits:
\begin{align*}
	\parenth{ \sum_{j = 1}^{s_{i}} \lambda_{n} p_{ij}^{n} - \lambda^{*} p_{i}^{*}} / \overline{W}_{2}(\lambda_{n} G_{n}, \lambda^{*} G_{*}) \to 0, \quad \lambda_{n} \parenth{ \sum_{j = 1}^{s_{i}} p_{ij}^{n} \enorm{ \Delta \theta_{ij}^{n} }^2}/ \overline{W}_{2}(\lambda_{n} G_{n}, \lambda^{*} G_{*}) \to 0.
\end{align*}
Collecting the above limits, we find that
\begin{align*}
	\frac{\lambda^{*} \Delta p_{i.}^{n}}{\overline{W}_{2}(\lambda_{n} G_{n}, \lambda^{*} G_{*})} = \frac{ ( \lambda^{*} - \lambda_{n}) \parenth{ \sum_{j = 1}^{s_{i}} p_{ij}^{n}} + \parenth{ \sum_{j = 1}^{s_{i}} \lambda_{n} p_{ij}^{n} - \lambda^{*} p_{i}^{*}}}{\overline{W}_{2}(\lambda_{n} G_{n}, \lambda^{*} G_{*})} \to 0. 
\end{align*}
Putting the above results together, we achieve that $1 = \overline{W}_{2}(\lambda_{n} G_{n}, \lambda^{*} G_{*})/ \overline{W}_{2}(\lambda_{n} G_{n}, \lambda^{*} G_{*}) \to 0$, which is a contraction. Therefore, not all the coefficients associated with $h_{0}(x), f(x| \theta_{i}^{*}), \frac{\partial{f}}{\partial{ \theta}} (x| \theta_{i}^{*})$, and $\frac{\partial^2{f}}{\partial{\theta^2}}(x| \theta_{i}^{*})$ for $i \in [k_{*} + \bar{l}]$ go to 0 as $n$ tends to infinity. 
\paragraph{Step 3: Show the contradiction using the distinguishability condition and Fatou's lemma:} Similar to Step 3 in Section~\ref{subsec:proof:theorem:distinguish_exact_specified_point_wise_nonzero}, by denoting $d_{n} = 1/ m_{n}$ where $m_{n}$ is the maximum values of the absolute values of the coefficients of $h_{0}(x), f(x| \theta_{i}^{*}), \frac{\partial{f}}{\partial{ \theta}} (x| \theta_{i}^{*})$, and $\frac{\partial^2{f}}{\partial{\theta^2}}(x| \theta_{i}^{*})$, we have
\begin{align*}
	d_{n} \cdot \frac{p_{\lambda_{n} G_{n}}(x) - p_{\lambda^{*} G_{*}}(x)}{\overline{W}_{1}(\lambda_{n} G_{n}, \lambda^{*} G_{*})} \to \alpha_{0} \sumfun(x) + \sum_{i = 1}^{k_{*} + \bar{l}} \alpha_{i} f(x| \theta_{i}^{*}) + \beta_{i}^{\top} \frac{\partial{f}}{\partial{ \theta}} (x| \theta_{i}^{*}) + \gamma_{i}^{\top} \frac{\partial^2{f}}{\partial{ \theta^2}} (x| \theta_{i}^{*}) \gamma_{i},
\end{align*}
where $\alpha_{i}, \beta_{i}, \gamma_{i}$ are some coefficients such that not all of them are 0. However, the Fatou's lemma suggests that the RHS of the above equation is 0 for almost surely $x$. Since $(f, K)$ is distinguishable from $h_{0}$, it shows that $\alpha_{i} = 0$, $\beta_{i} = \vec{0} \in \Rspace^{d}$, and $\gamma_{i} = \vec{0} \in \Rspace^{d \times d}$ for all $i \in [k_{*} + \bar{l}]$--- a contradiction. As a consequence, we obtain the conclusion of claim~\eqref{eq:claim_1_distin_over_point_wise}.
\subsection{Proof of Theorem~\ref{theorem:distinguish_over_specified_weakly_ident_point_wise}}\label{subsec:proof:theorem:distinguish_over_specified_weakly_ident_point_wise}
\begin{customthm}{\ref{theorem:distinguish_over_specified_weakly_ident_point_wise}}
Assume that $k_{*}$ is unknown and strictly upper bounded by a given $K$. Besides that, $f$ is location-scale Gaussian distribution and $(f, K)$ with fixed variance is distinguishable in any order from $h_{0}$. Then, for any $G \in \Ocal_{K}( \Theta)$, there exist positive constant $C_{1}$ and $C_{2}$ depending only on $\lambda^{*}, G_{*}, h_{0}, \Theta$ such that the following holds:
\begin{itemize}
\item[(a)] When $\lambda^{*} = 0$, then
	$V(p_{\lambda^{*} G_{*}}, p_{\lambda G}) \geq C_{1} \lambda$.

\item[(b)] When $\lambda^{*} \in (0, 1]$, then
\begin{align*}
	V(p_{\lambda^{*} G_{*}}, p_{\lambda G}) \geq C_{2} \overline{W}_{\overline{r}(K-k_*)}(\lambda G, \lambda^{*} G_{*}).
\end{align*}
\end{itemize}
\end{customthm}
The proof argument for the setting $\lambda^{*} = 0$ is similar to  that in Section~\ref{subsec:proof:theorem:distinguish_exact_specified_point_wise_zero}; therefore, it is omitted. We focus only on the proof of the setting $\lambda^{*} \in (0, 1]$.

Denote by $\overline{r}_1 = \overline{r}(K-k_*) $. Similar to the proof of Theorem~\ref{theorem:distinguish_exact_specified_point_wise}, in order to reach the conclusion of Theorem~\ref{theorem:distinguish_over_specified_weakly_ident_point_wise} for the setting $\lambda^{*} \in (0, 1]$, it is sufficient to demonstrate the following claims:
\begin{align}
	\lim \limits_{\epsilon \to 0} \inf \limits_{\lambda \in [0, 1], G \in \Ocal_{K}( \Theta)}{\left\{\dfrac{V(p_{\lambda G}, p_{\lambda^{*} G_{*}})}{\overline{W}_{\overline{r}_1}(\lambda G, \lambda^{*} G_{*})}: \ \overline{W}_{\overline{r}_1}(\lambda G, \lambda^{*} G_{*}) \leq \epsilon\right\}} > 0, \label{eq:claim_distin_weak_point_wise} \\
	\inf \limits_{\lambda \in [0, 1], G \in \Ocal_{K ( \Theta)}: \overline{W}_{\overline{r}_1}(\lambda G, \lambda^{*} G_{*}) > \epsilon'} \dfrac{V(p_{\lambda G}, p_{\lambda^{*} G_{*}})}{\overline{W}_{\overline{r}_1}(\lambda G, \lambda^{*} G_{*})} > 0, \nonumber
\end{align}
for any $\epsilon' > 0$. Since the proof of the second claim is similar to that of claim~\eqref{eq:claim_2_distin_exact__point_wise} in Section~\ref{subsec:proof:theorem:distinguish_exact_specified_point_wise}; therefore, it is omitted. We now proceed to prove claim~\eqref{eq:claim_distin_weak_point_wise}. Suppose that it is not correct, that is, there exist sequences $\lambda_n$ and $G_n = \sum_{i=1}^{k_n} p_i^n \delta_{\theta_i^n} \in \Ocal_K(\Theta)$ such that $\overline{W}_{\overline{r}_1}(\lambda_n G_n, \lambda^{*} G_{*}) \to 0$ and $V(p_{\lambda_n G_n}, p_{\lambda^* G_{*}})/\overline{W}_{\overline{r}_1}(\lambda_n G_n, \lambda^{*} G_{*}) \to 0$. For the ease of presentation, we consider the one dimension Gaussian case where $(\mu, \Sigma) = (\theta, v)$, the higher dimension cases are treated similar. 

We can use the subsequence argument to have $\lambda^* \geq \lambda_n$ for all $n$ and $G_n$ can be assumed to have a fixed number of atoms $k'$ (less than or equals $K$) and have a representation as in \eqref{eq:relabel_measure}, that is,
\begin{align}
	G_{n} = \sum_{i = 1}^{k_{*} + \bar{l}} \sum_{j = 1}^{s_{i}} p_{ij}^{n} \delta_{ (\theta_{ij}^{n}, v_{ij}^{n})}, \label{eq:relabel_measure}
\end{align}
where $\sum_{j = 1}^{s_{i}} p_{ij}^{n} \to p_{i}^{*}$ and $\theta_{ij}^{n} \to \theta_{i}^{*}, v_{ij}^{n} \to v_{i}^{*}$ for all $i \in [k_{*}+ \bar{l}]$. Here, $p_{i}^{*} = 0$ for $k_{*} + 1 \leq i \leq k_{*} + \bar{l}$. In addition, $s_{1}, \ldots, s_{k_{*} + \bar{l}} \geq 1$ are such that $\sum_{i = 1}^{k_{*} + \bar{l}} s_{i} = k'$.

\paragraph{Step 1 - Taylor expansion:} Using Taylor expansion of $f$ around $\{(\theta_i^*, v_i^*)\}_{i=1}^{k_*}$ to the $\overline{r}_1-$th order we have
\begin{align*}
    p_{\lambda_n G_n}(x) - p_{\lambda^* G_*}(x) & = (\lambda^* - \lambda_n)   h_0(x) + \lambda_n (\sum_{i=1}^{k_* + \underline{l}} \sum_{j=1}^{s_i} p_{ij}^{n} f(x|\theta_{ij}^{n}, v_{ij}^{n})) - \sum_{i=1}^{k_*} {p}_i^* f(x|\theta_i^*, v_i^*)\\
    & = (\lambda^* - \lambda_n)   h_0(x) + \sum_{i=1}^{k_* + \underline{l}} \sum_{j=1}^{s_i} \lambda_n p_{ij}^{n} \sum_{|\boldsymbol{\alpha}|=1}^{\overline{r}_1} (\Delta\theta_{ij}^{n})^{\alpha_1} (\Delta v_{ij}^{n})^{\alpha_2} \dfrac{1}{\boldsymbol{\alpha}!} \dfrac{\partial^{|\boldsymbol{\alpha}| f(\theta_i^*, v_i^*)}}{\partial^{\alpha_1}\theta \partial^{\alpha_2} v}\\
    & + \sum_{i=1}^{k_*+\underline{l}} (\Delta{p}_{i \cdot}^n) f(x|\theta_i^*, v_i^*) + R(x),
\end{align*}
where $\boldsymbol{\alpha} = (\alpha_1, \alpha_2)$, $|\boldsymbol{\alpha}| = \alpha_1 + \alpha_2, \boldsymbol{\alpha}! = \alpha_1!\alpha_2!$, $\Delta \overline{p}^{n}_{i\cdot} = \lambda_n \sum_{j}p_{ij}^n - {p}_i^*$, $\Delta \theta_{ij}^{n} = \theta_{ij}^n - \theta_i^*, \Delta v_{ij}^{n} = v_{ij}^n - v_i^*$ and $R(x) = o(\sum_{i=1}^{k_*+\underline{l}} \sum_{j=1}^{s_i} p_{ij}^{n} (|\Delta \theta_{ij}^n|^{\overline{r}_1}+ |\Delta v_{ij}^n|^{\overline{r}_1}))$. Now we can use the character equation $\dfrac{\partial^2 f}{\partial \theta^2} = 2 \dfrac{\partial f}{\partial v}$ to rewrite the formula above as
\begin{eqnarray}
    (\lambda^* - \lambda_n)h_0(x) + \sum_{\alpha=1}^{2\overline{r}_1} \sum_{i=1}^{k_* + \underline{l}} \left( \sum_{j=1}^{s_i} \lambda_n p_{ij}^{n}  \sum_{n_1, n_2}  \dfrac{(\Delta\theta_{ij}^{n})^{n_1} (\Delta v_{ij}^{n})^{n_2}}{2^{n_2}n_1!n_2!}\right) \dfrac{\partial^{\alpha} f(\theta_i^*, v_i^*)}{\partial \theta^{\alpha}} \nonumber \\ 
    + \sum_{i=1}^{k_*+\underline{l}} (\Delta{p}_{i \cdot}^n) f(x|\theta_i^*, v_i^*) + R(x),\label{eq:location-scale-Gauss-taylor-disting}
\end{eqnarray}
where we sum over $n_1, n_2$ such that $n_1+2n_2=\alpha, n_1+n_2\leq \overline{r}_1$.

\paragraph{Step 2 - Non-vanishing coefficients:} Assume that all coefficients in the formula above vanish when dividing by $W_{\overline{r}_1}^{\overline{r}_1}(\lambda_n G_n, \lambda^* G_*)$  when $n\to \infty$. Because 
\begin{equation}
    	W_{\overline{r}_1}^{\overline{r}_1}( \lambda_nG_{n}, \lambda^*G_{*}) \asymp |\lambda_n - \lambda^*| + (\lambda_n + \lambda^*) \left(\sum_{i = 1}^{k_{*} + \bar{l}} \abss{ \Delta p_{i.}^{n}} + \sum_{i = 1}^{k_{*} + \bar{l}} \sum_{j = 1}^{s_{i}}  p_{ij}^{n} (\enorm{ \Delta \theta_{ij}^{n}}^{\overline{r}_1} + \enorm{ \Delta v_{ij}^{n}}^{\overline{r}_1}) \right):=D_{\overline{r}_1}(G_n, G_*),
\end{equation}
we have
\begin{equation}\label{eq:thm-3-3-contradiction-cond}
    \dfrac{\lambda^* -\lambda_n}{D_{\overline{r}_1}(G_n, G_*)} \to 0, \,\,\, \dfrac{\Delta p_{i \cdot}^n}{D_{\overline{r}_1}(G_n, G_*)} \to 0.
\end{equation}
These limits together imply
\begin{equation*}
    \dfrac{(\lambda^* + \lambda_n) \Delta p_{i \cdot}^n}{D_{\overline{r}_1}(G_n, G_*)} \to 0, \quad \forall i = 1, \dots, k_* + \overline{l}.
\end{equation*}
From the definition of $D_{\overline{r}_1}$, it can be deduced that there exists at least an index $i^*$ such that
\begin{equation*}
    \sum_{j=1}^{s_{i*}} \dfrac{(\lambda_n + \lambda^*)  p_{i^* j}^{n} ((\theta_{ij}^{n})^{\overline{r}_1}+ (v_{ij}^{n})^{\overline{r}_1})}{D_{\overline{r}_1}(G_n, G_*)} \not \to 0.
\end{equation*}
Without loss of generality, assign $i^*=1$. But as we assume all the coefficients in equation \eqref{eq:location-scale-Gauss-taylor-disting} go to 0 for all $\alpha$ and $i$, we have
\begin{equation*}
    \dfrac{\sum\limits_{j=1}^{s_1} \lambda_n p_{1j}^{n} \sum\limits_{\substack{n_1+2 n_2=\alpha\\ n_1+n_2\leq \overline{r}_1}}  \dfrac{(\theta_{1j}^{n})^{n_1} (v_{1j}^{n})^{n_2}}{2^{n_2}n_1!n_2!}}{D_{\overline{r}_1}(G_n, G_*)} \to 0,
\end{equation*}
for all $\alpha = 1, \dots, 2\overline{r}_1$. From two expressions above combining with equation \eqref{eq:thm-3-3-contradiction-cond}, we have for all $\alpha= 1, \dots, 2\overline{r}_1$, 
\begin{equation}\label{eq:weak-ident-F-alpha-disting}
    F_{\alpha} := \dfrac{\sum\limits_{j=1}^{s_1} p_{1j}^{n} \sum\limits_{\substack{n_1+2 n_2=\alpha\\ n_1+n_2\leq \overline{r}_1}} \dfrac{(\Delta\theta_{1j}^{n})^{n_1} (\Delta v_{1j}^{n})^{n_2}}{2^{n_2}n_1!n_2!}}{\sum_{j=1}^{s_1} p_{1 j}^{n} ((\Delta\theta_{ij}^{n})^{\overline{r}_1} + (\Delta v_{ij}^{n})^{\overline{r}_1})}\to 0. 
\end{equation}
If $s_1 = 1$ then substituting $\alpha = 1$ and $\alpha = 2\overline{r}_1$ gives 
\begin{equation*}
    \dfrac{|\Delta \theta_{11}^{n}|^{\overline{r}_1}}{|\Delta \theta_{11}^{n}|^{\overline{r}_1}+ |\Delta v_{11}^{n}|^{\overline{r}_1}}, \dfrac{|\Delta v_{11}^{n}|^{\overline{r}_1}}{|\Delta \theta_{11}^{n}|^{\overline{r}_1}+ |\Delta v_{11}^{n}|^{\overline{r}_1}} \to 0,
\end{equation*}
which is impossible as they are sum up to 1 for all $n$. Hence $s_1 \geq 2$. Now we proceed to show the contradiction using the system of equations \eqref{eqn:system_polynomial_Gaussian_first}. Denote by $\overline{p}_n = \max_{1\leq j \leq s_1} \{p_{1j}^{n} \}, \overline{M}_n = \max_{1\leq j \leq s_1} \{|\Delta \theta_{1j}^{n}|, |\Delta v_{1j}^{n}|^{1/2} \}$. By the subsequence argument in compact sets, without loss of generality, we can denote $c_j^2 := \lim_{n\to \infty} p_{1j}^{n}/\overline{p}_n$, $a_j = \lim \Delta \theta_{1j}^{n} / \overline{M}_n$, and $b_j = \lim \Delta v_{1j}^{n} / \overline{M}_n$ for all $j = 1,\dots, k_* + \overline{l}$. Because of the definition of $\Ocal_{K, c_0}$, we have $p_{ij}^{n} \geq c_0$ for all $j$, which implies all $c_j$ are different from 0 and at least one of them is $1$. Similarly, in $(a_j, b_j)_{j}$, there is at least one of them equals to $1$ or $-1$. Dividing both numerators and denominators of equation \eqref{eq:weak-ident-F-alpha-disting} by $\overline{p}_n \overline{M}_n^{\alpha}$, we have
\begin{equation*}
    \sum\limits_{j=1}^{s_1}\sum\limits_{\substack{n_1+2 n_2=\alpha}} \dfrac{c_j^2 a_j^{n_1} b_j^{n_2}}{n_1! n_2!} = 0,  
\end{equation*}
for all $\alpha = 1, \dots, \overline{r}_1$. Hence, we get the contradiction, where we use the fact that $s_1\leq K - k_* + 1$ (as $s_i\geq 1$ for all $i \geq 2$) and  $\overline{r}_1 = \overline{r}(K-k_*)$ is the smallest number such that equation \eqref{eqn:system_polynomial_Gaussian_first}, where $k = K-k_*,$ has the trivial solution only. Hence, when dividing by $W_{\overline{r}_1}^{\overline{r}_1}(\lambda_n G_n, \lambda^* G_*)$, not all coefficients of equation~\eqref{eq:location-scale-Gauss-taylor-disting} vanish as $n\to \infty$. 

\paragraph{Step 3: Show the contradiction using the distinguishability condition and Fatou's lemma:}
Denote by 
\begin{equation*}
    E_{i, \alpha} = \sum_{j=1}^{s_i} \lambda_n p_{ij}^{n}  \sum_{n_1, n_2}  \dfrac{(\Delta\theta_{ij}^{n})^{n_1} (\Delta v_{ij}^{n})^{n_2}}{2^{n_2}n_1!n_2!} \bigg/ W_{\overline{r}_1}^{\overline{r}_1}(\lambda_n G_n, \lambda^* G_*), \quad \forall i, \alpha\geq 1.
\end{equation*} 
\begin{equation*}
E_{i, 0} = \Delta p_{i \cdot}^{n} \bigg/ W_{\overline{r}_1}^{\overline{r}_1}(\lambda_n G_n, \lambda^* G_*), \forall i\geq 1, E_{0, 0} = (\lambda^* - \lambda_n) \bigg/ W_{\overline{r}_1}^{\overline{r}_1}(\lambda_n G_n, \lambda^* G_*).
\end{equation*}
We have proved that not all $E_{i,\alpha}$ go to 0. Let $d_n = \max_{0\leq \alpha\leq 2\overline{r}_1, 0\leq i\leq k'} |E_{i,\alpha}|$. Because $E_{i,\alpha} / d_n \in [-1,1]$ for all $n$, by the subsequence argument if needed, we have $E_{i,\alpha} / m_n\to \beta_{i,\alpha}$ as $n\to \infty$, where at least one of the limits are different from 0. But Fatou's argument implies that 
\begin{equation*}
    \beta_{0, 0} h_0(x) + \sum_{i=1}^{k_*} \sum_{\alpha=0}^{2\overline{r}_1} \beta_{i,\alpha} \dfrac{\partial^{\alpha} f}{\partial \theta^{\alpha}} (x | \theta_i^{*}, v_i^{*}) = 0,
\end{equation*}
which contradicts our assumption. Hence, claim~\eqref{eq:claim_distin_weak_point_wise} is proved.
\subsection{Proof Theorem~\ref{theorem:pardistinguish_point_wise_zerolambda}}\label{subsec:proof:theorem:pardistinguish_point_wise_zerolambda}
\begin{customthm}
{\ref{theorem:pardistinguish_point_wise_zerolambda}}
Assume that $\sumfun$ takes the form~\eqref{eq:par_distin_h0} and $\lambda^{*} = 0$. Then, there exist positive constants $C_{1}$ and $C_{2}$ depending only on $\sumfun, \Theta$ such that the following holds:
\begin{itemize}
\item[(a)] (exact-fitted) If $f$ is first order identifiable, then for any $G\in \Ecal_{k_0}(\Theta)$
\begin{align*}
V(p_{\lambda^{*}, G_{*}}, p_{\lambda, G}) \geq C_{1} \lambda W_{1}( G, G_{0}),
\end{align*}
\item[(b)] (over-fitted) If $f$ is second order identifiable, then for any $G\in \Ocal_{K}(\Theta)$ that $K > k_0$
\begin{align*}
V(p_{\lambda^{*}, G_{*}}, p_{\lambda, G}) \geq C_{2} \lambda W_{2}^2(G, G_{0}),
\end{align*}
\item[(c)] (over-fitted and weakly identifiable) If $f$ is location-scale Gaussian distribution and we further assume that $G_* \in \Ecal_{k_*, c_0}(\Theta)$, then for any $G\in \Ocal_{K, c_0}(\Theta)$ that $K > k_0$, there exists $C_3$ depends on $h_0, \Theta_0, c_0$ such that
\begin{align*}
V(p_{\lambda^{*}, G_{*}}, p_{\lambda, G}) \geq C_{3} \lambda W_{\overline{r}(K-k_*)}^{\overline{r}(K-k_*)}(G, G_{0})
\end{align*}
\end{itemize}
\end{customthm}

\begin{enumerate}
\item[(a)] We can write 
\begin{align*}
    \dfrac{V(p_0, p_{\lambda G})}{\lambda W_1(G, G_0)} & = \int \dfrac{|\sum_{i=1}^{k_0} p_i^0 f(x|\theta_i^0) - \sum_{i=1}^{k_0} p_i f(x|\theta_i)|}{W_1(G, G_0)} dx \\
    & = \dfrac{V(p_0, p_{G})}{W_1(G, G_0)},
\end{align*}
because this is the exact-fitted and first-order identifiable, we can apply Theorem 3.1. in Ho et al.~\cite{Ho-Nguyen-EJS-16}
\item[(b)] Similar to the last part, we can write 
\begin{align*}
    \dfrac{V(p_0, p_{\lambda G})}{\lambda W_2^2(G, G_0)}& = \int \dfrac{|\sum_{i=1}^{k_0} p_i^0 f(x|\theta_i^0) - \sum_{i=1}^{K} p_i f(x|\theta_i)|}{W_2^2(G, G_0)} dx \\
    & = \dfrac{V(p_0, p_{G})}{W_2^2(G, G_0)},
\end{align*}
as this is the over-fitted and second-order identifiable, we can apply Theorem 3.2. in Ho et al.~\cite{Ho-Nguyen-EJS-16}.
\item[(c)] Similar to last two cases, we can write
\begin{align*}
    \dfrac{V(p_0, p_{\lambda G})}{\lambda W_{\overline{r}(K-k_*)}^{\overline{r}(K-k_*)}(G, G_0)}= \dfrac{V(p_0, p_{G})}{W_{\overline{r}(K-k_*)}^{\overline{r}(K-k_*)}(G, G_0)},
\end{align*}
and apply Proposition 2.2. in~\cite{Ho-Nguyen-Ann-16}.
\end{enumerate}

\subsection{Proof of Theorem~\ref{theorem:pardistinguish_point_wise_overspec_strong_iden}}
\label{subsec:proof:theorem:pardistinguish_point_wise_overspec_strong_iden}
\begin{customthm}
{\ref{theorem:pardistinguish_point_wise_overspec_strong_iden}}
Assume that $\sumfun$ takes the form~\eqref{eq:par_distin_h0}. Besides that, $K \geq k_{0}$ and $f$ is location-scale Gaussian distribution. Then, for any $\lambda \in [0, 1]$ and $G \in \Ocal_{K, c_{0}}( \Theta)$ for some $c_{0} > 0$, there exist positive constants $C_{1}, C_{2}, C_3, C_4$ depending only on $\lambda^{*}, G_{*}, G_{0}, \Theta$ ($C_3$ and $C_4$ also depends on $\delta$) such that the following holds:
\begin{itemize}
\item[(a)] When $K \leq k_{*} + k_{0} - \bar{k} - 1$, then
\begin{align*}
V(p_{\lambda^{*}, G_{*}}, p_{\lambda, G}) & \geq  C_1 \overline{W}_{\overline{r}(K - k_{*} )}(\lambda G, \lambda^{*} G_{*}).
\end{align*}
\item[(b)] When $K \geq k_{*} + k_{0} - \bar{k} $, then
\begin{align*}
V(p_{\lambda^{*}, G_{*}}, p_{\lambda, G})  \geq C_{2} \biggr( 1_{\{\lambda \leq \lambda^{*}\}} \overline{W}_{\overline{r}(K - k_{*} )}(\lambda G, \lambda^{*} G_{*}) \\
+ 1_{\{\lambda > \lambda^{*}\}}  W_{\overline{r}(K - k_{*} )}^{\overline{r}(K - k_{*})}(G, \overline{G}_{*}(\lambda))\biggr)
\end{align*}
\item[(c)] For $\delta > 0$, when $K = k_{*} + k_{0} - \bar{k}$, we have 
\begin{align*}
 V(p_{\lambda^{*}, G_{*}}, p_{\lambda, G}) & \geq C_{3}  1_{\{\lambda > \lambda^{*} + \delta\}}  W_{1}(G, \overline{G}_{*}(\lambda)),
\end{align*}
and when $K > k_{*} + k_{0} - \bar{k}$, we have 
\begin{align*}
 V(p_{\lambda^{*}, G_{*}}, p_{\lambda, G}) \geq & C_{4}  1_{\{\lambda > \lambda^{*} + \delta\}} \\ 
 & \times W_{\overline{r}(K - k_{0} - k_{*} + \bar{k} )}^{\overline{r}(K - k_{0}\ - k_{*} + \bar{k} )}(G, \overline{G}_{*}(\lambda)).
\end{align*}
\end{itemize}
\end{customthm}

To facilitate the proof argument, we denote $\diff : = k_{*} + k_{0} - \bar{k}$. In addition, we assume without loss of generality that $\theta_{i}^{*} = \theta_{i}^{0}$ for $i \in [\bar{k}]$. Moreover, we introduce the following shorthand:
\begin{align*}
	D(\lambda G, \lambda^{*} G_{*}) = \begin{cases} \overline{W}_{2}(\lambda G, \lambda^{*} G_{*}), \ \text{when} \ K \leq \diff - 1 \\ 1_{\{\lambda \leq \lambda^{*}\}} \overline{W}_{2}(\lambda G, \lambda^{*} G_{*}) + 1_{\{\lambda > \lambda^{*}\}}  (\lambda + \lambda^{*}) W_{2}^2(G, \overline{G}_{*} (\lambda)), \ \text{when} \ K \geq \diff \end{cases}.
\end{align*}
Similar to the previous proofs, in order to obtain the conclusion of the theorem, we need to prove the following claims:
\begin{align}
	\lim \limits_{\epsilon \to 0} \inf \limits_{\lambda \in [0, 1], G \in \Ocal_{K}( \Theta)}{\left\{\dfrac{V(p_{\lambda G}, p_{\lambda^{*} G_{*}})}{D(\lambda G, \lambda^{*} G_{*})}: \ D(\lambda G, \lambda^{*} G_{*}) \leq \epsilon\right\}} > 0. \label{eq:claim_pardistin_strong_ind_over_point_wise}
\end{align}
\paragraph{Proof of claim~\eqref{eq:claim_pardistin_strong_ind_over_point_wise}:} Assume that the above claim is not true. It indicates that we can find sequences $G_{n} = \sum_{i = 1}^{k_{n}} p_{i}^{n} \delta_{\theta_{i}^{n}} \in \mathcal{O}_{K}( \Theta)$ and $\lambda_{n} \in [0, 1]$ such that $D(\lambda_{n} G_{n}, \lambda^{*} G_{*})$ and $V(p_{\lambda_{n} G_{n}}, p_{\lambda^{*} G_{*}})/ D(\lambda_{n} G_{n}, \lambda^{*} G_{*})$ go to 0 as $n$ approaches to infinity. Given the assumption that $\theta_{i}^{*} = \theta_{i}^{0}$ for $i \in [\bar{k}]$, we obtain that
\begin{align}
p_{\lambda_{n} G_{n}}(x) - p_{\lambda^{*} G_{*}}(x) = (\lambda^{*} - \lambda_{n}) \sum_{i = \bar{k} + 1}^{k_{0}} p_{i}^{0} f(x| \theta_{i}^{0}) + \lambda_{n} \parenth{ \sum_{i = 1}^{k_{n}} p_{i}^{n} f(x| \theta_{i}^{n})} - \sum_{i = 1}^{k_{*}} \bar{p}_{i}^{*} f(x| \theta_{i}^{*}), \label{eq:key_express_pardistin_strong_ind_over_point_wise}
\end{align}
where $\bar{p}_{i}^{*} = \lambda^{*} p_{i}^{*} + (\lambda_{n} - \lambda^{*}) p_{i}^{0}$ when $1 \leq i \leq \bar{k}$ and $\bar{p}_{i}^{*} = \lambda^{*} p_{i}^{*}$ otherwise. 
Now, we prove the contradiction of our assumption under two separate settings of $\lambda_{n}$.
\paragraph{Case 1:} $\lambda^{*} \geq \lambda_{n}$ for infinitely many $n$. Without loss of generality, we assume that $\lambda^{*} \geq \lambda_{n}$ for all $n \geq 1$. Under this case, $D(\lambda_{n} G_{n}, \lambda^{*} G_{*}) = \overline{W}_{2}(\lambda_{n} G_{n}, \lambda^{*} G_{*})$. As $D(\lambda_{n} G_{n}, \lambda^{*} G_{*}) \to 0$, we have $\lambda_{n} \to \lambda^{*}$ and $W_{2}(G_{n}, G_{*}) \to 0$ as $n \to \infty$. Therefore, we can rewrite $G_{n}$ like equation~\eqref{eq:relabel_measure}. 

In light of equation~\eqref{eq:key_express_pardistin_strong_ind_over_point_wise} and the assumption $\lambda^{*} \geq \lambda_{n}$, by means of Taylor expansion up to the second order around $\theta_{1}^{*}, \ldots, \theta_{k_{*}}^{*}$ as that in the proof of Theorem~\ref{subsec:proof:theorem:distinguish_over_specified_point_wise}, we can view $(p_{\lambda_{n} G_{n}}(x) - p_{\lambda^{*} G_{*}}(x))/ D(\lambda_{n} G_{n}, \lambda^{*} G_{*})$ as a linear combination of elements of the forms $f(x| \theta_{i}^{0})$, $f(x| \theta_{j}^{*}), \frac{\partial{f}}{\partial{ \theta}} (x| \theta_{j}^{*})$, and $\frac{\partial^2{f}}{\partial{\theta^2}}(x| \theta_{j}^{*})$ for $\bar{k} + 1 \leq i \leq k_{0}$ and $j \in [k_{*}]$. 

It is sufficient to argue that not all the coefficients of these elements go 0 as the remaining Fatou's argument is similar to Step 3 of the proof of Theorem~\ref{subsec:proof:theorem:distinguish_over_specified_point_wise}. Indeed, assume that all of these coefficients go to 0 as $n$ tends to infinity. Since $\bar{k} < k_{0}$, we always have at least one index $I \in [\bar{k} + 1,  k_{0}]$. Studying the coefficient of $f(x| \theta_{I}^{0})$ proves that $(\lambda^{*} - \lambda_{n})/ D(\lambda_{n} G_{n}, \lambda^{*} G_{*}) \to 0$ as $n \to \infty$. From here, with similar argument as in Step 2 of claim~\eqref{eq:claim_1_distin_over_point_wise}, we can show that $1 = D(\lambda_{n} G_{n}, \lambda^{*} G_{*})/ D(\lambda_{n} G_{n}, \lambda^{*} G_{*}) \to 0$, which is a contradiction. Therefore, we obtain the conclusion of claim~\eqref{eq:claim_pardistin_strong_ind_over_point_wise}. 

\paragraph{Case 2:} $\lambda^{*} < \lambda_{n}$ for infinitely many $n$. Without loss of generality, we assume that $\lambda^{*} < \lambda_{n}$ for all $n \geq 1$. Under this case, we can rewrite   equation~\eqref{eq:key_express_pardistin_strong_ind_over_point_wise} as follows:
\begin{align*}
	p_{\lambda_{n} G_{n}}(x) - p_{\lambda^{*} G_{*}}(x) = \lambda_{n} \biggr( \underbrace{\sum_{i = 1}^{k_{n}} p_{i}^{n} f(x| \theta_{i}^{n})}_{: = f(x; G_{n})} - \biggr[ \underbrace{ \parenth{ 1 - \frac{\lambda^{*}}{\lambda_{n}}} \sum_{i = \bar{k} + 1}^{k_{0}} p_{i}^{0} f(x| \theta_{i}^{0}) + \sum_{i = 1}^{k_{*}} \frac{\bar{p}_{i}^{*}}{\lambda_{n}} f(x| \theta_{i}^{*})}_{: = f \parenth{ x; \overline{G}_{*}( \lambda_{n})}} \biggr] \biggr),
\end{align*}                                                                                                                                                     
where $\overline{G}_{*}(\lambda_{n}) : = \biggr( 1 - \frac{\lambda^{*}}{\lambda_{n}} \biggr) G_{0} + \frac{\lambda^{*}}{\lambda_{n}} G_{*}$. Under Case 2, $\bar{p}_{i}^{*} > \lambda^{*} p_{i}^{*} > 0$ for $i \in [k_{*}]$. Therefore, we can treat $f(x; G_{n})$ and $f \parenth{ x; \overline{G}_{*}( \lambda_{n})}$ respectively as mixtures with $k_{n}$ and $k_{0} + k_{*} - \bar{k}$ elements. 

Without loss of generality, we assume $k_{n} = K$ for all $n$, namely, the setting where $G_{n}$ have full $K$ supports. We consider three separate settings of $K$.
\paragraph{Case 2.1:} $K \leq k_{*} + k_{0} - \bar{k} - 1$. Under this case, $G_{n}$ has fewer supports than $\overline{G}_{*} ( \lambda_{n})$. Hence, there always exists one element in the set $\{\theta_{i}^{0}: \ \bar{k} + 1 \leq i \leq k_{0} \} \cup \{\theta_{j}^{*}: \ 1 \leq j \leq k_{*} \}$ such that no supports of $G_{n}$ converge to. We first show that this element cannot belong to the set $\{\theta_{j}^{*}: \ 1 \leq j \leq k_{*} \}$. Assume by contrary that this element is in that set. Without loss of generality, we assume this element is $\theta_{1}^{*}$. Since $V(p_{\lambda_{n} G_{n}}, p_{\lambda^{*} G_{*}})/ D(\lambda_{n} G_{n}, \lambda^{*} G_{*}) \to 0$, we have $f(x; G_{n}) - f(x; \overline{G}_{*}( \lambda_{n})) \to 0$ for almost surely $x$. Since $\theta_{i}^{n}$ do not converge to $\theta_{1}^{*}$, the identifiability of $f$ and the previous limit imply that $\bar{p}_{1}^{*}/ \lambda_{n}$ goes to 0 as $n \to \infty$, which is a contradiction as $\bar{p}_{1}^{*}/ \lambda_{n} > \lambda^{*} p_{1}^{*}$.

Therefore, there exists an element in the set $\{\theta_{i}^{0}: \ \bar{k} + 1 \leq i \leq k_{0} \}$ such that no elements of $G_{n}$ converge to. We assume without loss of generality that this element is $\theta_{1}^{0}$. In addition, all the elements in the set $\{\theta_{j}^{*}: \ 1 \leq j \leq k_{*} \}$ have at least one support of $G_{n}$ converge to. By performing Taylor expansion up to the second order around the limit points of the supports of $G_{n}$, we can view $(p_{\lambda_{n} G_{n}}(x) - p_{\lambda^{*} G_{*}}(x))/ D(\lambda_{n} G_{n}, \lambda^{*} G_{*})$ as a linear combination of elements of the forms $f(x| \theta_{i}^{0}), f(x| \theta_{j}^{*}), \frac{\partial{f}}{\partial{ \theta}} (x| \theta_{i}^{0})$, $\frac{\partial{f}}{\partial{ \theta}} (x| \theta_{j}^{*})$, $\frac{\partial^2{f}}{\partial{\theta^2}}(x| \theta_{i}^{0})$, and $\frac{\partial^2{f}}{\partial{\theta^2}}(x| \theta_{j}^{*})$ for some but not all $\bar{k} + 1 \leq i \leq k_{0}$ and for all $j \in [k_{*}]$. Assume that all of the coefficients associated with these elements go to 0 as $n$ goes to infinity. Since no support of $G_{n}$ converges to $\theta_{1}^{0}$, the previous assumptions mean that $(\lambda_{n} - \lambda^{*})/ D(\lambda_{n} G_{n}, \lambda^{*} G_{*}) \to 0$. Given that result, we have
\begin{align*}
0 = \lim_{n \to \infty} \frac{V(p_{\lambda_{n} G_{n}}, p_{\lambda^{*} G_{*}})}{D(\lambda_{n} G_{n}, \lambda^{*} G_{*})} = \lim_{n \to \infty} \frac{\lambda_{n} V(f(.; G_{n}), f(.;G_{*}))}{(\lambda_{n} + \lambda^{*}) W_{2}^2(G_{n}, G_{*})},
\end{align*}
which is a contradiction as $V(f(.; G_{n}), f(.;G_{*}))/ W_{2}^2(G_{n}, G_{*}) \not \to 0$ based on the result of Theorem 3.2 in~\cite{Ho-Nguyen-EJS-16}. Hence, not all the coefficients with  $f(x| \theta_{i}^{0}), f(x| \theta_{j}^{*}), \frac{\partial{f}}{\partial{ \theta}} (x| \theta_{i}^{0})$, $\frac{\partial{f}}{\partial{ \theta}} (x| \theta_{j}^{*})$, $\frac{\partial^2{f}}{\partial{\theta^2}}(x| \theta_{i}^{0})$, and $\frac{\partial^2{f}}{\partial{\theta^2}}(x| \theta_{j}^{*})$ go to 0 as $n \to \infty$. From here, invoking the Fatou's argument and the identifiability of $f$, we conclude the claim~\eqref{eq:claim_pardistin_strong_ind_over_point_wise} under Case 2.1.
\paragraph{Case 2.2:} $K \geq  k_{*} + k_{0} - \bar{k}$. We see that the number of support points of $\bar{G}_{*}(\lambda_n)$ decreases to $k_*$ if $\lambda_n\to \lambda^*$ as $n\to \infty$ or keeps being $k_* + k_0 - \bar{k}$ for any subsequence of $\lambda_n$ does not converge to $\lambda^*$. In both cases, we are in the over-fitted setting as $K \geq  k_{*} + k_{0} - \bar{k}$. If $\lambda_n\to \lambda^*$, our assumption $W_2(G_n, \bar{G}_{*}(\lambda_n))\to 0$ indicates that we can write $G_n$ as in equation \eqref{eq:relabel_measure} so that the atoms of $G_n$ converge to $\theta_{i}^*$ for $i\in [k_*]$ or 0. The proof of claim~\eqref{eq:claim_pardistin_strong_ind_over_point_wise} goes through similar to what of Theorem \ref{theorem:distinguish_over_specified_point_wise} (or Theorem 3.2. in Ho et al.~\cite{Ho-Nguyen-EJS-16}). 

If $\lambda_n\not\to \lambda^*$ as $n\to \infty$ then $\bar{G}_{*}(\lambda_n)$ has $k_0 + k_* - \bar{k}$ in any of its limits. Hence this is over-fitted setting when $K \geq  k_{*} + k_{0} - \bar{k}$ and we can proceed similar to above to have  claim~\eqref{eq:claim_pardistin_strong_ind_over_point_wise}.

\paragraph{Case 2.3:} $K = k_{*} + k_{0} - \bar{k}$ and $\lambda_n > \lambda^* + \delta > \lambda^*$ for all $n$. In this case, $\lambda_n \not \to \lambda^*$, so that $\bar{G}_{*}(\lambda_n)$ has $k_0 + k_* - \bar{k}$ in any of its limits. Hence, this is an exact-fitted setting and we can apply Theorem 3.1. in Ho et al.~\cite{Ho-Nguyen-EJS-16}. As a consequence, claim~\eqref{eq:claim_pardistin_strong_ind_over_point_wise} is shown under Case 2.3.
\subsection{Proof of Theorem~\ref{theorem:pardistinguish_point_wise_overspec_weak_iden}}
\label{subsec:proof:theorem:pardistinguish_point_wise_overspec_weak_iden}
\begin{customthm}
{\ref{theorem:pardistinguish_point_wise_overspec_weak_iden}}
Assume that $\sumfun$ takes the form~\eqref{eq:par_distin_h0}. Besides that, $K \geq k_{0}$ and $f$ is location-scale Gaussian distribution. Then, for any $\lambda \in [0, 1]$ and $G \in \Ocal_{K, c_{0}}( \Theta)$ for some $c_{0} > 0$, there exist positive constants $C_{1}, C_{2}, C_3, C_4$ depending only on $\lambda^{*}, G_{*}, G_{0}, \Theta$ ($C_3$ and $C_4$ also depends on $\delta$) such that the following holds:
\begin{itemize}
\item[(a)] When $K \leq k_{*} + k_{0} - \bar{k} - 1$, then
\begin{align*}
V(p_{\lambda^{*}, G_{*}}, p_{\lambda, G}) & \geq  C_1 \overline{W}_{\overline{r}(K - k_{*} )}(\lambda G, \lambda^{*} G_{*}).
\end{align*}
\item[(b)] When $K \geq k_{*} + k_{0} - \bar{k} $, then
\begin{align*}
V(p_{\lambda^{*}, G_{*}}, p_{\lambda, G})  \geq C_{2} \biggr( 1_{\{\lambda \leq \lambda^{*}\}} \overline{W}_{\overline{r}(K - k_{*} )}(\lambda G, \lambda^{*} G_{*}) \\
+ 1_{\{\lambda > \lambda^{*}\}}  W_{\overline{r}(K - k_{*} )}^{\overline{r}(K - k_{*})}(G, \overline{G}_{*}(\lambda))\biggr)
\end{align*}
\item[(c)] For $\delta > 0$, when $K = k_{*} + k_{0} - \bar{k}$, we have 
\begin{align*}
 V(p_{\lambda^{*}, G_{*}}, p_{\lambda, G}) & \geq C_{3}  1_{\{\lambda > \lambda^{*} + \delta\}}  W_{1}(G, \overline{G}_{*}(\lambda)),
\end{align*}
and when $K > k_{*} + k_{0} - \bar{k}$, we have 
\begin{align*}
 V(p_{\lambda^{*}, G_{*}}, p_{\lambda, G}) \geq & C_{4}  1_{\{\lambda > \lambda^{*} + \delta\}} \\ 
 & \times W_{\overline{r}(K - k_{0} - k_{*} + \bar{k} )}^{\overline{r}(K - k_{0}\ - k_{*} + \bar{k} )}(G, \overline{G}_{*}(\lambda)).
\end{align*}
\end{itemize}
\end{customthm}

We still denote $\diff = k_* + k_0 - \overline{k}$ and follow the path of Theorem~\ref{theorem:pardistinguish_point_wise_overspec_strong_iden} to prove by contradiction. We denote by $\overline{r}_1 = \overline{r}(K - k_*)$, $\overline{r}_2 = \overline{r}(K - k_0 - k_* + \overline{k})$, and 
\begin{align*}
	D(\lambda G, \lambda^{*} G_{*}) = \begin{cases} \overline{W}_{\overline{r}_1}(\lambda G, \lambda^{*} G_{*}), \ \text{when} \ K \leq \diff - 1 \\ 1_{\{\lambda \leq \lambda^{*}\}} \overline{W}_{\overline{r}_1}(\lambda G, \lambda^{*} G_{*}) + 1_{\{\lambda > \lambda^{*}\}}  (\lambda + \lambda^{*}) W_{\overline{r}_2}^{\overline{r}_2}(G, \overline{G}_{*} (\lambda)), \ \text{when} \ K \geq \diff \end{cases}.
\end{align*}
We need to show the following claim:
\begin{align}
	\lim \limits_{\epsilon \to 0} \inf \limits_{\lambda \in [0, 1], G \in \Ocal_{K}( \Theta)}{\left\{\dfrac{V(p_{\lambda G}, p_{\lambda^{*} G_{*}})}{D(\lambda G, \lambda^{*} G_{*})}: \ D(\lambda G, \lambda^{*} G_{*}) \leq \epsilon\right\}} > 0. \label{eq:claim_pardistin_weak_ind_over_point_wise}
\end{align}
There exists sequences $\lambda_n$ and $G_n = \sum_{i=1}^{k_n} p_i^n \delta_{\theta_i^n} \in \Ocal_K(\Theta)$ such that $D(\lambda_n G_n, \lambda^* G_{*}) \to 0$ and $V(p_{\lambda_n G_n}, p_{\lambda^* G_{*}})/D(\lambda_n G_n, \lambda^* G_{*}) \to 0$, where $D$ is the lower bound in the theorem statement. For the ease of presentation, we consider the one dimension Gaussian case where $(\mu, \Sigma) = (\theta, v)$, the higher dimension cases are treated similar. 

\paragraph{Case 1:} $\lambda^* \geq \lambda_n$ for infinitely many $n$. We can use the subsequence argument to have $\lambda^* \geq \lambda_n$ for all $n$ and $G_n$ can be assumed to have a fixed number of atoms (less than or equals $K$) and have a representation as in \eqref{eq:relabel_measure}. In this case,
\begin{equation}
    D(\lambda_n G_n, \lambda^* G_{*}) = \abss{\lambda_n - \lambda^*} + (\lambda_n + \lambda^*) \overline{W}_{\overline{r}_1}^{\overline{r}_1}(G_n, G_*) \to 0, \quad \dfrac{V(p_{\lambda^* G_*}, p_{\lambda_n G_n})}{D(\lambda_n G_n, \lambda^* G_{*})} \to 0.
\end{equation}
Using Taylor expansion of $f$ around $\{(\theta_i^*, v_i^*)\}_{i=1}^{k_*}$ to the $\overline{r}_1-$th order we have
\begin{align*}
    p_{\lambda_n G_n}(x) - p_{\lambda^* G_*}(x) & = (\lambda^* - \lambda_n) \sum_{i=\bar{k}+1}^{k_0}  p_i^0 f(x|\theta_i^0, v_i^0) + \lambda_n (\sum_{i=1}^{k_* + \underline{l}} \sum_{j=1}^{s_i} p_{ij}^{n} f(x|\theta_{ij}^{n}, v_{ij}^{n})) - \sum_{i=1}^{k_*} \overline{p}_i^* f(x|\theta_i^*, v_i^*)\\
    & = (\lambda^* - \lambda_n) \sum_{i=\bar{k}+1}^{k_0}  p_i^0 f(x|\theta_i^0, v_i^0) + \sum_{i=1}^{k_* + \underline{l}} \sum_{j=1}^{s_i} \lambda_n p_{ij}^{n} \sum_{|\boldsymbol{\alpha}|=1}^{\overline{r}_1} (\Delta\theta_{ij}^{n})^{\alpha_1} (\Delta v_{ij}^{n})^{\alpha_2} \dfrac{1}{\boldsymbol{\alpha}!} \dfrac{\partial^{|\boldsymbol{\alpha}| f(\theta_i^*, v_i^*)}}{\partial^{\alpha_1}\theta \partial^{\alpha_2} v}\\
    & + \sum_{i=1}^{k_*+\underline{l}} (\Delta\overline{p}_{i \cdot}^n) f(x|\theta_i^*, v_i^*) + R(x),
\end{align*}
where $\boldsymbol{\alpha} = (\alpha_1, \alpha_2)$, $|\boldsymbol{\alpha}| = \alpha_1 + \alpha_2, \boldsymbol{\alpha}! = \alpha_1!\alpha_2!$, $\Delta \overline{p}^{n}_{i\cdot} = \lambda_n \sum_{j}p_{ij}^n - \overline{p}_i^*$, $\Delta \theta_{ij}^{n} = \theta_{ij}^n - \theta_i^*, \Delta v_{ij}^{n} = v_{ij}^n - v_i^*$ and $R(x) = O(\sum_{i=1}^{k_*+\underline{l}} \sum_{j=1}^{s_i} p_{ij}^{n} (|\Delta \theta_{ij}^n|^{\overline{r}_1}+ |\Delta v_{ij}^n|^{\overline{r}_1}))$. Now we can use the character equation $\dfrac{\partial^2 f}{\partial \theta^2} = 2 \dfrac{\partial f}{\partial v}$ to rewrite the formula above as
\begin{eqnarray}
    (\lambda^* - \lambda_n) \sum_{i=\bar{k}+1}^{k_0}  p_i^0 f(x|\theta_i^0, v_i^0) + \sum_{\alpha=1}^{2\overline{r}_1} \sum_{i=1}^{k_* + \underline{l}} \left( \sum_{j=1}^{s_i} \lambda_n p_{ij}^{n}  \sum_{n_1, n_2}  \dfrac{(\Delta\theta_{ij}^{n})^{n_1} (\Delta v_{ij}^{n})^{n_2}}{2^{n_2}n_1!n_2!}\right) \dfrac{\partial^{\alpha} f(\theta_i^*, v_i^*)}{\partial \theta^{\alpha}} \nonumber \\ 
    + \sum_{i=1}^{k_*+\underline{l}} (\Delta\overline{p}_{i \cdot}^n) f(x|\theta_i^*, v_i^*) + R(x),\label{eq:location-scale-Gauss-taylor}
\end{eqnarray}
where we sum over $n_1, n_2$ such that $n_1+2n_2=\alpha, n_1+n_2\leq \overline{r}_1$.
Now we turn into proving the non-vanishing coefficients. Assume that all coefficients in the formula above vanish when dividing by $D(\lambda_n G_n, \lambda^* G_*)$  when $n\to \infty$. Because 
\begin{equation}
    	D( \lambda_nG_{n}, \lambda^*G_{*}) \asymp |\lambda_n - \lambda^*| + (\lambda_n + \lambda^*) \left(\sum_{i = 1}^{k_{*} + \bar{l}} \abss{ \Delta p_{i.}^{n}} + \sum_{i = 1}^{k_{*} + \bar{l}} \sum_{j = 1}^{s_{i}}  p_{ij}^{n} (\enorm{ \Delta \theta_{ij}^{n}}^{\overline{r}_1} + \enorm{ \Delta v_{ij}^{n}}^{\overline{r}_1}) \right):=D_{\overline{r}_1}(G_n, G_*),
\end{equation}
we have
\begin{equation}\label{eq:thm-3-5-contradiction-cond}
    \dfrac{\lambda^* -\lambda_n}{D_{\overline{r}_1}(G_n, G_*)} \to 0, \,\,\, \dfrac{\Delta\overline{p}_{i \cdot}^n}{D_{\overline{r}_1}(G_n, G_*)} \to 0.
\end{equation}
These limits together imply
\begin{equation}
    \dfrac{(\lambda^* +\lambda_n) \Delta\overline{p}_{i \cdot}^n}{D_{\overline{r}_1}(G_n, G_*)} \to 0, \quad \forall i = 1, \dots, k_* + \overline{l}.
\end{equation}
From the definition of $D_{\overline{r}_1}$, it can be deduced that there exists at least an index $i^*$ such that
\begin{equation}
    \sum_{j=1}^{s_{i*}} \dfrac{(\lambda_n + \lambda^*)  p_{i^* j}^{n} ((\theta_{ij}^{n})^{\overline{r}_1}+ (v_{ij}^{n})^{\overline{r}_1})}{D_{\overline{r}_1}(G_n, G_*)} \not \to 0.
\end{equation}
Without loss of generality, assign $i^*=1$. But as we assume all the coefficients in equation \eqref{eq:location-scale-Gauss-taylor} go to 0 for all $\alpha$ and $i$, we have
\begin{equation}
    \dfrac{\sum\limits_{j=1}^{s_1} \lambda_n p_{1j}^{n} \sum\limits_{\substack{n_1+2 n_2=\alpha\\ n_1+n_2\leq \overline{r}_1}}  \dfrac{(\theta_{1j}^{n})^{n_1} (v_{1j}^{n})^{n_2}}{2^{n_2}n_1!n_2!}}{D_{\overline{r}_1}(G_n, G_*)} \to 0,
\end{equation}
for all $\alpha = 1, \dots, 2\overline{r}_1$. From two expressions above combining with equation \eqref{eq:thm-3-5-contradiction-cond}, we have for all $\alpha= 1, \dots, 2\overline{r}_1$, 
\begin{equation}\label{eq:weak-ident-F-alpha}
    F_{\alpha} := \dfrac{\sum\limits_{j=1}^{s_1} p_{1j}^{n} \sum\limits_{\substack{n_1+2 n_2=\alpha\\ n_1+n_2\leq \overline{r}_1}} \dfrac{(\Delta\theta_{1j}^{n})^{n_1} (\Delta v_{1j}^{n})^{n_2}}{2^{n_2}n_1!n_2!}}{\sum_{j=1}^{s_1} p_{1 j}^{n} ((\Delta\theta_{ij}^{n})^{\overline{r}_1} + (\Delta v_{ij}^{n})^{\overline{r}_1})}\to 0. 
\end{equation}
If $s_1 = 1$ then substituting $\alpha = 1$ and $\alpha = 2\overline{r}_1$ gives 
\begin{equation*}
    \dfrac{|\Delta \theta_{11}^{n}|^{\overline{r}_1}}{|\Delta \theta_{11}^{n}|^{\overline{r}_1}+ |\Delta v_{11}^{n}|^{\overline{r}_1}}, \dfrac{|\Delta v_{11}^{n}|^{\overline{r}_1}}{|\Delta \theta_{11}^{n}|^{\overline{r}_1}+ |\Delta v_{11}^{n}|^{\overline{r}_1}} \to 0,
\end{equation*}
which is impossible as they are sum up to 1 for all $n$. Hence $s_1 \geq 2$. Now we proceed to show the contradiction using the system of equations \eqref{eqn:system_polynomial_Gaussian_first}. Denote by $\overline{p}_n = \max_{1\leq j \leq s_1} \{p_{1j}^{n} \}, \overline{M}_n = \max_{1\leq j \leq s_1} \{|\Delta \theta_{1j}^{n}|, |\Delta v_{1j}^{n}|^{1/2} \}$. By the subsequence argument in compact sets, without loss of generality, we can denote $c_j^2 := \lim_{n\to \infty} p_{1j}^{n}/\overline{p}_n$, $a_j = \lim \Delta \theta_{1j}^{n} / \overline{M}_n$, and $b_j = \lim \Delta v_{1j}^{n} / \overline{M}_n$ for all $j = 1,\dots, k_* + \overline{l}$. Because of the definition of $\Ocal_{K, c_0}$, we have $p_j \geq c_0$ for all $j$, which implies all $c_j$ are different from 0 and at least one of them is $1$. Similarly, in $(a_j, b_j)_{j}$, there is at least one of them equals to $1$ or $-1$. Dividing both numerators and denominators of equation \eqref{eq:weak-ident-F-alpha} by $\overline{p}_n \overline{M}_n^{\alpha}$, we have
\begin{equation*}
    \sum\limits_{j=1}^{s_1}\sum\limits_{\substack{n_1+2 n_2=\alpha}} \dfrac{c_j^2 a_j^{n_1} b_j^{n_2}}{n_1! n_2!} = 0,  
\end{equation*}
for all $\alpha = 1, \dots, \overline{r}_1$. Hence, we get the contradiction, where we use the fact that $s_1\leq K - k_* + 1$ (as $s_i\geq 1$ for all $i \geq 2$) and  $\overline{r}_1 = \overline{r}(K-k_*)$ is the smallest number such that equation \eqref{eqn:system_polynomial_Gaussian_first}, where $k = K-k_*,$ has the trivial solution only. After that, we can argue as in the Step 9 of Proposition 2.2. in ~\cite{Ho-Nguyen-Ann-16} to get the contradiction to the assumption proposed in the beginning, where we use the fact that Gaussian family is identifiable up to any order with respect to the location parameters.

\paragraph{Case 2:} $\lambda^* \leq \lambda_n$ for all $n$. We rewrite 
\begin{equation}\label{Gaussian-family-rewrite}
    	p_{\lambda_{n} G_{n}}(x) - p_{\lambda^{*} G_{*}}(x) = \lambda_{n} \biggr( \underbrace{\sum_{i = 1}^{k_{n}} p_{i}^{n} f(x| \theta_{i}^{n})}_{: = f(x; G_{n})} - \biggr[ \underbrace{ \parenth{ 1 - \frac{\lambda^{*}}{\lambda_{n}}} \sum_{i = \bar{k} + 1}^{k_{0}} p_{i}^{0} f(x| \theta_{i}^{0}) + \sum_{i = 1}^{k_{*}} \frac{\bar{p}_{i}^{*}}{\lambda_{n}} f(x| \theta_{i}^{*})}_{: = f \parenth{ x; \overline{G}_{*}( \lambda_{n})}} \biggr] \biggr),
\end{equation}
\paragraph{Cases 2.1.} $K\leq \diff - 1$, argue similarly to Case 2.1. of the proof of Theorem \ref{theorem:pardistinguish_point_wise_overspec_strong_iden}, we have $\dfrac{\lambda_n - \lambda^*}{D(\lambda_n G_n, \lambda^* G_*)}\to 0$ as $n\to \infty$. Now we arrive at the equation \eqref{eq:thm-3-5-contradiction-cond} of Case 1. Follow the argument above, we can prove claim \eqref{eq:claim_pardistin_weak_ind_over_point_wise}.
\paragraph{Case 2.2.} $K \geq \diff$, we can see equation  \eqref{Gaussian-family-rewrite} as an over-fitted mixture of location-scale Gaussian setting where the number of over-fitted atoms is at most $K - k_*$. Hence we can argue similar to Case 1 or the Proposition 2.2. in~\cite{Ho-Nguyen-Ann-16} to obtain the conclusion.
\paragraph{Cases 2.3.} $K = \diff$ and $\lambda_n > \lambda^* + \delta$ for all $n$. From the presentation as in equation \eqref{Gaussian-family-rewrite}, we can see that $1 - \dfrac{\lambda^*}{\lambda_n}$ does not vanish in any of it limits. Therefore $\overline{G}_{*}(\lambda_n)$ has $k_* + k_0 - \overline{k} = \Tcal$ number of components in its limits. Because this is an exact-fitted setting, we can apply Theorem 3.1. in Ho et al.~\cite{Ho-Nguyen-EJS-16} to get the result of claim \eqref{eq:claim_pardistin_weak_ind_over_point_wise}
\paragraph{Cases 2.4.} $K > \diff$ and $\lambda_n > \lambda^* + \delta$ for all $n$, we can also see that $\overline{G}_{*}(\lambda_n)$ has $k_* + k_0 - \overline{k} = \Tcal$ number of components in its limits. We can apply Proposition 2.2. in Ho et al.~\cite{Ho-Nguyen-Ann-16} to get the result of claim \eqref{eq:claim_pardistin_weak_ind_over_point_wise}.


\subsection{Proof of Theorem~\ref{theorem:pardistinguish_point_wise_equal_exactspec}}
\label{subsec:proof:theorem:pardistinguish_point_wise_equal_exactspec}
\begin{customthm}
{\ref{theorem:pardistinguish_point_wise_equal_exactspec}}
Assume that $\sumfun$ takes the form~\eqref{eq:par_distin_h0} and $\bar{k} = k_{0}$. Besides that, $f$ is second order identifiable. Then, for any $\lambda \in [0, 1]$ and $G \in \Ocal_{K}( \Theta)$ that $K \geq k_*$, there exist positive constants $C_{1}$ and $C_{2}$ depending only on $\lambda^{*}, G_{*}, G_{0}, \Theta$ such that the following holds:
\begin{itemize}
\item[(a)] If $\extraind( \lambda)$ is not ratio-independent, then
\begin{align}
V(p_{\lambda^{*} G_{*}}, p_{\lambda G}) & \geq C_{1} \biggr[ 1_{\{\lambda \in\extraset^{c}\}} + 1_{\{\lambda \in\extraset\}}  W_{2}^2(G, \bar{G}_{*} ( \lambda))\biggr]. 
\end{align}
\item[(b)] If $\extraind( \lambda)$ is ratio-independent, then
\begin{align}
V(p_{\lambda^{*} G_{*}}, p_{\lambda G}) & \geq C_{2} \biggr[ 1_{\{\lambda \in\extraset^{c}\}} \biggr(  \sum_{i \in \extraind( \lambda)} \biggr[ (\lambda^{*} - \lambda) p_{i}^{0} - \lambda^{*}p_{i}^{*} \biggr] \nonumber \\
& + \sumind(\mathcal{I} (\lambda)) W_{2}^2(G, \widetilde{G}_{*} ( \lambda)) \biggr) \nonumber \\ 
& + 1_{\{\lambda \in\extraset\}}  W_{2}^2(G, \bar{G}_{*} ( \lambda))\biggr]. 
\end{align}
\end{itemize}
\end{customthm}

To ease the ensuing presentation, we denote $D(\lambda G, \lambda^{*} G_{*}) = 
1_{\{\lambda \in\extraset^{c}\}}  \biggr( \sum_{i \in \extraind( \lambda)} \biggr[ (\lambda^{*} - \lambda) p_{i}^{0} - \lambda^{*}p_{i}^{*} \biggr] + \sumind(\mathcal{I} (\lambda)) W_{2}^2(G, \widetilde{G}_{*} ( \lambda)) \biggr) + 1_{\{\lambda \in\extraset\}}   W_{2}^2(G, \bar{G}_{*} ( \lambda))$ when $\extraind( \lambda)$ is ratio-independent or $D(\lambda G, \lambda^{*} G_{*}) = 
1_{\{\lambda \in\extraset^{c}\}} + 1_{\{\lambda \in\extraset\}}  W_{2}^2(G, \bar{G}_{*} ( \lambda))$ when $\extraind( \lambda)$ is not ratio-independent.

In order to prove the theorem, it is sufficient to verify the following inequality:
\begin{align}
	\lim \limits_{\epsilon \to 0} \inf \limits_{\lambda \in [0,1], G \in \Ecal_{k_{*}}( \Theta)}{\left\{\dfrac{V(p_{\lambda G}, p_{\lambda^{*} G_{*}})}{D(\lambda G, \lambda^{*} G_{*})}: \ D(\lambda G, \lambda^{*} G_{*}) \leq \epsilon\right\}} > 0. \label{eq:claim_pardistinguish_point_wise_equal_exactspec}
\end{align}
\paragraph{Proof of claim~\eqref{eq:claim_pardistinguish_point_wise_equal_exactspec}:} Assume that the above claim is not true. It implies that there exist sequences $G_{n} = \sum_{i = 1}^{k_{n}} p_{i}^{n} \delta_{\theta_{i}^{n}} \in \Ocal_{K}( \Theta)$ and $\lambda_{n} \in [0, 1]$ such that $D(\lambda_{n} G_{n}, \lambda^{*} G_{*})$ and $V(p_{\lambda_{n} G_{n}}, p_{\lambda^{*} G_{*}})/ D(\lambda_{n} G_{n}, \lambda^{*} G_{*})$ go to 0 as $n$ approaches to infinity. Since $\bar{k} = k_{0}$ and $G_{*}$ admits the form~\eqref{eq:refor_G*}, we find that
\begin{align}
p_{\lambda_{n} G_{n}}(x) - p_{\lambda^{*} G_{*}}(x) = \lambda_{n} \parenth{ \sum_{i = 1}^{k_{n}} p_{i}^{n} f(x| \theta_{i}^{n})} - \sum_{i = 1}^{k_{*}} \bar{p}_{i}^{*} f(x| \theta_{i}^{*}), \label{eq:key_express_pardistinguish_point_wise_equal_exactspec}
\end{align}
where $\bar{p}_{i}^{*} = \lambda^{*} p_{i}^{*} + (\lambda_{n} - \lambda^{*}) p_{i}^{0}$ when $1 \leq i \leq k_{0}$ and $\bar{p}_{i}^{*} = \lambda^{*} p_{i}^{*}$ otherwise. In addition, $\theta_{i}^{*} = \theta_{i}^{0}$ for $i \in [k_{0}]$. From this presentation, we see that there must exists a constant $C$ depending on $\lambda^*, G_*, G_0$ such that $\liminf \lambda_n >C$. Indeed, suppose it is not the case, then by the subsequence argument, we can assume that $\lambda_n\to 0$. Besides, $V(\lambda_n G_n, \lambda^* G_*) \to 0$, we have $\overline{p}_i^* \to 0$ for all $i\in [k_*]$. These conditions lead to $p_i^* = 0$ for all $i > k_0$ and $p_{i}^0 = p_i^*$ for all $i\in [k_0]$, which mean that $G_* = G_0$ (a contradiction to our assumption). Hence, limits of $(\lambda_n)$ is bounded below.
We have two settings with $\lambda_{n}$.
\paragraph{Case 1:} $\lambda_{n} \in \extraset$ for infinitely many $n$. Without loss of generality, we assume that $\lambda_{n} \in \extraset$ for all $n \geq 1$. If $k_* = k_0$ then we see that $\bar{p}_{i}^{*}$ can not vanish simultaneously when $n\to\infty$ for all $i$, otherwise we have $G^* = G_0$, which contradicts to the assumption in this section. Otherwise, $k_* > k_0$, and $\overline{p}^*_{i}$ does not vanish for all $i > k_0$. Therefore, every limit of $\sum_{i = 1}^{k_{*}} \bar{p}_{i}^{*} f(x| \theta_{i}^{*})$
has a number of atoms ranging from $\max\{1, k_* - k_0\}$ to $k_*$, which is less than or equal to $K$. So that this is an over-fitted scenario. 
In addition, $D(\lambda_{n} G_{n}, \lambda^{*} G_{*}) = W_{2}^2(G_{n}, \bar{G}_{*} ( \lambda_{n}))$. We can further rewrite equation~\eqref{eq:key_express_pardistinguish_point_wise_equal_exactspec} as:
\begin{align*}
	p_{\lambda_{n} G_{n}}(x) - p_{\lambda^{*} G_{*}}(x) = \lambda_{n} ( f(x; G_{n}) - f(x; \bar{G}_{*}( \lambda_{n})).
\end{align*}
From Theorem 3.2 in Ho et al.~\cite{Ho-Nguyen-EJS-16}, we have $V(f(.; G_{n}), f(.; \overline{G}_{*}( \lambda_{n}))/ W_{2}^2(G_{n}, \bar{G}_{*} ( \lambda_{n})) \not \to 0$ as $n \to \infty$. Putting the above results together, we obtain that $V(p_{\lambda_{n} G_{n}}, p_{\lambda^{*} G_{*}})/ D(\lambda_{n} G_{n}, \lambda^{*} G_{*}) \not \to 0$, which is a contradiction. Hence, we reach the conclusion of claim~\eqref{eq:key_express_pardistinguish_point_wise_equal_exactspec}. 
\paragraph{Case 2:} $\lambda_{n} \not \in \extraset$ for infinitely many $n$. Without loss of generality, we assume that $\lambda_{n} \not \in \extraset$ for all $n \geq 1$. Under this setting, $\extraind( \lambda_{n}) \neq \emptyset$. In addition, for any $i \in \extraind( \lambda_{n})$, $\bar{p}_{i}^{*} < 0$. Given these conditions, we can rewrite equation~\eqref{eq:key_express_pardistinguish_point_wise_equal_exactspec} as follows:
\begin{eqnarray}\label{eq:thm36-rewrite}
	p_{\lambda_{n} G_{n}}(x) - p_{\lambda^{*} G_{*}}(x) = \sum_{i \in \extraind ( \lambda_{n})} (- \bar{p}_{i}^{*}) f(x| \theta_{i}^{0}) + \biggr[ \lambda_{n} \parenth{ \sum_{i = 1}^{k_{n}} p_{i}^{n} f(x| \theta_{i}^{n})} - \sum_{i \in \extraind ( \lambda_{n})^{c}} \bar{p}_{i}^{*} f(x| \theta_{i}^{0}) & \nonumber\\
	& \hspace{- 8 em} - \sum_{i = k_{0} + 1}^{k_{*}} \bar{p}_{i}^{*} f(x| \theta_{i}^{*}) \biggr].
\end{eqnarray}
We have two separate settings with $\extraind( \lambda_{n})$.
\paragraph{Case 2.1:} $\extraind( \lambda_{n})$ is not ratio-independent. Under this case, $D( \lambda_{n} G_{n}, \lambda^{*} G_{*}) = 1$. Since $V(p_{\lambda_{n} G_{n}}, p_{\lambda^{*} G_{*}})/ D( \lambda_{n} G_{n}, \lambda^{*} G_{*}) \to 0$, we have $V(p_{\lambda_{n} G_{n}}, p_{\lambda^{*} G_{*}}) \to 0$. It indicates that $p_{\lambda_{n} G_{n}}(x) - p_{\lambda^{*} G_{*}}(x) \to 0$ almost surely $x$. Since $- \bar{p}_{i}^{*} > 0$ for all $i \in \extraind( \lambda_{n})$, the previous limit demonstrates that $\bar{p}_{i}^{*} \to 0$ for all $i \in \extraind( \lambda_{n})$, which leads to $p_{i}^{*}/ p_{i}^{0} = p_{j}^{*}/ p_{j}^{0}$ for all $i, j \in \extraind( \lambda_{n})$. It contradicts the assumption that $\extraind( \lambda_{n})$ is not ratio-independent. Hence, we achieve the conclusion of claim~\eqref{eq:key_express_pardistinguish_point_wise_equal_exactspec} under Case 2.1. 
\paragraph{Case 2.2:} $\extraind( \lambda_{n})$ is ratio-independent. Under this case, $D( \lambda_{n} G_{n}, \lambda^{*} G_{*}) = \sum_{i \in \extraind( \lambda_{n})} \biggr[ (\lambda^{*} - \lambda_{n}) p_{i}^{0} - \lambda^{*}p_{i}^{*} \biggr] + \sumind(\mathcal{I} (\lambda_{n})) W_{2}^2(G_{n}, \widetilde{G}_{*} ( \lambda_{n}))\to 0$ and $V(p_{\lambda_n G_n, \lambda^*G_*})/D( \lambda_{n} G_{n}, \lambda^{*} G_{*})\to 0$, which imply $V(p_{\lambda_n G_n, \lambda^*G_*})\to 0$. We first prove that $\sumind(\mathcal{I} (\lambda_{n}))\not \to 0$. Indeed, suppose it is not the case, then $p_i^* = 0$ for all $i > k_0$ and $( \lambda^* p_i^* + (\lambda_n - \lambda^*)p_i^0)\to 0$ for all $i\in \mathcal{I}(\lambda_n)$. From equation \eqref{eq:thm36-rewrite} and the fact that $V(p_{\lambda_n G_n, \lambda^*G_*})\to 0$, we also see that $\overline{p}_i^* \to 0$ for all $i\in \mathcal{I}(\lambda_n)$ and $\lambda_n\to 0$. But that means
\begin{equation*}
    \lambda_n\to 0, \lambda^* p_i^* + (\lambda_n - \lambda^*) p_i^0 \to 0, \quad \forall i \in [k_0].
\end{equation*}
Those limits together imply that $\lambda^*(p_i^0 - p_i^*) = 0$ for all $i\in [k_0]$, which is contradictory with our assumption that $G^* \neq G_0$. Hence $\sumind(\mathcal{I}(\lambda_n))\not \to 0$.
As $D(\lambda_{n} G_{n}, \lambda^{*} G_{*}) \to 0$, we have $W_{2}^2 (G_{n}, \widetilde{G}_{*} ( \lambda_{n})) \to 0$ as $n \to \infty$. It implies that we can rewrite $G_{n}$ as follows:  
\begin{align}
	G_{n} = \sum_{i \in \extraind( \lambda_{n})^c \cup \{k_{0} + 1, \ldots, k_{*} + \bar{l}\}} \sum_{j = 1}^{s_{i}} p_{ij}^{n} \delta_{ \theta_{ij}^{n}}, \label{eq:relabel_measure_pardistinguish_equal_exactspec}
\end{align}
where $\sum_{j = 1}^{s_{i}} p_{ij}^{n} \to \bar{p}_{i}^{*}/ \sumind(\mathcal{I} (\lambda_{n}))$ and $\theta_{ij}^{n} \to \theta_{i}^{*}$ for all $i \in \mathcal{J} : = \extraind( \lambda_{n})^c \cup \{k_{0} + 1, \ldots, k_{*} + \bar{l}\}$. Here, $\bar{p}_{i}^{*} = 0$ for $k_{*} + 1 \leq i \leq k_{*} + \bar{l}$. In addition, $\sum_{i \in \mathcal{J}} s_{i} = k'$ for some $k'$ such that $k_{*} - k_{0} + |\extraind( \lambda_{n})^c| \leq k' \leq k_{*}$.  To faciliate the proof argument, we denote $\Delta \theta_{ij}^{n} : = \theta_{ij}^{n} - \theta_{i}^{*}$ and $\Delta p_{i.}^{n} : = \sum_{j = 1}^{s_{i}} p_{ij}^{n} - \bar{p}_{i}^{*}/ \sumind(\mathcal{I} (\lambda_{n}))$ for $i \in \mathcal{J}$. The result of Lemma 3.1 in Ho et al.~\cite{Ho-Nguyen-Ann-17} leads to
\begin{align}
	W_{2}^2( G_{n}, \tilde{G}_{*} ( \lambda_{n})) \asymp \sum_{i \in \mathcal{J}} \abss{ \Delta p_{i.}^{n}} + \sum_{i \in \mathcal{J}} \sum_{j = 1}^{s_{i}}  p_{ij}^{n} \enorm{ \Delta \theta_{ij}^{n}}^2. \label{eq:wasserstein_equivalence_over}
\end{align}  
Invoking Taylor's expansion up to the second order, we have
\begin{align}
	p_{\lambda_{n} G_{n}}(x) - p_{\lambda^{*} G_{*}}(x) & = \sum_{i \in \extraind ( \lambda_{n})} (- \bar{p}_{i}^{*}) f(x| \theta_{i}^{0}) + \sum_{i \in \mathcal{J}} (\lambda_{n} \sum_{j = 1}^{s_{i}} p_{ij}^{n} - \bar{p}_{i}^{*}) f(x| \theta_{i}^{*}) \nonumber \\
	& \hspace{- 6 em} + \lambda_{n} \parenth{ \sum_{j = 1}^{s_{i}} p_{ij}^{n} \Delta \theta_{ij}^{n} }^{ \top} \frac{\partial{f}}{\partial{ \theta}}(x| \theta_{i}^{*}) + \lambda_{n} \parenth{ \sum_{j = 1}^{s_{i}} p_{ij}^{n} \parenth{ \Delta \theta_{ij}^{n}}^{\top} \frac{\partial^2{f}}{\partial{\theta^2}}(x| \theta_{i}^{*}) (\Delta \theta_{ij}^{n})} + R(x), \label{eq:Taylor_pardistinguish_exact_specified_equal}
\end{align}
where $R(x)$ is Taylor remainder such that $R(x) = \smallO \parenth{ \lambda_{n} \sum_{i \in \mathcal{J}} \sum_{j = 1}^{s_{i}} p_{ij}^{n} \enorm{ \Delta \theta_{ij}^{n}}^2}$. Therefore, we have $R(x)/ D(\lambda_{n} G_{n}, \lambda^{*} G_{*}) \to 0$ as $n \to \infty$. 

The expression in equation~\eqref{eq:Taylor_pardistinguish_exact_specified_equal} indicates that we can view $(p_{\lambda_{n} G_{n}}(x) - p_{\lambda^{*} G_{*}}(x))/ D(\lambda_{n} G_{n}, \lambda^{*} G_{*})$ as a linear combination of elements of the forms $f(x| \theta_{i}^{0})$, $f(x| \theta_{j}^{*})$, $\frac{\partial{f}}{\partial{ \theta}}(x| \theta_{j}^{*})$, $\frac{\partial^2{f}}{\partial{\theta^2}}(x| \theta_{j}^{*})$ for $i \in \extraind( \lambda_{n})$ and $j \in \mathcal{J}$. Assume that the coefficients of these terms go to 0 as $n$ approaches infinity. By studying the coefficients of $f(x| \theta_{i}^{0})$ when $i \in \extraind( \lambda_{n})$, we find that
\begin{align*}
	( \sum_{i \in \extraind( \lambda_{n})} (- \bar{p}_{i}^{*}))/ D(\lambda_{n} G_{n}, \lambda^{*} G_{*}) \to 0. 
\end{align*}
Given the above result, as the coefficients of $f(x| \theta_{i}^{*})$ and $\frac{\partial^2{f}}{\partial{\theta^2}}(x| \theta_{i}^{*})$ go to 0 when $i \in \mathcal{J}$, we obtain
\begin{align*}
	\frac{\sumind(\mathcal{I} (\lambda_{n}))\sum_{j = 1}^{s_{i}} p_{ij}^{n} - \bar{p}_{i}^{*}}{D(\lambda_{n} G_{n}, \lambda^{*} G_{*})} & = \frac{ [ \lambda_{n} - ( \sum_{l \in \extraind( \lambda_{n})} \bar{p}_{l}^{*}))] \sum_{j = 1}^{s_{i}} p_{ij}^{n} - \bar{p}_{i}^{*}}{D(\lambda_{n} G_{n}, \lambda^{*} G_{*})} \to 0, \\
	 \frac{\sumind(\mathcal{I} (\lambda_{n}))\sum_{j = 1}^{s_{i}} p_{ij}^{n} \enorm{ \Delta \theta_{ij}^{n}}^2}{D(\lambda_{n} G_{n}, \lambda^{*} G_{*})} & = \frac{ [ \lambda_{n} - ( \sum_{l \in \extraind( \lambda_{n})} \bar{p}_{l}^{*}))] \sum_{j = 1}^{s_{i}} p_{ij}^{n} \enorm{ \Delta \theta_{ij}^{n}}^2}{D(\lambda_{n} G_{n}, \lambda^{*} G_{*})} \to 0
\end{align*}
Putting the above results together, given the expression in equation~\eqref{eq:wasserstein_equivalence_over}, we obtain $1 = D(\lambda_{n} G_{n}, \lambda^{*} G_{*})/ D(\lambda_{n} G_{n}, \lambda^{*} G_{*}) \to 0$ as $n \to \infty$, which is a contradiction. Therefore, not all the coefficients of $f(x| \theta_{i}^{0})$, $f(x| \theta_{j}^{*})$, $\frac{\partial{f}}{\partial{ \theta}}(x| \theta_{j}^{*})$, $\frac{\partial^2{f}}{\partial{\theta^2}}(x| \theta_{j}^{*})$ when $i \in \extraind( \lambda_{n})$ and $j \in \mathcal{J}$. From here, we utilize the Fatou's argument from the previous proofs to obtain the conclusion of claim~\eqref{eq:claim_pardistinguish_point_wise_equal_exactspec} under Case 2.2.
\subsection{Proof of Theorem~\ref{theorem:pardistinguish_point_wise_equal_overspec_weakind}}
\label{subsec:proof:theorem:pardistinguish_point_wise_equal_overspec_weakind}

\begin{customthm}
{\ref{theorem:pardistinguish_point_wise_equal_overspec_weakind}}
Assume that $\sumfun$ takes the form~\eqref{eq:par_distin_h0} and $\bar{k} = k_{0}$. Besides that, $f$ is location-scale Gaussian distribution. Then, for $\tilde{k}:=\max\{k_* - k_0, 1\}$, and for any $\lambda \in [0, 1]$ and $G \in \Ocal_{K, c_{0}}( \Theta)$ for some $K \geq k_*$ and $c_{0} > 0$, there exist positive constants $C_{1}$ and $C_{2}$ depending only on $\lambda^{*}, G_{*}, G_{0}, \Theta$ such that
on $\lambda^{*}, G_{*}, G_{0}, \Theta$ such that
\begin{itemize}
\item[(a)] If $\extraind( \lambda)$ is not ratio-independent, then
\begin{align}
V(p_{\lambda^{*} G_{*}}, p_{\lambda G}) & \geq C_{1} \biggr[ 1_{\{\lambda \in\extraset^{c}\}} \nonumber \\
& + 1_{\{\lambda \in\extraset\}}   W_{\overline{r}(K - \tilde{k} )}^{\overline{r}(K - \tilde{k})}(G, \bar{G}_{*}( \lambda) ) \biggr]. 
\end{align}
\item[(b)] If $\extraind( \lambda)$ is ratio-independent, then
\begin{align}
V(p_{\lambda^{*}, G_{*}}, p_{\lambda, G}) & \geq C_{2} \biggr[ 1_{\{\lambda \in\extraset^{c}\}} \biggr(  \sum_{i \in \extraind( \lambda)} \biggr[ (\lambda^{*} - \lambda) p_{i}^{0} - \lambda^{*}p_{i}^{*} \biggr] \nonumber \\
& + \sumind(\mathcal{I} (\lambda)) W_{\overline{r}(K - \tilde{k})}^{\overline{r}(K - \tilde{k})}(G, \widetilde{G}_{*} ( \lambda)) \biggr) \nonumber \\ 
& + 1_{\{\lambda \in\extraset\}}   W_{\overline{r}(K - \tilde{k} )}^{\overline{r}(K - \tilde{k})}(G, \bar{G}_{*}( \lambda)) \biggr]. 
\end{align}
\end{itemize}
\end{customthm}
The proof of Theorem \ref{theorem:pardistinguish_point_wise_equal_overspec_weakind} is similar to what of Theorem \ref{theorem:pardistinguish_point_wise_equal_exactspec} and with the technical details borrowed from Theorem \ref{theorem:pardistinguish_point_wise_overspec_weak_iden}. Therefore we only highlight the main differences. Denote by $D(\lambda G, \lambda^{*} G_{*}) = 
 1_{\{\lambda \in\extraset^{c}\}}  \biggr(  \sum_{i \in \extraind( \lambda)} \biggr[ (\lambda^{*} - \lambda) p_{i}^{0} - \lambda^{*}p_{i}^{*} \biggr] 
 + \sumind(\mathcal{I} (\lambda)) W_{\overline{r}(K - \tilde{k})}^{\overline{r}(K - \tilde{k})}(G, \widetilde{G}_{*} ( \lambda)) \biggr) 
+ 1_{\{\lambda \in\extraset\}} W_{\overline{r}(K - \tilde{k} )}^{\overline{r}(K - \tilde{k})}(G, \bar{G}_{*}( \lambda))$ when $\extraind(\lambda)$ is ratio-independent or $D(\lambda G, \lambda^{*} G_{*}) = 
 1_{\{\lambda \in\extraset^{c}\}} 
 + 1_{\{\lambda \in\extraset\}} W_{\overline{r}(K - \tilde{k})}^{\overline{r}(K - \tilde{k})}(G, \bar{G}_{*}( \lambda) ) $ when $\extraind( \lambda)$ is not ratio-independent.

In order to prove the theorem, it is sufficient to verify the following inequality:
\begin{align}
	\lim \limits_{\epsilon \to 0} \inf \limits_{\lambda \in [0,1], G \in \Ecal_{k_{*}}( \Theta)}{\left\{\dfrac{V(p_{\lambda G}, p_{\lambda^{*} G_{*}})}{D(\lambda G, \lambda^{*} G_{*})}: \ D(\lambda G, \lambda^{*} G_{*}) \leq \epsilon\right\}} > 0. \label{eq:claim_pardistinguish_point_wise_equal_weakind_ratindep}
\end{align}
\paragraph{Proof of claim~\eqref{eq:claim_pardistinguish_point_wise_equal_weakind_ratindep}:} Assume that the above claim is not true. It implies that there exist sequences $G_{n} = \sum_{i = 1}^{k_{n}} p_{i}^{n} \delta_{\theta_{i}^{n}} \in \Ocal_{K}( \Theta)$ and $\lambda_{n} \in [0, 1]$ such that $D(\lambda_{n} G_{n}, \lambda^{*} G_{*})$ and $V(p_{\lambda_{n} G_{n}}, p_{\lambda^{*} G_{*}})/ D(\lambda_{n} G_{n}, \lambda^{*} G_{*})$ go to 0 as $n$ approaches to infinity. Since $\bar{k} = k_{0}$ and $G_{*}$ admits the form~\eqref{eq:refor_G*}, we find that
\begin{align}
p_{\lambda_{n} G_{n}}(x) - p_{\lambda^{*} G_{*}}(x) = \lambda_{n} \parenth{ \sum_{i = 1}^{k_{n}} p_{i}^{n} f(x| \theta_{i}^{n})} - \sum_{i = 1}^{k_{*}} \bar{p}_{i}^{*} f(x| \theta_{i}^{*}), \label{eq:key_express_pardistinguish_point_wise_equal_weak}
\end{align}
where $\bar{p}_{i}^{*} = \lambda^{*} p_{i}^{*} + (\lambda_{n} - \lambda^{*}) p_{i}^{0}$ when $1 \leq i \leq k_{0}$ and $\bar{p}_{i}^{*} = \lambda^{*} p_{i}^{*}$ otherwise. In addition, $\theta_{i}^{*} = \theta_{i}^{0}$ for $i \in [k_{0}]$. One could argue as in Theorem \ref{theorem:pardistinguish_point_wise_equal_exactspec} to get $(\lambda_n)$ being bounded below.
\paragraph{Case 1:} $\lambda_{n} \in \extraset$ for infinitely many $n$. Without loss of generality, we assume that $\lambda_{n} \in \extraset$ for all $n \geq 1$. Under this case, every limit of $\sum_{i = 1}^{k_{*}} \bar{p}_{i}^{*} f(x| \theta_{i}^{*})$
has a number of atoms ranging from $\tilde{k}$ to $k_*$, which is less than or equal to $K$. So that this is an over-fitted scenario where the number of over-fitted atoms is at most $K - \tilde{k}$. 
In addition, $D(\lambda_{n} G_{n}, \lambda^{*} G_{*}) = W_{\overline{r}(K - \tilde{k})}^{\overline{r}(K - \tilde{k})}(G_{n}, \bar{G}_{*} ( \lambda_{n}))$. We can further rewrite equation~\eqref{eq:key_express_pardistinguish_point_wise_equal_weak} as:
\begin{align*}
	p_{\lambda_{n} G_{n}}(x) - p_{\lambda^{*} G_{*}}(x) = \lambda_{n} ( f(x; G_{n}) - f(x; \bar{G}_{*}( \lambda_{n})).
\end{align*}
Now we can argue similarly to the proof Theorem \ref{theorem:pardistinguish_point_wise_overspec_weak_iden} or Proposition 2.2. in \cite{Ho-Nguyen-Ann-16} to get $V(p_{\lambda_{n} G_{n}}, p_{\lambda^{*} G_{*}})/ D(\lambda_{n} G_{n}, \lambda^{*} G_{*}) \not \to 0$, which combines with the fact that $\lambda_n\not \to 0$ gives us a contradiction. Hence, we reach the conclusion of claim~\eqref{eq:claim_pardistinguish_point_wise_equal_weakind_ratindep}
\paragraph{Case 2:} $\lambda_{n} \not \in \extraset$ for infinitely many $n$. Without loss of generality, we assume that $\lambda_{n} \not \in \extraset$ for all $n \geq 1$. Under this setting, $\extraind( \lambda_{n}) \neq \emptyset$. In addition, for any $i \in \extraind( \lambda_{n})$, $\bar{p}_{i}^{*} < 0$. Given these conditions, we can rewrite equation~\eqref{eq:key_express_pardistinguish_point_wise_equal_exactspec} as follows:
\begin{eqnarray}\label{eq:thm37-rewrite}
	p_{\lambda_{n} G_{n}}(x) - p_{\lambda^{*} G_{*}}(x) = \sum_{i \in \extraind ( \lambda_{n})} (- \bar{p}_{i}^{*}) f(x| \theta_{i}^{0}) + \biggr[ \lambda_{n} \parenth{ \sum_{i = 1}^{k_{n}} p_{i}^{n} f(x| \theta_{i}^{n})} - \sum_{i \in \extraind ( \lambda_{n})^{c}} \bar{p}_{i}^{*} f(x| \theta_{i}^{0}) & \nonumber\\
	& \hspace{- 8 em} - \sum_{i = k_{0} + 1}^{k_{*}} \bar{p}_{i}^{*} f(x| \theta_{i}^{*}) \biggr].
\end{eqnarray}
We have two separate settings with $\extraind( \lambda_{n})$.
\paragraph{Case 2.1:} $\extraind( \lambda_{n})$ is not ratio-independent. This is the same as Case 2.1. of Theorem \ref{theorem:pardistinguish_point_wise_equal_exactspec}. With a similar argument, we can show that $\extraind( \lambda_{n})$ must be ratio-independent, which is a contradiction. Hence, we get claim \eqref{eq:claim_pardistinguish_point_wise_equal_weakind_ratindep} under this case.
\paragraph{Case 2.2:} $\extraind( \lambda_{n})$ is ratio-independent. We can see that the second term of equation \eqref{eq:thm37-rewrite} is in an over-fitted setting with the number of extra components being at most $K - \tilde{k}$. Arguing similar to Case 2.2. of Theorem \ref{theorem:pardistinguish_point_wise_overspec_weak_iden} gives us the conclusion of claim \eqref{eq:claim_pardistinguish_point_wise_equal_weakind_ratindep}.

\end{document}